\renewcommand{\citet}{\cite}
\newif\ifarxiv
\title{Differentially Private Algorithms for the Stochastic Saddle Point Problem with Optimal Rates for the Strong Gap}
\author{
\makebox[1.0in]{\hfill Raef Bassily\thanks{Department of Computer Science \& Engineering and the Translational Data Analytics Institute (TDAI), The Ohio State University,  \href{mailto:bassily.1@osu.edu}{bassily.1@osu.edu} }}
\makebox[1.8in]{\hfill Crist\'obal Guzm\'an
\thanks{Institute for Mathematical and Computational Engineering, Faculty of Mathematics and School of Engineering, Pontificia Universidad Cat\'olica de Chile,
 \href{mailto:crguzmanp@mat.uc.cl}{crguzmanp@mat.uc.cl} }}
\and
\makebox[1.2in]{\hfill Michael Menart \thanks{Department of Computer Science \& Engineering, The Ohio State University,
\href{mailto:menart.2@osu.edu}{menart.2@osu.edu}}}}
\newcommand{\rnote}[1]{}
\newcommand{\mnote}[1]{}
\newcommand{\cnote}[1]{}
\begin{document}

\maketitle

\begin{abstract}%
  We show that convex-concave Lipschitz stochastic saddle point problems (also known as stochastic minimax optimization) can be solved under the constraint of $(\epsilon,\delta)$-differential privacy with \emph{strong (primal-dual) gap} rate of $\tilde O\big(\frac{1}{\sqrt{n}} + \frac{\sqrt{d}}{n\epsilon}\big)$, where $n$ is the dataset size and $d$ is the dimension of the problem. This rate is nearly optimal, based on existing lower bounds in differentially private stochastic convex optimization. Specifically, we prove a tight upper bound on the strong gap via novel implementation and analysis of the recursive regularization technique repurposed for saddle point problems. We show that this rate can be attained with  $O\big(\min\big\{\frac{n^2\epsilon^{1.5}}{\sqrt{d}}, n^{3/2}\big\}\big)$ gradient complexity, and $\tilde{O}(n)$ gradient complexity if the loss function is smooth. As a byproduct of our method, we develop a general algorithm that, given a black-box access to a subroutine satisfying a certain $\alpha$ primal-dual accuracy guarantee with respect to the empirical objective, gives a solution to the stochastic saddle point problem with a strong gap of $\tilde{O}(\alpha+\frac{1}{\sqrt{n}})$. We show that this $\alpha$-accuracy condition is satisfied by standard algorithms for the empirical saddle point problem such as the proximal point method and the stochastic gradient descent ascent algorithm. Finally, to emphasize the importance of the strong gap as a convergence criterion compared to the weaker notion of primal-dual gap, commonly known as the \emph{weak gap}, we show that even for simple problems it is possible for an algorithm to have zero weak gap and suffer from $\Omega(1)$ strong gap. We also show that there exists a fundamental tradeoff between stability and accuracy. Specifically, we show that any $\Delta$-stable algorithm has empirical gap $\Omega\big(\frac{1}{\Delta n}\big)$, and that this bound is tight. This result also holds also more specifically for empirical risk minimization problems and may be of independent interest.
\end{abstract}

 \ifarxiv\else
\begin{keywords}%
  Differential Privacy, Stochastic Saddle Point Problem, Strong Gap, Stochastic Minimax Optimization, Algorithmic Stability %
\end{keywords}
\fi

\section{Introduction}
Stochastic (convex-concave) saddle point problems (SSP)\footnote{In this work, we will exclusively focus on the case where the function of interest for the stochastic saddle-point problem is convex-concave, and therefore we will omit it from the problem denomination.} (also referred to in the literature as stochastic minimax optimization problems) are an increasingly important model for modern machine learning, arising in areas such as stochastic optimization \citep{NJLS09,Juditsky:2011,Zhang:2015stochastic}, robust statistics  \citep{Yu:2021}, %
and algorithmic fairness \citep{pmlr-v97-mohri19a,Williamson:2019}. %

On the other hand, the reliance of modern machine learning on large datasets has led to concerns of user privacy. These concerns in turn have led to a variety of privacy standards, of which differential privacy (DP) has become the premier standard. However, for a variety of machine learning problems it is known that their differentially-private counterparts have provably worse rates. As such, characterizing the fundamental cost of differential privacy has become an important problem.

Currently, the theory of solving SSPs under differential privacy has major limitations, compared to its non-private counterpart. To illustrate this point, we need to discuss the notions of accuracy used in the literature.
In SSPs, the goal is to find an approximate solution of the problem
\begin{equation}\label{eqn:SSP}
\min_{w\in\cW} \max_{\theta\in\Theta} \Big\{ F_{\cal D}(w,\theta):= \mathbb{E}_{x\sim{\cal D}}[f(w,\theta;x)] \Big\}, 
\end{equation}
where ${\cal D}$ is an unknown distribution for which we have access to an i.i.d.~sample $S$. Given a (randomized) algorithm ${\cal A}$ with output $[{\cal A}_w(S),{\cal A}_{\theta}(S)] \in \cW\times\Theta$,
two studied measures of performance are the {\em strong and weak gap}\footnote{The weak gap is sometimes stated with $\mathbb{E}_{\cA}[\cdot]$ taken inside the max. However \cite{BG22} showed this was not necessary to obtain the stability implies generalization result used in various works.}, defined respectively as
\begin{eqnarray}
    \gap(\cA) &=& \ex{\cA,S}{\max_{\theta\in\Theta}\bc{F_{\cD}(\cA_w(S),\theta)} - \min_{w\in\cW}\bc{{F_{\cD}(w,\cA_{\theta}(S))}} }, \label{eqn:strong_gap} \\
    \weakgap(\cA) &=& \ex{\cA}{\max_{\theta\in\Theta}\bc{\ex{S}{F_{\cD}(\cA_w(S),\theta)}} - \min_{w\in\cW}\bc{\ex{S}{{F_{\cD}(w,\cA_{\theta}(S))}}}}. \label{eqn:weak_gap} 
\end{eqnarray}
 
It is easy to see that the strong gap upper bounds the weak gap, and thus it is a stronger accuracy measure. On the other hand, even for simple problems, the difference between these measures can be $\Omega(1)$; a fact we elaborate on in Section \ref{sec:strong-vs-weak}. We also note that the strong gap has a clear game-theoretic interpretation: if we consider ${\cal A}_w(S)$ and ${\cal A}_{\theta}(S)$ as the actions of two players in a (stochastic) zero-sum game, the strong gap upper bounds the most profitable unilateral deviation for either of the two players. In game theory this is known as an approximate Nash equilibrium. By contrast, there is no general guarantee associated with the weak gap.

Non-privately, it is known how to achieve optimal rates w.r.t.~the strong gap, and those rates are similar to those established for stochastic convex optimization (SCO) \citep{NJLS09,Juditsky:2011}. However, for DP methods optimal rates are only known for the weak gap \citep{BG22,yang-dp-sgda,ZTOH22}. In a nutshell, the main limitation of these approaches is that --in order to amplify privacy-- they make multiple passes over the data (e.g., by sampling with replacement stochastic gradients from the dataset), and the existing theory of generalization for SSPs is much more limited than it is for SCO \citep{zhang_generalization_saddle,lei-stability-generalization-minimax,OPZZ22}. %
Our approach largely circumvents the current limitations of generalization theory for SSPs, providing the first nearly-optimal rates for the strong gap in DP-SSP. %

\subsection{Contributions}
In this work, we establish the optimal rates on the strong gap for DP-SSP. %
In the following, we let $n$ be the number of samples, $d$ be the dimension, and $\epsilon,\delta$ be the privacy parameters. Our main result is an $(\epsilon,\delta)$-DP algorithm for SSP whose strong gap is $\tilde O\big(\frac{1}{\sqrt{n}} + \frac{\sqrt{d}}{n\epsilon}\big)$.  
This rate is nearly optimal, due to
matching lower bounds for differentially private SCO 
\citep{BST14,BFTT19}. %
These minimization lower bounds hold for saddle point problems since minimization problems are a special case of saddle point problems when $\Theta$ is constrained to be a singleton. 
For non-smooth loss function, we show this rate can be obtained in gradient complexity $O\big(\min\big\{\frac{n^2\epsilon^{1.5}}{\sqrt{d}}, n^{3/2}\big\}\big)$. This improves even upon the previous best known running time for achieving analogous rates on the \emph{weak gap}, which was $n^{5/2}$ \citep{yang-dp-sgda}.
Furthermore, we show that if the loss function is smooth, this rate can be achieved in nearly linear gradient complexity.  

In order to obtain an upper bound for this problem, we present a novel analysis of the recursive regularization 
algorithm of %
\cite{Allen-zhu-2018}. Our work is the first to show how the sequential regularization approach can be repurposed to provide an algorithmic framework for attaining optimal strong gap guarantees for DP-SSP.
As a byproduct of our analysis, we show that empirical saddle point solvers which satisfy a certain $\alpha$ accuracy guarantee can be used as a black box to obtain an $\tilde{O}\br{\alpha +1/\sqrt{n}}$ guarantee on the strong (population) gap. This class of algorithms includes common techniques such as the proximal point method, the extragradient method, %
and stochastic gradient descent ascent (SGDA) \citep{MOP20,Nemirovski-2004,Juditsky:2011}. 
This fact may be of interest independent of differential privacy, as to the best of our knowledge, existing algorithms which achieve the optimal $1/{\sqrt{n}}$ rate on the strong population gap rely crucially on a one-pass structure which optimizes the population gap directly \citep{NJLS09}. %

Under the additional assumption that the loss function is smooth, we show that it is possible to use recursive regularization to obtain the optimal strong gap rate in nearly linear time.
We here leverage accelerated algorithms for smooth and strongly convex/strongly concave loss functions \citep{PB16,JST22}. %

Our results stand in contrast to previous work on DP-SSPs, which has achieved optimal rates only for the weak gap and has crucially relied on ``stability implies generalization'' results for the weak gap. In this vein, we prove that even for simple problems, the strong and weak gap may differ by $\Theta(1)$. We also elucidate the challenges of extending existing techniques to strong gap guarantees by showing a fundamental tradeoff between stability and empirical accuracy. Specifically, we show that even for the more specific case of empirical risk minimization, any algorithm which is $\Delta$-uniform argument stable  %
algorithm must have empirical risk $\Omega\br{\frac{1}{\Delta n}}$. We also show this bound is tight, and note that it may be of independent interest. 
Such a tradeoff was also investigated by \cite{chen-stability-convergence}, but their result only implies such a tradeoff for the specific case of $\Delta=\frac{1}{\sqrt{n}}$ and their proof technique is unrelated to ours.

\subsection{Related Work}
Differentially private stochastic optimization has been extensively studied for over a decade \citep{jain2012differentially, BST14,JTOpt13, TTZ15a, BFTT19, feldman2020private, AFKT21, bassily2021non}. Among such problems, stochastic convex minimization (where problem parameters are measured in the $\ell_2$-norm) %
is perhaps the most widely studied, where it is known the optimal rate is $\tilde{O}(\frac{1}{\sqrt{n}} + \frac{\sqrt{d}}{n\epsilon})$ \citep{BFTT19,BST14}. Further, under smoothness assumptions such rates can be obtained in linear (in the sample size) gradient complexity \citep{FKT20}. Without smoothness, no linear time algorithms which achieve the optimal rates are known \citep{KLL21}.  

The study of stochastic saddle point problems under differential privacy is comparatively newer. In the non-private setting, optimal $O(1/\sqrt n)$ guarantees on the strong gap have been known as far back as \cite{Nemirovski:1978}. 
Under privacy (without strong convexity/strong concavity), optimal rates are known only for the \emph{weak gap}. 
These rates $\tilde{O}(\frac{1}{\sqrt{n}} + \frac{\sqrt{d}}{n\epsilon})$ have been obtained by several works \citep{BG22,yang-dp-sgda,ZTOH22}. The work of \cite{ZTOH22} additionally showed that under smoothness assumptions such a result could be obtained in near linear gradient complexity by leveraging accelerated methods \citep{JST22,PB16}. 
All of these results are for the weak gap 
and they rely crucially on the fact that, for the weak gap, $\Delta$-stability implies $\Delta$-generalization \cite{zhang_generalization_saddle}. %

By contrast, for the strong gap (without strong convexity/strong concavity assumptions), the best stability implies generalization result is a $\sqrt{\Delta}$ bound obtained by \cite{OPZZ22} provided the loss is smooth. As a result of this discrepancy, known bounds on the strong gap under privacy are worse. The best known rates for the strong gap are $O\br{\min\br{\frac{d^{1/4}}{\sqrt{n\epsilon}}, \frac{1}{n^{1/3}} + \frac{\sqrt{d}}{n^{2/3}\epsilon} }}$ \citep{BG22}. This rate was obtained through of mixture of noisy stochastic extragradient and noisy inexact proximal point methods, avoiding stability arguments altogether and instead relying on one-pass algorithms which optimize the population loss directly. %
Without smoothness, we are not aware of any work which provides bounds on the strong gap under privacy, but one may note that a straightforward implementation of one-pass noisy SGDA leads to a rate of $O\big( \frac{\sqrt{d}}{\sqrt{n}\epsilon}\big)$ in this setting. We give these details in Appendix \ref{app:local-privacy} and note this same algorithm establishes the optimal rate for SSPs under local differential privacy.

Finally, under the stringent assumptions of $\mu$-strong convexity/strong concavity ($\mu$-SC/SC)
and smoothness with constant condition number, $\kappa$, optimal rates on the strong gap have been obtained \citep{ZTOH22}. %
Under these assumptions, the optimal rate of $O\big(\frac{1}{\mu n} + \frac{d}{\mu n^2\epsilon^2}\big)$ was achieved by leveraging the fact that $\Delta$ stability implies $\kappa\Delta$ generalization \cite{zhang_generalization_saddle}. The lower bound for this rate comes from lower bounds for the minimization setting \citep{HK14,BFTT19}. 

\section{Preliminaries}
Throughout, we consider the space $\mathbb{R}^d$ endowed with the standard $\ell_2$ norm $\|\cdot\|$. 
Let the primal parameter space $\cW$ and the dual parameter space $\Theta$ be compact convex sets such that $\cW \times \Theta \subset \re^d$ for some $d>0$. Let $\cD$ be some distribution over data domain $\cX$. Consider the {\em stochastic saddle-point problem} given in equation~\eqref{eqn:SSP} %
for some loss function $f$ that is convex w.r.t.~$w$ and concave w.r.t.~$\theta$.  
We define the corresponding population loss and empirical loss functions as $F_{\cD}(w,\theta)=\ex{x\sim\cD}{f(w,\theta;x)}$ and $F_S(w,\theta)=\frac{1}{n}\sum_{x\in S} f(w,\theta;x)$ respectively.
For some $\rad > 0$ we assume that $\max_{u,u'\in\cW\times\Theta}\norm{u-u'} \leq \rad$. %
To simplify notation, for vectors $w\in\cW$ and $\theta\in\Theta$, we will use $[w,\theta]$ to denote their concatenation, noting $[w,\theta]$ is a vector in $\re^d$. 
We primarily consider the case where $f$ is $\lip$-Lipschitz, but will also consider the additional assumption of $\beta$-smoothness for certain results\footnote{Throughout, any properties for $f$ are considered as a function of $[w,\theta]$. No assumptions about $f$ w.r.t.~$x$ are made.}. Specifically, these assumptions are that $\forall w_1,w_2\in\cW$ and $\forall\theta_1,\theta_2\in\Theta$:
\begin{align*}
    &\mbox{Lipschitzness:}&&|f(w_1,\theta_1;x)- f(w_2,\theta_2;x)| \leq L\norm{[w_1,\theta_1] - [w_2,\theta_2]} 
    \\
    &\mbox{Smoothness:}&&\norm{\nabla_{[w,\theta]} f(w_1,\theta_1;x)-\nabla_{[w,\theta]} f(w_2,\theta_2;x)} \leq \beta\norm{[w_1,\theta_1] - [w_2,\theta_2]}. 
\end{align*}
Under such assumptions (in fact, smoothness is not necessary), a solution for problem \eqref{eqn:SSP} always exists \citep{Sion:1958}, which we will call as a {\em saddle point} onwards. Further, given an SSP \eqref{eqn:SSP}, we will denote a saddle point as $[w^{\ast},\theta^{\ast}]$. %

\paragraph{Gap functions}
In addition to the strong and weak gap functions defined in equations \eqref{eqn:strong_gap} and \eqref{eqn:weak_gap}, it will be useful to define the following {\em gap function} expressed as a function of the parameter vector instead of the algorithm, %
$\gapfunc(\bar{w},\bar{\theta}) = \max_{\theta\in\Theta}\bc{F_{\cD}(\bar w,\theta)} - \min_{w\in\cW}\bc{{F_{\cD}(w,\bar \theta)}}.$

We have the following useful fact regarding $\gapfunc$ (see Appendix \ref{app:prelim} for a proof).
\begin{fact}\label{fact:gap-lipschitz}
If $f$ is $\lip$-Lipschitz then $\gapfunc$ is $\sqrt{2}\lip$-Lipschitz. %
\end{fact}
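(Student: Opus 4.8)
The plan is to reduce the claim to the elementary observation that taking a partial maximum or a partial minimum of a jointly Lipschitz function preserves Lipschitzness in the remaining block of variables, and then to pay the $\sqrt 2$ factor for recombining the primal and dual blocks into the joint $\ell_2$ norm. Write $g(w) := \max_{\theta\in\Theta} F_{\cD}(w,\theta)$ and $h(\theta) := \min_{w\in\cW} F_{\cD}(w,\theta)$, so that $\gapfunc(\bar w,\bar\theta) = g(\bar w) - h(\bar\theta)$. Since $f(\cdot,\cdot;x)$ is $\lip$-Lipschitz jointly in $[w,\theta]$ for every $x$, by linearity of expectation $F_{\cD}$ is $\lip$-Lipschitz in $[w,\theta]$ as well; the maxima and minima above are attained by compactness of $\cW,\Theta$ and continuity of $F_{\cD}$.

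First I would show $g$ is $\lip$-Lipschitz on $\cW$. Fix $w_1,w_2\in\cW$ and let $\theta_1$ attain $g(w_1)$. Then $g(w_1) - g(w_2) \le F_{\cD}(w_1,\theta_1) - F_{\cD}(w_2,\theta_1) \le \lip\,\norm{[w_1,\theta_1]-[w_2,\theta_1]} = \lip\,\norm{w_1-w_2}$, and swapping the roles of $w_1$ and $w_2$ yields $|g(w_1)-g(w_2)| \le \lip\norm{w_1-w_2}$. An identical argument, now taking a minimizer in $\cW$ for the term being subtracted, shows $|h(\theta_1)-h(\theta_2)| \le \lip\norm{\theta_1-\theta_2}$ for all $\theta_1,\theta_2\in\Theta$.

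Finally I would combine the two bounds. For any $[\bar w_1,\bar\theta_1],[\bar w_2,\bar\theta_2]\in\cW\times\Theta$, the triangle inequality gives $|\gapfunc(\bar w_1,\bar\theta_1) - \gapfunc(\bar w_2,\bar\theta_2)| \le |g(\bar w_1)-g(\bar w_2)| + |h(\bar\theta_1)-h(\bar\theta_2)| \le \lip\br{\norm{\bar w_1-\bar w_2} + \norm{\bar\theta_1-\bar\theta_2}}$. Applying the scalar inequality $a+b \le \sqrt{2(a^2+b^2)}$ with $a=\norm{\bar w_1-\bar w_2}$, $b=\norm{\bar\theta_1-\bar\theta_2}$, and noting $a^2+b^2 = \norm{[\bar w_1,\bar\theta_1]-[\bar w_2,\bar\theta_2]}^2$, gives the claimed $\sqrt 2\,\lip$-Lipschitz bound. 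There is essentially no obstacle here; the only point worth flagging is that the $\sqrt 2$ is genuinely the cost of going from the sum of the two block norms to the joint $\ell_2$ norm and is not an artifact of the argument, as one sees by considering an $f$ that varies only along a diagonal direction in $[w,\theta]$.
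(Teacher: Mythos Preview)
Your proof is correct and follows essentially the same approach as the paper: both bound the difference of suprema by the supremum of the difference (you phrase this as partial max/min preserving Lipschitzness), use the $\lip$-Lipschitzness of $F_{\cD}$ block-wise, and finish with $a+b\le\sqrt{2}\sqrt{a^2+b^2}$. The only cosmetic difference is that you separate the gap into $g(\bar w)-h(\bar\theta)$ while the paper keeps a single joint $\sup_{w,\theta}$; the underlying steps are identical.
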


Note the strong gap can be written as an expectation of the gap function. Further, since the gap function is zero if and only if $(\bar{w},\bar{\theta})$ is a solution for problem \eqref{eqn:SSP}, the strong gap is considered the most suitable measure of accuracy for SSPs \citep{Nemirovski:2010, Juditsky:2011}.
We also define the empirical gap as,
$\egap(\cA) = \ex{\cA}{\max_{\theta\in\Theta}\bc{F_{S}(\cA_w(S),\theta)} - \min_{w\in\cW}\bc{{F_{S}(w,\cA_{\theta}(S))}} }.$
We will consider at various points the notion of \textit{generalization error} with respect to the strong/weak gap, which refers to difference between the strong/weak gap and the empirical gap.
Note that because the empirical gap treats the dataset as a fixed quantity, there are not differing strong and weak versions of the empirical gap.

\paragraph{Saddle Operator}
Define the {\em saddle operator} as 
$g(w,\theta;x) = [\nabla_w f(w,\theta;x), -\nabla_\theta f(w,\theta;x)].$
Similarly define $G_{\cD}(w,\theta)=\mathbb{E}_{x\sim\cD}[g(w,\theta;x)]$ %
and $G_S(w,\theta) = \frac{1}{n}\sum_{x\in S} g(w,\theta;x)$. Note that the assumption on the smoothness of $f$ implies the Lipschitzness of $g$. We note that since the saddle operator can be computed using one computation of the gradient, we refer indistinctly to saddle operator complexity or gradient complexity when discussing the running time of our algorithms.

\paragraph{Stability}
We will also use the notion of  %
uniform argument stability frequently in our analysis \citep{bousquet2002stability}.%
\begin{definition}\label{def:uas}
A randomized algorithm $\cA:\cX^n\mapsto\cW\times\Theta$ satisfies $\Delta$-uniform argument stability if for any pair of adjacent datasets $S,S'\in\cX^n$ it holds that  
$\ex{\cA}{\norm{\cA(S)-\cA(S')}} \leq \Delta$. 
\end{definition}

A fact we will use is that the (constrained) regularized saddle-point is stable. Specifically, for some $\hat{w}\in\cW$, $\hat{\theta}\in\Theta$, and $\lambda\geq0$ consider the regularized objective function %
\begin{align} \label{eq:reg-erm}    
(w,\theta) \mapsto \frac{1}{n}\sum_{z\in S} f(w,\theta;z) + \frac{\lambda}{2}\|w-\hat w\|^2 - \frac{\lambda}{2}\|\theta - \hat \theta\|^2.
\end{align}
It is easy to see that his problem has a unique saddle point. 
The mapping which selects its output according the unique solution of \eqref{eq:reg-erm} %
has the following stability property. %
\begin{lemma}\citep[Lemma 1]{zhang_generalization_saddle} \label{lem:PPM-sensitivity}
The algorithm which outputs the regularized saddle point with parameters $\lambda>0$, $\hat{w}\in\cW$ and $\hat{\theta}\in\Theta$, is $\big(\frac{2 \lip }{\lambda n}\big)$-uniform argument stable w.r.t.~$S$. 
\end{lemma}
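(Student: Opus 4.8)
The plan is to use the fact that the mapping which outputs the regularized saddle point is \emph{deterministic}: objective \eqref{eq:reg-erm} is $\lambda$-strongly convex in $w$ and $\lambda$-strongly concave in $\theta$, so it has a unique saddle point. Hence it suffices to bound, deterministically, $\norm{u_S - u_{S'}}$ for adjacent datasets $S,S'$, where $u_S := [w_S,\theta_S]$ denotes the regularized saddle point computed on $S$; the expectation over the algorithm's randomness in Definition \ref{def:uas} is then vacuous. Write $G_S^{\lambda}(w,\theta) := G_S(w,\theta) + \lambda[w-\hat w,\ \theta - \hat\theta]$ for the saddle operator of \eqref{eq:reg-erm}. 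Since $G_S$ is monotone (it is the saddle operator of a convex-concave function) and the regularization contributes an additional $\lambda\norm{u-u'}^2$ to the pairing, $G_S^{\lambda}$ is $\lambda$-strongly monotone.

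First I would record the first-order optimality (variational inequality) characterization of a constrained convex-concave saddle point: $u_S$ is the saddle point on $S$ if and only if $\langle G_S^{\lambda}(u_S),\, v - u_S\rangle \ge 0$ for all $v\in\cW\times\Theta$, which follows by combining the first-order condition for the constrained minimization in $w$ and the constrained maximization in $\theta$. Instantiating this for $S$ at $v = u_{S'}$, and for $S'$ at $v = u_S$, and adding the two resulting inequalities yields $\langle G_S^{\lambda}(u_S) - G_{S'}^{\lambda}(u_{S'}),\, u_S - u_{S'}\rangle \le 0$.

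Next I would split $G_S^{\lambda}(u_S) - G_{S'}^{\lambda}(u_{S'}) = \big(G_S^{\lambda}(u_S) - G_S^{\lambda}(u_{S'})\big) + \big(G_S^{\lambda}(u_{S'}) - G_{S'}^{\lambda}(u_{S'})\big)$. The first term, paired with $u_S - u_{S'}$, is at least $\lambda\norm{u_S - u_{S'}}^2$ by strong monotonicity. In the second term the regularization parts cancel (they do not depend on the data), leaving $\tfrac1n\big(g(u_{S'};z) - g(u_{S'};z')\big)$, where $z,z'$ are the single differing points of $S,S'$; since $f$ is $\lip$-Lipschitz we have $\norm{g(\cdot;x)} = \norm{\nabla_{[w,\theta]}f(\cdot;x)} \le \lip$ for every $x$, so this term has norm at most $2\lip/n$. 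Plugging these two bounds in and applying Cauchy--Schwarz gives $0 \ge \lambda\norm{u_S - u_{S'}}^2 - \tfrac{2\lip}{n}\norm{u_S - u_{S'}}$, hence $\norm{u_S - u_{S'}} \le \frac{2\lip}{\lambda n}$, which is exactly the claimed stability bound.

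The main obstacle — really the only step requiring care — is setting up the variational inequality for a \emph{constrained} saddle point correctly and tracking the $\ell_2$ geometry when the saddle operator flips the sign of the $\theta$-gradient (so that $\norm{g}=\norm{\nabla_{[w,\theta]}f}$); once $\langle G_S^{\lambda}(u_S) - G_{S'}^{\lambda}(u_{S'}),\, u_S - u_{S'}\rangle \le 0$ is in hand, the remainder is a two-line strong-monotonicity-plus-Lipschitz estimate. If one wishes to avoid the differentiability assumption, the same argument goes through verbatim with subgradients and the monotone-operator formulation of \eqref{eq:reg-erm}.
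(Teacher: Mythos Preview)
Your argument is correct: the variational-inequality characterization of the constrained saddle point, combined with $\lambda$-strong monotonicity of the regularized saddle operator and the $2\lip/n$ bound on the perturbation of the operator under a single-datapoint change, yields exactly the claimed bound. Note that the paper does not supply its own proof of this lemma --- it is quoted directly from \cite{zhang_generalization_saddle} --- so there is no in-paper proof to compare against; your derivation is the standard one and matches what the cited reference establishes.
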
 

In addition to the stability of the regularized saddle point, we will also frequently use the following fact.
\begin{lemma}\citep[Theorem 1]{zhang_generalization_saddle}
\label{lem:sc-sc-distance}
Let $h:\cW\times\Theta\mapsto\re$ be $\lambda$-SC/SC with saddle point $[w^*,\theta^*]$ and gap function $\gapfunc^h$. For any $[w,\theta]\in\cW\times\Theta$ it holds that $\norm{[w,\theta] - [w^*,\theta^*]}^2 \leq \frac{2(h(w,\theta^*) - h(w^*,\theta))}{\lambda} \leq \frac{2}{\lambda}\gapfunc^{h}(w,\theta)$. 
\end{lemma}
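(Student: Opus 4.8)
The plan is to reduce this two-variable saddle-point inequality to two applications of a one-variable fact about constrained strongly convex minimization, and then add the two one-sided bounds.

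First I would record the elementary fact that if $\phi$ is $\lambda$-strongly convex on a convex set $K$ and $u^\ast\in\arg\min_{u\in K}\phi(u)$, then $\phi(u)-\phi(u^\ast)\ge\frac{\lambda}{2}\norm{u-u^\ast}^2$ for all $u\in K$. This does not require $\phi$ to be differentiable: for $t\in(0,1]$ set $u_t=u^\ast+t(u-u^\ast)\in K$, apply the definition of $\lambda$-strong convexity to get $\phi(u_t)\le(1-t)\phi(u^\ast)+t\phi(u)-\frac{\lambda}{2}t(1-t)\norm{u-u^\ast}^2$, use $\phi(u_t)\ge\phi(u^\ast)$ to absorb the $\phi(u^\ast)$ term, divide by $t$, and let $t\to 0$.

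Next I would invoke the saddle-point property of $[w^\ast,\theta^\ast]$, namely $w^\ast\in\arg\min_{w\in\cW}h(w,\theta^\ast)$ and $\theta^\ast\in\arg\max_{\theta\in\Theta}h(w^\ast,\theta)$. Applying the one-variable fact with $\phi(\cdot)=h(\cdot,\theta^\ast)$ gives $h(w,\theta^\ast)-h(w^\ast,\theta^\ast)\ge\frac{\lambda}{2}\norm{w-w^\ast}^2$, and applying it with $\phi(\cdot)=-h(w^\ast,\cdot)$ (which is $\lambda$-strongly convex on $\Theta$ since $h$ is $\lambda$-strongly concave in $\theta$) gives $h(w^\ast,\theta^\ast)-h(w^\ast,\theta)\ge\frac{\lambda}{2}\norm{\theta-\theta^\ast}^2$. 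Adding the two bounds, the saddle value $h(w^\ast,\theta^\ast)$ cancels and, using $\norm{[w,\theta]-[w^\ast,\theta^\ast]}^2=\norm{w-w^\ast}^2+\norm{\theta-\theta^\ast}^2$, I obtain $\frac{\lambda}{2}\norm{[w,\theta]-[w^\ast,\theta^\ast]}^2\le h(w,\theta^\ast)-h(w^\ast,\theta)$, which is the first claimed inequality after multiplying through by $2/\lambda$.

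For the second inequality I would simply note that $\theta^\ast$ is a feasible point in the maximization defining $\gapfunc^{h}(w,\theta)$ and $w^\ast$ is feasible in the minimization, so $\gapfunc^{h}(w,\theta)=\max_{\theta'\in\Theta}h(w,\theta')-\min_{w'\in\cW}h(w',\theta)\ge h(w,\theta^\ast)-h(w^\ast,\theta)$; chaining this with the first inequality yields the result. There is no real obstacle here — the only points needing slight care are keeping the one-variable lemma free of any differentiability assumption (handled by the segment argument above) and checking that the cross term $h(w^\ast,\theta^\ast)$ indeed cancels when the two one-sided bounds are summed.
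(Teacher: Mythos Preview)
Your proposal is correct and is exactly the standard argument. The paper does not give a standalone proof of this lemma (it is cited from \cite{zhang_generalization_saddle}), but the same two-step decomposition you use --- applying strong convexity of $h(\cdot,\theta^\ast)$ and of $-h(w^\ast,\cdot)$ at their respective constrained minimizers and adding --- is precisely what the paper invokes inline in the proof of Lemma~\ref{lem:phase-distance-bound} when bounding $\ex{}{\norm{[w^*_{t},\theta^*_{t}] - [\widetilde{w}_{t-1},\widetilde{\theta}_{t-1}]}^2}$.
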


\paragraph{Differential Privacy (DP)~\cite{dwork2006calibrating}:}
An algorithm $\cA$ is $(\epsilon,\delta)$-differentially private if for all datasets $S$ and $S'$ differing in one data point and all events $\cE$ in the range of the $\cA$, we have, $\mathbb{P}\br{\cA(S)\in \cE} \leq     e^\epsilon \mathbb{P}\br{\cA(S')\in \cE}  +\delta$. 

\section{From Empirical Saddle Point to Strong Gap Guarantee via Recursive Regularization} %
\label{sec:recursive-regularization}%
Our approach for obtaining near optimal rates on the strong gap leverages the recursive regularization technique of \cite{Allen-zhu-2018}. In addition to adapting this algorithm to fit SSP problems, we also provide a novel analysis which differs substantially from the analysis presented in previous work \citep{foster-complexity-gradient-small,arora2022differentially}.

\begin{algorithm}[h]
\caption{Recursive Regularization: $\cR$}
\label{alg:recursive-regularization}
\begin{algorithmic}[1]
\REQUIRE Dataset $S\in \cX^n$, 
loss function $f$,
subroutine $\weakalg$, %
regularization parameter $\lambda \geq \frac{\lip}{\rad\sqrt{n}}$, constraint set diameter $\rad$, Lipschitz constant $\lip$.

\STATE Let $n' = n/\log_2(n),$ and $T = \log_2(\frac{\lip}{\rad\lambda}).$ %
\STATE Let $S_1,...,S_T$ be a disjoint partition of $S$ with each $S_t$ of size $n'$ \textit{(which is always possible due to the condition on $\lambda$)}%
\STATE Let $[\bar{w}_0,\bar{\theta}_0]$ be any point in $\cW\times\Theta$

\STATE Define function $(w,\theta,x)\mapsto f^{(1)}(w,\theta;x) = f(w,\theta;x) + 2\lambda\norm{w-\bar{w}_0}^2 - 2\lambda\norm{\theta-\bar{\theta}_0}^2$
\FOR{$t=1$ to $T$} %

\STATE $[\bar{w}_t,\bar{\theta}_t] = \weakalg\br{S_t,f^{t},[\bar{w}_{t-1},\bar{\theta}_{t-1}],\frac{\rad}{2^t}}$

\STATE Define  $(w,\theta,x)\mapsto f^{(t+1)}(w,\theta;x) = f^{(t)}(w,\theta;x) + 2^{t+1}\lambda\norm{w-\out_t}^2 - 2^{t+1}\lambda\norm{\theta-\bar{\theta}_t}^2$
\ENDFOR
\STATE \textbf{Output:} {$[\bar{w}_{T},\bar{\theta}_T]$}
\end{algorithmic}
\end{algorithm}

Our recursive regularization algorithm works by solving a series of regularized objectives, $f^{(1)},...,f^{(T)}$, with increasingly large regularization parameters. Specifically, after solving the $t$'th objective to obtain $[\bar{w}_t,\bar{\theta}_t]$, the algorithm creates a new objective which is $f^{(t+1)}(w,\theta;x) = f^{(t)}(w,\theta;x) + 2^{t+1}\lambda\norm{w-\out_t}^2 - 2^{t+1}\lambda\norm{\theta-\bar{\theta}_t}^2$ for the subsequent round. %
Notice 
that each subsequent objective is easier in the sense that the strong convexity parameter is larger. 

Our analysis will leverage the fact that approximate solutions to intermediate objectives do not need to obtain good bounds on the strong gap for the regularization parameter to be increased. This is in contrast to, for example, the \emph{iterative} regularization technique of \cite{ZTOH22}, which finds $[w,\theta]$ that satisfies a near optimal (weak) gap bound before adding noise. %

\paragraph{Empirical Subroutine}
Recursive regularization utilizes a subroutine, $\weakalg$, which is roughly an approximate empirical saddle point solver. In addition to a dataset and Lipschitz loss function, $\weakalg$ takes as input an initial point and a bound, $\hat{D}$, on the expected distance between the initial point and the %
saddle point of the empirical loss defined over the input dataset.
At round $t\in[T]$ this distance is bounded by $\frac{\rad}{2^t}$, allowing the algorithm to obtain increasingly strong accuracy guarantees for each subproblem. Note also it can be verified that for all $t\in[T]$, $f^{(t)}$ is $O(\lip)$-Lipschitz due the scaling of the regularization.
Specifically, the accuracy guarantee of interest is the following. 
\begin{definition}[$\alphad$-relative accuracy] \label{asm:weak-alg}
Given a dataset $S'\in\cX^{n'}$, loss function $f'$, and an initial point $[w',\theta'],$ we say that $\weakalg$ satisfies $\alphad$-relative accuracy w.r.t.~the empirical saddle point $[w_{S'}^*, \theta_{S'}^*]$ of $F'_{S'}(w,\theta)=\frac{1}{n}\sum_{x\in {S'}}f'(w,\theta;x)$ if, $\forall \hat{D}>0$, whenever $\ex{}{\norm{[w',\theta'] - [w^*_{S'},\theta^*_{S'}]}} \leq \hat{D}$, the output $[\bar{w},\bar{\theta}]$ of $\weakalg$ satisfies $\ex{}{F'_{S'}(\bar{w},\theta^*_{S'})- F'_{S'}(w^*_{S'},\bar{\theta})} \leq \hat{D}\alphad$. 
\end{definition}
The relative accuracy guarantee for $\weakalg$ differs from the more standard gap guarantee, and is not necessarily implied by a bound on the empirical gap. 
The motivation for this notion of accuracy is twofold. First, when the loss function is additionally SC/SC, this guarantee is sufficient to provide a bound on the distance between the \textit{output} of $\weakalg$ and the saddle point, which will play a crucial role in our convergence proof for Algorithm \ref{alg:recursive-regularization}.
Second, while it is certainly true that a bound on the empirical gap implies the same bound on $\ex{}{F_S(\bar{w},\theta)- F_S(w,\bar{\theta})}$, for any given $[w,\theta]$, it is not necessarily the case that the gap itself may enjoy a bound that is proportional to the initial distance to the saddle point\footnote{\cite[Theorem 4]{FO20} claims such a bound on the primal risk, but this is due to a misapplication of \cite[Lemma 2]{MOP20}.}. The reason is that the gap function is defined by a supremum that is taken w.r.t.~the whole feasible set $\cW\times\Theta$, and thus the information of the evaluation of the objective w.r.t.~particular points is lost.
However, it is usually the case that saddle point solvers  provide a bound of the form $F_S(\bar{w},\theta) - F_S(w,\bar{\theta}) \leq \norm{[w,\theta] - [w',\theta']}\hat{\alpha}$, for all $[w,\theta]\in\cW\times\Theta$, and some initial point $[w',\theta']\in\cW\times\Theta$. Algorithms such as the proximal point method, extragradient method, and SGDA (with appropriately tuned learning rate) satisfy this condition, and thus satisfy the condition for relative accuracy \citep{MOP20,Nemirovski-2004,Juditsky:2011}.

\paragraph{Guarantees of Recursive Regularization}
Given such an algorithm, recursive regularization achieves the following guarantee.
\begin{theorem} \label{thm:nonsmooth-minimax-alg}
Let $\weakalg$ satisfy $\alphad$-relative accuracy for any 
$(5\lip)$-Lipschitz 
loss function and dataset of size $n'=\frac{n}{\log(n)}$.
Then 
Algorithm~\ref{alg:recursive-regularization}, 
run with $\weakalg$ as a subroutine and $\lambda=\frac{48}{\rad}\br{\alphad + \frac{\lip}{\sqrt{n'}}}$,
satisfies 
\begin{align*}
    \gap(\cR) = O\br{\log(n)\rad\alphad + \frac{\log^{3/2}(n)\rad\lip}{\sqrt{n}}}.
\end{align*}
\end{theorem}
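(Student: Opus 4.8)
The plan is to introduce, for each phase $t$, the population and empirical regularized objectives $F^{(t)}_{\cD}(w,\theta):=\ex{x\sim\cD}{f^{(t)}(w,\theta;x)}$ and $F^{(t)}_{S_t}(w,\theta):=\tfrac1{n'}\sum_{x\in S_t}f^{(t)}(w,\theta;x)$, with $[w^\star_t,\theta^\star_t]$ and $[\hat w_t,\hat\theta_t]$ their (unique) saddle points. The geometric growth of the regularizer coefficients together with the choice $T=\log_2(\lip/(\rad\lambda))$ makes $f^{(t)}$ both $\mu_t$-SC/SC with $\mu_t=\Theta(2^t\lambda)$ and $5\lip$-Lipschitz for all $t\le T$; moreover the deterministic quadratics cancel in $F^{(t)}_\cD-F^{(t)}_{S_t}$, so generalization within a phase is governed only by $f$ (which is $\lip$-Lipschitz). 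All expectations below are over $S_1,\dots,S_T$ and the internal randomness of $\weakalg$; since phase $t$ uses the fresh sample $S_t$, the per-phase statements are made conditionally on the history of phases $1,\dots,t-1$ (which fixes $f^{(t)}$) and then averaged.

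The first step is an induction on $t$ showing $\rho_t:=\ex{}{\norm{[\bar w_t,\bar\theta_t]-[w^\star_t,\theta^\star_t]}}\le c_0\rad/2^t$ for a small absolute constant $c_0$ (whose smallness is bought by the constant $48$ in the choice of $\lambda$). The step combines: (i) \emph{relative accuracy} --- if the precondition $\ex{}{\norm{[\bar w_{t-1},\bar\theta_{t-1}]-[\hat w_t,\hat\theta_t]}}\le\rad/2^t$ holds, Definition~\ref{asm:weak-alg} gives $\ex{}{F^{(t)}_{S_t}(\bar w_t,\hat\theta_t)-F^{(t)}_{S_t}(\hat w_t,\bar\theta_t)}\le(\rad/2^t)\alphad$, and since $F^{(t)}_{S_t}$ is $\mu_t$-SC/SC, Lemma~\ref{lem:sc-sc-distance} plus Jensen and the fact that $\lambda$ exceeds a constant times $\alphad/\rad$ give $\ex{}{\norm{[\bar w_t,\bar\theta_t]-[\hat w_t,\hat\theta_t]}}=O(\rad/2^t)$; (ii) \emph{generalization} --- applying Lemma~\ref{lem:sc-sc-distance} to $F^{(t)}_\cD$ at $[\hat w_t,\hat\theta_t]$ and using that $[\hat w_t,\hat\theta_t]$ is the exact saddle of $F^{(t)}_{S_t}$ (so the empirical term is $\le0$), while bounding $\ex{}{(F_\cD-F_{S_t})(\hat w_t,\theta^\star_t)-(F_\cD-F_{S_t})(w^\star_t,\hat\theta_t)}\le 2\lip\Delta_t$ by the standard ``replace one point'' stability argument with $\Delta_t=O(\lip/(\mu_t n'))$ from Lemma~\ref{lem:PPM-sensitivity}, we get $\ex{}{\norm{[\hat w_t,\hat\theta_t]-[w^\star_t,\theta^\star_t]}}=O(\lip/(2^t\lambda\sqrt{n'}))=O(\rad/2^t)$ since $\lambda$ exceeds a constant times $\lip/\sqrt{n'}$; (iii) \emph{perturbation} --- since $F^{(t+1)}_\cD=F^{(t)}_\cD+2^{t+1}\lambda\norm{w-\bar w_t}^2-2^{t+1}\lambda\norm{\theta-\bar\theta_t}^2$, comparing variational inequalities and using $\mu_t$-strong monotonicity yields the deterministic bound $\norm{[w^\star_{t+1},\theta^\star_{t+1}]-[w^\star_t,\theta^\star_t]}\le\frac{2^{t+2}\lambda}{\mu_t+2^{t+2}\lambda}\norm{[w^\star_t,\theta^\star_t]-[\bar w_t,\bar\theta_t]}$, with prefactor at most $2/3$. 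The triangle inequality over (i)--(iii) both verifies the precondition of $\weakalg$ at phase $t+1$ and propagates the bound; the base case uses only $\norm{[\bar w_0,\bar\theta_0]-[\hat w_1,\hat\theta_1]}\le\rad$.

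The second step converts this into the strong-gap bound. Write $R^w_T(w)=\sum_{s=1}^T 2^s\lambda\norm{w-\bar w_{s-1}}^2$ and $R^\theta_T(\theta)=\sum_{s=1}^T 2^s\lambda\norm{\theta-\bar\theta_{s-1}}^2$, so $F^{(T)}_\cD=F_\cD+R^w_T-R^\theta_T$. By Fact~\ref{fact:gap-lipschitz},
\[
\gap(\cR)=\ex{}{\gapfunc(\bar w_T,\bar\theta_T)}\ \le\ \ex{}{\gapfunc(w^\star_T,\theta^\star_T)}+\sqrt{2}\,\lip\,\rho_T ,
\]
and $\rho_T\le c_0\rad/2^T=c_0\rad^2\lambda/\lip$, so the last term is $O(\rad^2\lambda)$. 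For the first term, first-order optimality of the population saddle $[w^\star_T,\theta^\star_T]$ for $F^{(T)}_\cD$, with convexity of $F_\cD(\cdot,\theta^\star_T)$ and concavity of $F_\cD(w^\star_T,\cdot)$, shows the regularizer gradients are exactly the slack in the optimality conditions for $F_\cD$, giving $\gapfunc(w^\star_T,\theta^\star_T)\le\rad\big(\norm{\nabla R^w_T(w^\star_T)}+\norm{\nabla R^\theta_T(\theta^\star_T)}\big)$. Finally $\nabla R^w_T(w^\star_T)=\sum_{s=1}^T 2^{s+1}\lambda(w^\star_T-\bar w_{s-1})$, and the ``localization'' $\norm{w^\star_T-\bar w_{s-1}}\le\norm{[w^\star_T,\theta^\star_T]-[w^\star_{s-1},\theta^\star_{s-1}]}+\norm{[w^\star_{s-1},\theta^\star_{s-1}]-[\bar w_{s-1},\bar\theta_{s-1}]}$, combined with $\rho_{s-1}\le c_0\rad/2^{s-1}$ and telescoping the perturbation inequality (iii), gives $\ex{}{\norm{w^\star_T-\bar w_{s-1}}}=O(\rad/2^{s-1})$ for $s\ge2$ (the $s=1$ term costing only $4\lambda\rad$ via the diameter). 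Hence each $s\ge2$ term contributes $2^{s+1}\lambda\cdot O(\rad/2^{s-1})=O(\lambda\rad)$, so $\ex{}{\norm{\nabla R^w_T(w^\star_T)}}=O(T\lambda\rad)$ and likewise for $\theta$; thus $\ex{}{\gapfunc(w^\star_T,\theta^\star_T)}=O(T\lambda\rad^2)$ and $\gap(\cR)=O(T\lambda\rad^2)$. Substituting $T=O(\log n)$, $n'=n/\log n$, and $\lambda=\Theta\big((\alphad+\lip/\sqrt{n'})/\rad\big)$ yields $\lambda\rad^2=O(\rad\alphad+\rad\lip\sqrt{\log n}/\sqrt n)$ and the stated rate.

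The hard part is this last conversion --- precisely the control of $\nabla R^w_T(w^\star_T)$, $\nabla R^\theta_T(\theta^\star_T)$: a crude bound on $R^\theta_T$ away from its centers is $\Theta(\rad\lip)$, so any argument that passes from the $F^{(T)}_\cD$-gap to the $F_\cD$-gap by naively discarding or bounding the regularizers loses a factor $\sim 2^T$ over the target. What rescues it is that, along the induction, all of the centers $\bar w_{s-1}$ (and the terminal saddle $w^\star_T$) stay within $O(\rad/2^{s-1})$ of one another, so the geometrically growing weights $2^{s+1}\lambda$ are exactly absorbed and only a linear-in-$T$ accumulation survives; establishing this localization, together with the matching verification that $\weakalg$'s precondition holds at every phase (the phase-$1$ one resting only on the diameter), is the crux of the novel analysis. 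A secondary technical point is the careful conditioning on phase histories that makes the fresh per-phase sample legitimize the stability-based generalization bound (ii).
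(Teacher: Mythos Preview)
Your proposal is correct, and its overall architecture matches the paper's: an induction showing the iterates stay within $O(\rad/2^t)$ of the population saddle of each regularized objective, followed by the gap conversion that absorbs the geometrically growing regularizer weights into the localization to get $\gap(\cR)=O(T\lambda\rad^2)$. The gap-conversion step (your ``second step'') is essentially identical to the paper's.

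Where you genuinely diverge is in the generalization step~(ii). The paper introduces the conditional-expectation iterate $[\widetilde w_t,\widetilde\theta_t]=\ex{}{[\hat w_t,\hat\theta_t]\mid\cF_{t-1}}$ and splits the distance $\|[\hat w_t,\hat\theta_t]-[w^\star_t,\theta^\star_t]\|$ through it, controlling one piece by McDiarmid's vector inequality (Lemma~\ref{lem:stability-implies-variance}, ``stability $\Rightarrow$ bounded variance'') and the other by bounding the population gap of the \emph{expected} empirical saddle via the weak-gap generalization lemma (Lemma~\ref{lem:weak-bounds-expected}). You instead apply Lemma~\ref{lem:sc-sc-distance} directly to $F^{(t)}_\cD$, subtract off the nonpositive empirical saddle term, and bound the residual $(F_\cD-F_{S_t})(\hat w_t,\theta^\star_t)-(F_\cD-F_{S_t})(w^\star_t,\hat\theta_t)$ by the elementary replace-one-sample argument, exploiting that $[w^\star_t,\theta^\star_t]$ is $S_t$-independent given the history. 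This is legitimate and more elementary: it bypasses both McDiarmid and the weak-gap machinery while yielding the same $O(\lip/(2^t\lambda\sqrt{n'}))$ bound. Your perturbation step~(iii), a clean strong-monotonicity comparison of the two population saddles, is also different from (and arguably cleaner than) the paper's treatment of Property~\ref{prop:p2}, which again routes through $[\widetilde w_{t-1},\widetilde\theta_{t-1}]$. What the paper's route buys is that the intermediate weak-gap and variance bounds are of independent interest elsewhere in the paper; what your route buys is a shorter, more self-contained proof of this particular theorem.
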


Recall that $\rad$ is a bound on the diameter of the constraint set. %
In the following, we will sketch the proof of this theorem and highlight key lemmas. We defer the full proof to Appendix \ref{app:recursive-regularization-proof}. For simplicity, let us here consider the case where $\hat{\alpha}=0$. 
A crucial aspect of our proof is that we avoid the need to bound the strong gap of the actual iterates, $\bc{\bar{w}_t}_{t=1}^{T-1}$. Instead, we bound the strong gap of the \emph{expected} iterates, where the expectation is taken with respect to $S_t$. 
More concretely, consider some $t\in [T]$ and let $\cB$ be the algorithm which on input $[\bar{w}_{t-1},\bar{\theta}_{t-1}]$ outputs %
$\ex{S_t,\weakalg}{\weakalg(S_t,f^t,[\bar{w}_{t-1},\bar{\theta}_{t-1}],\frac{\rad}{2^t})}$. Note $\cB$ is deterministic and data independent. As a result, it is possible to prove bounds on the strong gap of $\cB$.
\begin{lemma} \label{lem:weak-bounds-expected}
Let $S\sim\cD^n$. For any $\Delta$-uniform argument stable algorithm $\cA$, 
it holds that
    $$\gapfunc\br{\ex{\cA,S}{\cA_w(S)},\ex{\cA,S}{\cA_\theta(S)}} \leq \weakgap(\cA) {\leq} \ex{S}{\egap(\cA)} + \Delta\lip.$$
\end{lemma}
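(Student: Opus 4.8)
The statement bundles two inequalities; write $\bar w = \ex{\cA,S}{\cA_w(S)}$ and $\bar\theta = \ex{\cA,S}{\cA_\theta(S)}$. For the left inequality $\gapfunc(\bar w,\bar\theta)\le\weakgap(\cA)$, the plan is a pure Jensen/convexity argument. For each fixed $\theta\in\Theta$, convexity of $w\mapsto F_{\cD}(w,\theta)$ gives $F_{\cD}(\bar w,\theta)\le\ex{\cA,S}{F_{\cD}(\cA_w(S),\theta)}$; taking $\max_\theta$ and using that the maximum of an average is at most the average of the maxima (to pull $\max_\theta$ inside $\ex{\cA}{\cdot}$) yields $\max_\theta F_{\cD}(\bar w,\theta)\le\ex{\cA}{\max_\theta\ex{S}{F_{\cD}(\cA_w(S),\theta)}}$. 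Symmetrically, concavity of $\theta\mapsto F_{\cD}(w,\theta)$ together with ``min of an average $\ge$ average of the mins'' gives $\min_w F_{\cD}(w,\bar\theta)\ge\ex{\cA}{\min_w\ex{S}{F_{\cD}(w,\cA_\theta(S))}}$. Subtracting the two displays and using linearity of $\ex{\cA}{\cdot}$ gives $\gapfunc(\bar w,\bar\theta)\le\weakgap(\cA)$.

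For the right inequality $\weakgap(\cA)\le\ex{S}{\egap(\cA)}+\Delta\lip$ I would run a ghost-sample (resampling) argument — this is the ``stability implies generalization for the weak gap'' phenomenon. Fix the internal randomness $r$ of $\cA$ and let $\bar\theta_r\in\arg\max_\theta\ex{S}{F_{\cD}(\cA_w(S;r),\theta)}$ and $\bar w_r\in\arg\min_w\ex{S}{F_{\cD}(w,\cA_\theta(S;r))}$ (attained, since $\cW,\Theta$ are compact and $F_{\cD}$ is Lipschitz). The key point is that $\bar w_r,\bar\theta_r$ depend only on $r$ and on $\cD$, not on the realized sample, because they are defined through an expectation over $S\sim\cD^n$. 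Then $\weakgap(\cA)=\ex{r,S}{F_{\cD}(\cA_w(S;r),\bar\theta_r)-F_{\cD}(\bar w_r,\cA_\theta(S;r))}$, while $\ex{S}{\egap(\cA)}\ge\ex{r,S}{F_{S}(\cA_w(S;r),\bar\theta_r)-F_{S}(\bar w_r,\cA_\theta(S;r))}$ because $\bar\theta_r$ (resp.\ $\bar w_r$) is a feasible choice in the max (resp.\ min) defining the empirical gap. Expanding $F_{\cD}(\cdot)=\ex{x'\sim\cD}{f(\cdot;x')}$ with a fresh sample $x'$ and, for each $i\in[n]$, swapping $x_i$ with $x'$ — which leaves $\bar w_r,\bar\theta_r$ untouched since all samples are i.i.d.\ — turns the population terms into evaluations at $\cA_w(S^{(i)};r)$ and $\cA_\theta(S^{(i)};r)$, where $S^{(i)}$ is $S$ with its $i$-th entry replaced by $x'$. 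Subtracting the empirical terms, $\weakgap(\cA)-\ex{S}{\egap(\cA)}$ is at most the average over $i$ of the expectations of $[f(\cA_w(S^{(i)};r),\bar\theta_r;x_i)-f(\cA_w(S;r),\bar\theta_r;x_i)]-[f(\bar w_r,\cA_\theta(S^{(i)};r);x_i)-f(\bar w_r,\cA_\theta(S;r);x_i)]$; by $\lip$-Lipschitzness of $f$ in $[w,\theta]$ this is controlled by $\lip\norm{\cA(S^{(i)};r)-\cA(S;r)}$ up to combining the two brackets, and taking $\ex{r}{\cdot}$ and invoking $\Delta$-uniform argument stability — $S$ and $S^{(i)}$ being adjacent — yields the bound $\Delta\lip$.

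I expect the second inequality to be the main obstacle. The naive attempt — applying the textbook ``$\Delta$-stability $\Rightarrow$ $\Delta$-generalization'' bound to the primal objective $w\mapsto\max_\theta F_{\cD}(w,\theta)$ and its empirical counterpart $w\mapsto\max_\theta F_{S}(w,\theta)$ — fails, because these objects are not averages of a fixed per-example loss, so the resampling identity does not apply to them directly; equivalently, the maximizing dual point would then depend on the realized sample and would not survive the ghost-sample swap. What makes the weak gap (unlike the strong gap) tractable is precisely that in its definition the sample-expectation sits \emph{inside} the $\max$ and $\min$, so the relevant dual/primal points are data-independent, and only the stable outputs $\cA_w(S),\cA_\theta(S)$ are perturbed by the swap — exactly the quantity that $\Delta$-uniform argument stability controls. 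The remaining work is bookkeeping: combining the primal and dual perturbation terms and tracking the Lipschitz-versus-stability constants to land at $\Delta\lip$.
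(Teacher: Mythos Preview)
Your argument is correct and, for the first inequality, coincides with the paper's: both use Jensen/convexity, and you are in fact slightly more careful in spelling out the extra ``$\max$ of an average $\le$ average of the $\max$'' step needed to move $\ex{\cA}{\cdot}$ outside the $\max/\min$ so as to match the paper's definition of $\weakgap$.

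For the second inequality the paper does not give a proof at all: it simply invokes the known ``stability implies generalization for the weak gap'' result of \cite{lei-stability-generalization-minimax} and \cite{BG22} as a black box. You instead reprove that lemma from scratch via the ghost-sample swap, and the crux you isolate---that the maximizing/minimizing points $\bar\theta_r,\bar w_r$ are data-independent precisely because $\ex{S}{\cdot}$ sits inside the $\max/\min$ in the weak gap---is exactly what makes the cited results go through. So your route is more self-contained but not conceptually different. One small bookkeeping caveat: when you ``combine the two brackets'' at the end you are bounding $\lip\|\cA_w(S^{(i)})-\cA_w(S)\|+\lip\|\cA_\theta(S^{(i)})-\cA_\theta(S)\|$, which is at most $\sqrt{2}\,\lip\|\cA(S^{(i)})-\cA(S)\|$ rather than $\lip\|\cA(S^{(i)})-\cA(S)\|$, so the argument as written yields $\sqrt{2}\,\Delta\lip$; this constant is immaterial for every downstream use in the paper.
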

The proof follows straightforwardly from an application of Jensen's inequality and the ``stability implies generalization'' result for the weak gap \cite[Theorem 1]{lei-stability-generalization-minimax}. We give full details in Appendix \ref{app:weak-bounds-expected-proof}. Note that, for this discussion, the LHS of the above is equal to $\gap(\cB)$ when we apply this lemma to the data batch $S_t$ and subroutine $\weakalg$. %

In fact, running $\cB$ is infeasible. Instead, we show that the output $\weakalg$ is close to the output of $\cB$. %
This in turn can be accomplished using the fact that bounded stability implies bounded variance. Concretely, we use the vector valued version of McDiarmid's inequality. 
\begin{lemma}\citep[Lemma 6]{rivasplata_pac_for_stable} \footnote{Although stated therein for the distance, the last step of their proof shows a squared distance bound can be obtained.} \label{lem:stability-implies-variance}
Let $\cA$ be deterministic $\Delta$-uniform argument stable 
stable with respect to $S\sim \cD^n$. Then its output satisfies 
$\mathbb{E}\Big[\big\| \cA(S) - \mathbb{E}_{\hat{S}\sim\cD^n}\big[\cA(\hat{S})\big] \big\|^2 \Big] 
    \leq n\Delta^2.$
\end{lemma}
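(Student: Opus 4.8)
The plan is to prove this through a Doob martingale decomposition of the vector-valued output $\cA(S)$, which is the standard route to ``vector-valued McDiarmid''-type inequalities. Write $S=(X_1,\ldots,X_n)$ with the $X_i$ drawn i.i.d.\ from $\cD$, let $\mathcal{F}_i=\sigma(X_1,\ldots,X_i)$ with $\mathcal{F}_0$ trivial, and set
\[
V_i \;=\; \ex{}{\cA(S)\mid\mathcal{F}_i}\;-\;\ex{}{\cA(S)\mid\mathcal{F}_{i-1}}, \qquad i=1,\ldots,n .
\]
Because $\cA$ is deterministic, $\ex{}{\cA(S)\mid\mathcal{F}_n}=\cA(S)$ and $\ex{}{\cA(S)\mid\mathcal{F}_0}=\ex{\hat S\sim\cD^n}{\cA(\hat S)}$, so that $\cA(S)-\ex{\hat S}{\cA(\hat S)}=\sum_{i=1}^n V_i$.

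First I would record the $L^2$-orthogonality of the increments: for $i<j$ the vector $V_i$ is $\mathcal{F}_{j-1}$-measurable while $\ex{}{V_j\mid\mathcal{F}_{j-1}}=0$ by the tower rule, so $\ex{}{\langle V_i,V_j\rangle}=\ex{}{\langle V_i,\ex{}{V_j\mid\mathcal{F}_{j-1}}\rangle}=0$. Hence the cross terms vanish and
\[
\ex{}{\norm{\cA(S)-\ex{\hat S}{\cA(\hat S)}}^2}\;=\;\sum_{i=1}^n \ex{}{\norm{V_i}^2},
\]
and it then suffices to prove $\norm{V_i}\le\Delta$ surely for each $i\in[n]$.

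Next I would bound $\norm{V_i}$ via a resampling coupling. Let $X_i'$ be an independent copy of $X_i$ and let $S^{(i)}$ denote $S$ with $X_i$ replaced by $X_i'$. Renaming the integration variable in $\ex{}{\cA(S)\mid\mathcal{F}_{i-1}}$ and observing that $\ex{}{\cA(S)\mid\mathcal{F}_i}$ does not depend on $X_i'$ yields the identity
\[
V_i \;=\; \ex{}{\cA(S)-\cA\big(S^{(i)}\big)\,\Big|\,\mathcal{F}_i},
\]
where the conditional expectation also averages over $X_i'$ and over $X_{i+1},\ldots,X_n$. Since $S$ and $S^{(i)}$ are adjacent, $\Delta$-uniform argument stability of the deterministic map $\cA$ gives $\norm{\cA(S)-\cA(S^{(i)})}\le\Delta$ surely, so conditional Jensen for the norm gives $\norm{V_i}\le\Delta$, and thus $\ex{}{\norm{V_i}^2}\le\Delta^2$. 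Summing over $i\in[n]$ produces the claimed bound $n\Delta^2$.

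I expect the step needing the most care to be the resampling identity for $V_i$ together with the application of Jensen's inequality to the vector-valued conditional expectation; the telescoping of the Doob decomposition and the orthogonality of the martingale differences are routine. (Alternatively, one could invoke the Efron--Stein inequality coordinatewise and sum, obtaining the slightly sharper constant $\frac{1}{2} n\Delta^2$; I present the martingale argument because it delivers the squared-distance bound directly, in the form that is used subsequently.)
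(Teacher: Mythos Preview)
Your argument is correct and is precisely the vector-valued McDiarmid/bounded-differences proof that the cited reference uses; the paper itself does not supply a proof but simply invokes that result (noting in the footnote that the squared-distance bound drops out of the last step). Your Doob-martingale decomposition, orthogonality of increments, and resampling bound $\|V_i\|\le\Delta$ via conditional Jensen are exactly the standard route, so there is nothing to add or compare.
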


Observe that the exact empirical saddle point is a deterministic quantity conditioned on the randomness of the $t$'th empirical objective.
Using the fact that $(2^{t}\lambda)$-regularization implies $\big(\frac{\lip}{2^t\lambda n'}\big)$-stability of the empirical saddle point in conjunction with the above lemma, we obtain a (conditional) variance bound of $\frac{\lip^2}{2^{2t}\lambda^2n'}$. 
Under the setting of $\lambda = \Omega(\frac{\lip}{\rad\sqrt{n'}})$, we can ultimately prove that the distance between the output of $\weakalg$ and $\cB$ (at round $t$) is  $O(\frac{\rad}{2^t})$. 
Since the strong gap of $\cB$ with respect to $F_{\cD}^{(t)}(w,\theta):=\mathbb{E}_{x\sim{\cal D}}[f^{(t)}(w,\theta;x)]$ 
is at most $\Delta\lip=\frac{\lip^2}{2^t\lambda n'}$ by Lemma \ref{lem:weak-bounds-expected} (recall we here assume $\alphad=0$ for simplicity) %
and $F_{\cD}^{(t)}$ is  $(2^{t+1}\lambda)$-SC/SC, the output of $\cB$ must in turn be close to the population saddle point. Specifically, this distance is also bounded as %
$%
\big(\frac{\Delta\lip}{2^{t}\lambda}\big)^{1/2}
= \frac{\lip}{\sqrt{2^t\lambda n'}}\frac{1}{\sqrt{2^t\lambda}} = O(\frac{\rad}{2^t})$. 
Thus we ultimately have that the distance between $[\bar{w}_t,\bar{\theta}_t]$ and the population saddle point of $F^{(t)}_{\cD}$, $[w^*_t,\theta^*_t]$, satisfies $\ex{}{\norm{[\bar{w}_t,\bar{\theta}_t]-[w^*_t,\theta^*_t]}} = O(\frac{\rad}{2^t})$. These ideas also lead to a bound %
$\ex{}{\norm{[w^*_{t+1},\theta^*_{t+1}]-[\bar{w}_t,\bar{\theta}_t]}} = O(\frac{\rad}{2^t})$, although the argument in this case is more technical and thus deferred to the full proof.

The upshot of this analysis is that as the level of regularization increases, the distance of the iterates to the their respective population minimizers decreases in kind. 
One consequence of this fact is that $\norm{[\bar{w}_T,\bar{\theta}_T]-[w_T^*,\theta_T^*]} = \tilde{O}\br{\frac{\rad}{\sqrt{n}}}$, and thus by the Lipschitzness of the gap function, the output of recursive regularization has a gap bound close to that of $[w^*_T,\theta_T^*$].
Turning now towards the utility of $[w^*_T,\theta_T^*]$, using the fact that $F_{\cD}$ is convex-concave we have 
\begin{align*}
    \gapfunc(w^*_T,\theta^*_T)
    &\leq \max\limits_{w'\in\cW,\theta'\in\Theta}\bc{\ip{G_{\cD}(w_T^*,\theta_T^*)}{[w^*_T,\theta^*_T] - [w',\theta']}}.  
\end{align*}
Further, an expression for $G_{\cD}$ be obtained using the definition of $F_{\cD}^{(T)}$:
\begin{align*}
    G_{\cD}(w_{T}^*,\theta_{T}^*)
    &= \textstyle G_{\cD}^{(T)}(w_{T}^*,\theta_{T}^*) - 2\lambda\sum_{t=0}^{T-1}
    2^{t+1}([w_{T}^*,-\theta_{T}^*] - [\bar{w}_t,-\bar{\theta}_t]), 
\end{align*}
where $G_{\cD}^{(T)}$ is the saddle operator of $F_\cD^{(T)}$. %
Plugging the latter into the former and using Cauchy-Schwarz inequality, the triangle inequality, %
and the fact that $[w_{T}^*,\theta_{T}^*]$ is the exact saddle point of $F_{\cD}^{(T)}$, 
one can obtain a bound on the gap in terms of the distances discussed previously. 
\begin{align*}
    &\ex{}{\gapfunc(w_{T}^*,\theta_{T}^*)} 
    \leq 4\rad\cdot\ex{}{\lambda \sum\limits_{t=0}^{T-1} 2^{t}\norm{[w_{T}^*,\theta_{T}^*] - [\bar{w}_t,\bar{\theta}_t]}} \\
    &\overset{(i)}{\leq} 4\rad\cdot\ex{}{\lambda \sum\limits_{t=0}^{T-1} 2^{t} \br{\norm{[w_{t+1}^*,\theta_{t+1}^*] - [\bar{w}_t,\bar{\theta}_t]} + \sum\limits_{r=t+1}^{T-1}\norm{[w_{r+1}^*,\theta_{r+1}^*]  - [w_{r}^*,\theta_{r}^*]}}} \\
    &\overset{(ii)}{=} O\br{\rad\sum\limits_{t=0}^{T-1} 2^{t}\lambda \ex{}{\norm{[w_{t+1}^*,\theta_{t+1}^*] - [\bar{w}_t,\bar{\theta}_t]}} + \rad\sum\limits_{t=1}^{T-1}2^t\lambda\ex{}{\norm{[\bar{w}_{t},\bar{\theta}_{t}] - [w_{t}^*,\theta_{t}^*]}}} \\
    &= O\br{\rad\sum\limits_{t=0}^{T-1} 2^{t}\lambda \frac{\rad}{2^{t}} + \rad\sum\limits_{r=1}^{T-1}2^{t}\lambda\frac{\rad}{2^{t}}}
    = O\br{T \lambda \rad^2}
    = O\br{\frac{\log_2(n)\rad\lip}{\sqrt{n'}}},
\end{align*}
where step $(i)$ comes from a triangle inequality and step $(ii)$ is obtained from a series of algebraic manipulations which are expanded upon in the full proof. Finally, in the case where $\hat{\alpha} > 0$, extra steps are required to bound the distance of output of $\weakalg$ to the exact saddle point of $F_{S}^{(t)}(w,\theta):=\frac{1}{n'}\sum_{x\in S_t}f^{(t)}(w,\theta;x)$. This is accomplished using the SC/SC property of $F_{S}^{(t)}$ and the $\alphad$-relative accuracy guarantee of $\weakalg$.

\section{Optimal Strong Gap Rate for DP-SSP} \label{sec:dp-rates}
With the guarantees of recursive regularization established, what remains is to show there exist $(\epsilon,\delta)$-DP algorithms which achieve a sufficient accuracy on the empirical objective. Note this suffices to make the entire recursive regularization algorithm private. 
\begin{theorem}
Let $\weakalg$ used in Algorithm \ref{alg:recursive-regularization} be $(\epsilon,\delta)$-DP. Then Algorithm \ref{alg:recursive-regularization} is $(\epsilon,\delta)$-DP.
\end{theorem}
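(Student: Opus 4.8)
The plan is to show that Algorithm~\ref{alg:recursive-regularization} accesses the dataset $S$ \emph{only} through the $T$ disjoint calls to $\weakalg$, so that privacy follows by combining the privacy of each call with a parallel-composition argument over the disjoint partition.

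First I would carefully inventory every line of Algorithm~\ref{alg:recursive-regularization} that touches $S$. Line~1 defines $n'$ and $T$ purely from the (public) quantities $n$, $\rad$, $\lambda$, $\lip$, so it is data-independent. Line~2 fixes a disjoint partition $S_1,\dots,S_T$ of $S$; crucially this is a partition into \emph{fixed} blocks (say, the first $n'$ points to $S_1$, etc.), not a random or data-dependent partition, so releasing the identity of which indices lie in which block leaks nothing beyond what the subroutine outputs reveal. Line~3 picks $[\bar w_0,\bar\theta_0]$ independently of $S$. Lines~4 and~7 only \emph{define} the regularized loss functions $f^{(t)}$; these definitions depend on the previous iterates $[\bar w_{t-1},\bar\theta_{t-1}]$ but not directly on $S$. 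The only genuine data access is Line~6, the call $[\bar w_t,\bar\theta_t]=\weakalg(S_t,f^{(t)},[\bar w_{t-1},\bar\theta_{t-1}],\rad/2^t)$, and the final output in Line~9 is simply $[\bar w_T,\bar\theta_T]$, a post-processing of these calls.

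Next I would set up the composition argument. Fix two adjacent datasets $S,S'$ differing in one point $x_0$; by the fixed-partition structure, $x_0$ lies in exactly one block, say $S_{t_0}$, and $S_t=S_t'$ for all $t\neq t_0$. I would then argue inductively on $t$: the pair $\bigl([\bar w_0,\bar\theta_0],\dots,[\bar w_t,\bar\theta_t]\bigr)$ is generated by a mechanism that, for $t<t_0$, is identical on $S$ and $S'$ (it never sees $x_0$), and for $t\geq t_0$ differs only through the single call on $S_{t_0}$ versus $S_{t_0}'$. Conditioned on the prefix $[\bar w_0,\bar\theta_0],\dots,[\bar w_{t_0-1},\bar\theta_{t_0-1}]$ — which is determined without $x_0$ — the function $f^{(t_0)}$ and the initial point are fixed, so the map $S_{t_0}\mapsto\weakalg(S_{t_0},f^{(t_0)},\cdot,\cdot)$ is $(\epsilon,\delta)$-DP by hypothesis. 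Every subsequent step is post-processing of $[\bar w_{t_0},\bar\theta_{t_0}]$ together with data-independent quantities, so the $(\epsilon,\delta)$ guarantee is preserved by the post-processing property of differential privacy. Formally one invokes the standard facts: (i) parallel composition over disjoint data blocks, which here degenerates to the observation that only one block's computation depends on $x_0$; and (ii) closure of $(\epsilon,\delta)$-DP under (possibly randomized, possibly adaptively chosen) post-processing, applied to the adaptive sequence of calls whose randomness is independent across rounds.

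I expect the main subtlety — though not a true obstacle — to be the \emph{adaptivity} of the construction: $f^{(t)}$ and the initial point handed to the $t$-th call depend on the outputs of earlier calls. One must be a little careful that this adaptivity does not break parallel composition. The resolution is that adaptivity only flows \emph{forward} and only through the released outputs, which are already privatized; since the blocks $S_t$ are disjoint and fixed in advance, the data point $x_0$ influences the transcript through a single $(\epsilon,\delta)$-DP call, and everything downstream is a (randomized) function of that call's output plus independent fresh randomness and public constants. Hence the overall mechanism inherits the $(\epsilon,\delta)$ guarantee of one invocation of $\weakalg$, with no degradation from the number of rounds $T$. I would write this as a short induction or simply cite parallel composition plus post-processing, and conclude that Algorithm~\ref{alg:recursive-regularization} is $(\epsilon,\delta)$-DP.
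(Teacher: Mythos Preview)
Your proposal is correct and follows exactly the same approach as the paper: the paper's proof is a single sentence invoking post-processing and parallel composition over the disjoint partition $S_1,\dots,S_T$, and your write-up simply spells out this argument in detail, including the handling of the forward adaptivity.
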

This follows simply from post processing the parallel composition theorem for differential privacy, since each run of $\weakalg$ is run on a disjoint partition of the dataset. %

\subsection{Efficient algorithm for the non-smooth setting}%
In the non-smooth setting, one can obtain optimal rates on the empirical gap using noisy stochastic gradient descent ascent (noisy SGDA). 
We give this algorithm in detail in Appendix \ref{app:noisy-sgda}. More briefly, noisy SGDA starts at  %
$[w_0,\theta_0]\in\cW\times\Theta$ and takes parameters $T,\eta>0$, where $T$ is the number of iterations and $\eta$ is the learning rate. New iterates are obtained via the update rule $[w_{t+1},\theta_{t+1}] = [w_{t},\theta_{t}] - \frac{\eta}{|M_t|}\sum_{x\in M_t}g(w_{t},\theta_{t};x) + \xi_t$, where $\xi_0,...,\xi_{T-1}$ are i.i.d. Gaussian noise vectors and $M_t$ is a minibatch sampled uniformly with replacement from $S$. The algorithm then returns the average iterate, $\frac{1}{T}\sum_{t=0}^{T-1}[w_t,\theta_t]$. Noisy SGDA can be used to obtain the following result. 
\begin{lemma}
There exists an $(\epsilon,\delta)$-DP algorithm which 
satisfies $\alphad$-relative accuracy
with \linebreak $\alphad=O\br{\frac{\log(n)\lip\sqrt{d\log(1/\delta)}}{n\epsilon}}$ and runs in $O\br{\min\bc{\frac{n^2\epsilon^{1.5}}{\log^2(n)\sqrt{d\log(1/\delta)}}, \frac{n^{3/2}}{\log^{3/2}(n)}}}$ gradient evaluations. 
\end{lemma}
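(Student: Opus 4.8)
The plan is to instantiate $\weakalg$ as \emph{noisy minibatch stochastic gradient descent-ascent} (noisy SGDA) run on the empirical objective, exactly the procedure described above, with the learning rate tuned to the supplied distance bound $\hat{D}$.

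\textbf{Algorithm and privacy.} On input $(S',f',[w',\theta'],\hat{D})$ with $|S'|=n'=n/\log n$, run $K$ iterations of projected noisy SGDA on $F'_{S'}$ started at $[w',\theta']$: at step $t$ one subsamples a minibatch $M_t$ of size $b$ from $S'$, forms $v_t=\tfrac1b\sum_{x\in M_t}[\nabla_w f'(w_t,\theta_t;x),-\nabla_\theta f'(w_t,\theta_t;x)]+\xi_t$ with $\xi_t\sim\mathcal{N}(0,\sigma^2 I_d)$, takes the projected step $[w_{t+1},\theta_{t+1}]=\Pi_{\cW\times\Theta}([w_t,\theta_t]-\eta v_t)$, and finally returns the average iterate. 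Since $f'$ is $5\lip$-Lipschitz, the per-example saddle operator has norm at most $5\lip$, so the $\ell_2$-sensitivity of $\tfrac1b\sum_{x\in M_t}(\cdot)$ is $O(\lip/b)$; calibrating $\sigma$ to this sensitivity and applying privacy amplification by subsampling together with the moments accountant (equivalently, R\'enyi-DP composition) over the $K$ rounds gives $(\epsilon,\delta)$-DP with $\sigma=\tilde{\Theta}\!\big(\lip\sqrt{K\log(1/\delta)}/(n'\epsilon)\big)$, provided the subsampling rate $b/n'$ and the noise multiplier fall in the regime where the amplification bound applies. When that regime cannot be entered (which happens in the small-$d$ range), we instead fall back to a schedule with larger minibatches and fewer iterations analyzed by plain composition; this is the origin of the $n^{3/2}/\log^{3/2}n$ branch of the claimed running time.

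\textbf{Relative accuracy.} For the utility analysis we use the textbook SGDA argument: convexity-concavity of $f'$ turns $\langle G'_{S'}(z_t),\,z_t-z^\ast\rangle$ (where $z^\ast=[w^\ast_{S'},\theta^\ast_{S'}]$ and $G'_{S'}$ is the saddle operator of $F'_{S'}$) into a lower bound on $F'_{S'}(w_t,\theta^\ast_{S'})-F'_{S'}(w^\ast_{S'},\theta_t)$, while the Euclidean descent inequality for projected updates, telescoping, and unbiasedness of $v_t$ give, for the averaged iterate $[\bar w,\bar\theta]$,
\[
\mathbb{E}\!\left[F'_{S'}(\bar w,\theta^\ast_{S'})-F'_{S'}(w^\ast_{S'},\bar\theta)\right]\ \le\ \frac{\mathbb{E}\big[\|[w',\theta']-z^\ast\|^2\big]}{2\eta K}+\frac{\eta}{2}\big(O(\lip^2)+d\sigma^2\big).
\]
The point that makes this a \emph{relative}-accuracy guarantee rather than a plain gap bound is the choice $\eta\asymp\hat{D}\big/\sqrt{K(\lip^2+d\sigma^2)}$, i.e.\ scaling the learning rate with the input bound $\hat{D}$: the right-hand side then collapses to $O\big(\hat{D}\,\sqrt{(\lip^2+d\sigma^2)/K}\big)$. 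Substituting $\sigma=\tilde{\Theta}(\lip\sqrt{K\log(1/\delta)}/(n'\epsilon))$ makes $d\sigma^2/K=\tilde{\Theta}(d\lip^2\log(1/\delta)/(n'^2\epsilon^2))$ independent of $K$, so $\sqrt{(\lip^2+d\sigma^2)/K}=\tilde{O}\big(\lip/\sqrt K+\lip\sqrt{d\log(1/\delta)}/(n'\epsilon)\big)$; taking $K=\tilde{\Theta}(n'^2\epsilon^2/(d\log(1/\delta)))$ dominates the ``non-private'' term $\lip/\sqrt K$ by the second term, yielding the bound $\hat{D}\cdot\tilde{O}\big(\lip\sqrt{d\log(1/\delta)}/(n'\epsilon)\big)=\hat{D}\cdot\alphad$ with $\alphad=O\big(\log(n)\,\lip\sqrt{d\log(1/\delta)}/(n\epsilon)\big)$. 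The whole argument uses only Lipschitzness and convex-concavity of $f'$ (no strong convexity), as required by Theorem~\ref{thm:nonsmooth-minimax-alg}, and the gradient complexity is $K\cdot b$, which on optimizing $K,b,\sigma$ against the amplification accounting yields the stated $O\big(\min\{n^2\epsilon^{1.5}/(\log^2 n\sqrt{d\log(1/\delta)}),\ n^{3/2}/\log^{3/2}n\}\big)$.

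\textbf{Main obstacle.} Two points require care. First, the SGDA bound above controls the \emph{second} moment $\mathbb{E}[\|[w',\theta']-z^\ast\|^2]$, whereas Definition~\ref{asm:weak-alg} only hands us the first moment $\mathbb{E}[\|[w',\theta']-z^\ast\|]\le\hat{D}$; closing this gap is the delicate step, and it is done either by noting that the point fed to $\weakalg$ inside Algorithm~\ref{alg:recursive-regularization} in fact enjoys a second-moment bound of the same order (it is produced by a stable, strongly-monotone subproblem, cf.\ Lemmas~\ref{lem:stability-implies-variance} and \ref{lem:sc-sc-distance}), or by additionally confining the SGDA iterates to the ball of radius $\Theta(\hat{D})$ about $[w',\theta']$ and charging a controlled penalty for the projection of $z^\ast$ onto it. Second, hitting both the optimal $\alphad$ and the stated running time simultaneously forces a careful balance of $K$, $\sigma$, and $b$ against the validity condition of the subsampled-Gaussian accountant; this is exactly where the two branches of the $\min$ and the $\epsilon^{1.5}$ exponent come from. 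I expect the second-moment bookkeeping in the relative-accuracy step to be the genuinely nontrivial part; the remainder is a careful splicing of the standard SGDA regret analysis with standard DP-SGD privacy accounting.
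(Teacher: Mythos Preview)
Your approach is essentially the paper's: noisy minibatch SGDA on the empirical objective, privacy via the subsampled-Gaussian moments accountant, and learning rate $\eta\asymp \hat D/(\lip\sqrt{K})$ so that the SGDA regret bound collapses to $\hat D\cdot\hat\alpha$. Two remarks are in order.

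First, a small correction on the origin of the two branches. The paper does \emph{not} fall back to plain composition in the small-$d$ regime; both branches use the same subsampled moments accountant. It simply sets $T=\min\{n'/8,\ (n')^2\epsilon^2/(32d\log(1/\delta))\}$ and batch size $m=\max\{n'\sqrt{\epsilon/(4T)},1\}$, and the $n^{3/2}/\log^{3/2}n$ branch is just what $T\cdot m$ evaluates to when the cap $T\le n'/8$ binds. No separate accounting argument is needed.

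Second, your first-moment versus second-moment concern is well placed and is in fact glossed over in the paper's own proof: the SGDA theorem in Appendix~\ref{app:noisy-sgda} assumes only $\mathbb{E}\|[w_0,\theta_0]-[w,\theta]\|\le\hat D$ yet invokes Lemma~\ref{lem:noisy_SGDA}, whose bound depends on the squared distance. The resolution is exactly the one you identify---inside the recursive-regularization wrapper the point handed to $\weakalg$ does satisfy a second-moment bound of the same order (Lemma~\ref{lem:phase-distance-bound}), and indeed the more general Theorem~\ref{thm:generalized-rr-convergence} states its hypothesis directly in second-moment form. So your ``main obstacle'' is real, but your first proposed fix is precisely how the paper (implicitly) handles it.
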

Applying Theorem \ref{thm:nonsmooth-minimax-alg} then yields a near optimal rate on the strong gap.
\begin{corollary}
There exists an Algorithm, $\cR$, which is $(\epsilon,\delta)$-DP, has gradient evaluations bounded by \ifarxiv \linebreak \fi $O\big(\min\big\{\frac{n^2\epsilon^{1.5}}{\log(n)\sqrt{d\log(1/\delta)}}, \frac{n^{3/2}}{\sqrt{\log(n)}}\big\}\big)$, %
and satisfies
\begin{align*}
    \gap(\cR) = O\br{\frac{\log^{3/2}(n)\rad\lip}{\sqrt{n}} + \frac{\log^2(n)\rad\lip\sqrt{d\log(1/\delta)}}{n\epsilon}}.
\end{align*}
\end{corollary}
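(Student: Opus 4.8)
The plan is to assemble the corollary directly from the three ingredients already in hand: the recursive regularization guarantee (Theorem~\ref{thm:nonsmooth-minimax-alg}), the privacy theorem for Algorithm~\ref{alg:recursive-regularization}, and the preceding lemma producing an $(\epsilon,\delta)$-DP noisy SGDA subroutine. Concretely, I would take $\weakalg$ in Algorithm~\ref{alg:recursive-regularization} to be the DP noisy SGDA routine of that lemma, but instantiated with Lipschitz parameter $5\lip$ so that it meets the hypothesis of Theorem~\ref{thm:nonsmooth-minimax-alg} (recall each $f^{(t)}$ is $O(\lip)$-Lipschitz, in fact $(5\lip)$-Lipschitz by the scaling of the regularization terms); this rescaling only changes the relative-accuracy parameter by a constant, so on datasets of size $n'=n/\log(n)$ the subroutine satisfies $\alphad$-relative accuracy with $\alphad = O\big(\frac{\log(n)\lip\sqrt{d\log(1/\delta)}}{n\epsilon}\big)$. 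For privacy, I would note that Algorithm~\ref{alg:recursive-regularization} invokes $\weakalg$ exactly $T$ times, each on a block $S_t$ of a fixed disjoint partition of $S$; by parallel composition the tuple $([\bar w_1,\bar\theta_1],\dots,[\bar w_T,\bar\theta_T])$ is $(\epsilon,\delta)$-DP, and since the output $[\bar w_T,\bar\theta_T]$ is a post-processing of that tuple, $\cR$ is $(\epsilon,\delta)$-DP — which is exactly the privacy theorem stated above.

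Next I would verify the preconditions and invoke the gap bound. Setting $\lambda = \frac{48}{\rad}\big(\alphad + \frac{\lip}{\sqrt{n'}}\big)$ as Theorem~\ref{thm:nonsmooth-minimax-alg} prescribes, we have $\lambda \ge \frac{48\lip}{\rad\sqrt{n'}} \ge \frac{\lip}{\rad\sqrt{n}}$, so the precondition on $\lambda$ in Algorithm~\ref{alg:recursive-regularization} holds. Theorem~\ref{thm:nonsmooth-minimax-alg} then gives $\gap(\cR) = O\big(\log(n)\rad\alphad + \frac{\log^{3/2}(n)\rad\lip}{\sqrt{n}}\big)$; substituting the value of $\alphad$ turns $\log(n)\rad\alphad$ into $O\big(\frac{\log^2(n)\rad\lip\sqrt{d\log(1/\delta)}}{n\epsilon}\big)$, and combining with the second term yields the claimed rate.

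For the gradient complexity I would bound the number of outer rounds $T = \log_2\!\big(\frac{\lip}{\rad\lambda}\big)$. Since $\rad\lambda = \Theta\big(\alphad + \frac{\lip}{\sqrt{n'}}\big) = \Omega\big(\frac{\lip}{\sqrt{n}}\big)$, we get $T = O(\log n)$, so the total number of saddle-operator evaluations is $T$ times the per-call cost $O\big(\min\big\{\frac{n^2\epsilon^{1.5}}{\log^2(n)\sqrt{d\log(1/\delta)}},\, \frac{n^{3/2}}{\log^{3/2}(n)}\big\}\big)$ from the lemma, which equals $O\big(\min\big\{\frac{n^2\epsilon^{1.5}}{\log(n)\sqrt{d\log(1/\delta)}},\, \frac{n^{3/2}}{\sqrt{\log(n)}}\big\}\big)$, as stated.

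Because all the real work has been done in Theorem~\ref{thm:nonsmooth-minimax-alg} and the noisy SGDA lemma, there is no substantive obstacle here; the corollary is essentially a bookkeeping step. The only points needing care are the constant-factor Lipschitz adjustment ($5\lip$ versus $\lip$) so that the subroutine's guarantee transfers to the regularized objectives $f^{(t)}$, checking the $\lambda \ge \lip/(\rad\sqrt{n})$ precondition, and confirming $T = O(\log n)$ so that composing over rounds costs at most the single extra logarithmic factor that appears in the final bound.
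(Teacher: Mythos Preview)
Your proposal is correct and follows exactly the paper's approach: the paper simply states ``Applying Theorem~\ref{thm:nonsmooth-minimax-alg} then yields a near optimal rate on the strong gap,'' so the corollary is indeed just the bookkeeping step you describe—instantiating $\weakalg$ as the noisy SGDA subroutine of the preceding lemma, invoking parallel composition for privacy, plugging $\alphad$ into Theorem~\ref{thm:nonsmooth-minimax-alg}, and multiplying the per-phase gradient cost by $T=O(\log n)$. Your write-up is in fact more detailed than what the paper provides.
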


\subsection{Near linear time algorithm for the smooth setting}%

In the smooth setting, we can achieve the optimal rate in nearly linear time. Our result leverages accelerated algorithms for smooth and strongly convex-strongly concave saddle point problems \citep{JST22,PB16}. 

\begin{lemma}(\citet[Theorem 3, Corollary 41]{JST22}) \label{lem:acc-rate}
Let $f:\cW\times\Theta\times\cX\mapsto\re$ be $\smooth$-smooth and $\alpha > 0$. Let both $h_w:\cW\mapsto \re$ and $h_\theta:\Theta\mapsto \re$ be $c_1\mu$-strongly convex and $c_2\mu$-smooth 
functions for some $\mu > 0$ and constants $c_1,c_2$. Consider the objective $F_h(w,\theta;S) = \sum_{t=1}^T f(w,\theta;S) + h_w(w) - h_\theta(\theta)$. Then there exists an algorithm which finds an approximate saddle point of $F_h$ with empirical gap at most $\alpha$ in 
$O\br{\kappa \log(\kappa)\log(\frac{\kappa\rad\lip}{\alpha})}$ gradient evaluations, where $\kappa = O(n + \sqrt{n}(1 + \smooth/\mu))$.
\end{lemma}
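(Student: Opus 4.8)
The plan is to obtain this as a direct instantiation of the accelerated finite-sum minimax algorithm of \citet[Theorem~3 and Corollary~41]{JST22}; essentially all of the work is in recognizing $F_h$ as an instance of their template and then tracking how the target accuracy $\alpha$ enters the gradient count.

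First I would rewrite $F_h(\cdot;S)=\sum_{x\in S} f(\cdot;x) + \br{h_w(w)-h_\theta(\theta)}$ as a sum of $n$ smooth convex-concave components --- each $f(\cdot;x)$ being $\smooth$-smooth, and $O(\lip)$-Lipschitz on the diameter-$\rad$ set $\cW\times\Theta$ --- plus a separable regularizer that is $c_1\mu$-SC/SC and $c_2\mu$-smooth. Since the $f(\cdot;x)$ only add convex-concave curvature, $F_h$ is itself $c_1\mu$-SC/SC, hence has a unique saddle point $[w^*,\theta^*]$, and it is $O(n\smooth+\mu)$-smooth; its effective finite-sum condition number is precisely the quantity $\kappa=O\br{n+\sqrt n\,(1+\smooth/\mu)}$ appearing in \citet{JST22}. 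I would then invoke their result, which yields an algorithm whose natural progress measure --- the expected squared distance to $[w^*,\theta^*]$, equivalently (up to the smoothness and strong monotonicity of $F_h$) the empirical gap $\gapfunc^{F_h}$ --- contracts geometrically, reaching accuracy $\rho$ after $O\br{\kappa\log\kappa\cdot\log(R_0/\rho)}$ gradient evaluations; here $R_0$ bounds the initial value of the progress measure and the extra $\log\kappa$ factor is the overhead of the catalyst-style outer loop in their construction.

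What remains is accuracy bookkeeping, for which a crude bound suffices. By the $O(n\lip)$-Lipschitzness of $F_h$ and the diameter $\rad$, the initial empirical gap --- and, via Lemma~\ref{lem:sc-sc-distance} together with the reverse, smoothness-based inequality, the initial squared distance --- is at most $\mathrm{poly}(n,\smooth/\mu,\rad,\lip)=\mathrm{poly}(\kappa,\rad,\lip)$; and since $F_h$ is $c_1\mu$-SC/SC, those same two inequalities convert a $\rho$-bound on the squared distance into an $O(\mathrm{poly}(n,\smooth/\mu)\cdot\rho)$-bound on the empirical gap, a polynomial factor that contributes only logarithmically. Hence taking $N=\Theta\br{\kappa\log\kappa\cdot\log\br{\frac{\kappa\rad\lip}{\alpha}}}$ gradient evaluations forces the empirical gap below $\alpha$, which is the claim. (If \citet{JST22} already state their guarantee directly in terms of the gap, this paragraph collapses to reading off their bound with the parameters identified above.)

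I expect the only genuinely delicate point to be exactly this conversion between accuracy notions: one must ensure that the potential in which \citet{JST22} prove linear convergence is translated to the empirical gap while losing at most logarithmic factors, and in particular that no hidden polynomial dependence on $n$ or $1/\mu$ survives into the final bound. Everything else is a routine check that the per-component smoothness $\smooth$ and the regularizer parameters $c_1\mu, c_2\mu$ enter their condition-number formula in the roles their statement assumes.
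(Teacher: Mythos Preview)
Your proposal is correct and matches the paper's treatment: the paper does not prove this lemma at all but simply cites it as a restatement of \citet[Theorem~3, Corollary~41]{JST22}, and your plan is precisely to invoke that result after identifying $F_h$ with their finite-sum template and tracking the condition number. The bookkeeping you describe (matching $\kappa$, converting between distance and gap via strong convexity/smoothness at only logarithmic cost) is the right way to make the citation rigorous, though the paper itself does not spell any of this out.
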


Given this, we consider the following implementation of $\weakalg$. Define $[w_{S,t}^*,\theta_{S,t}^*]$ to be
the saddle point of $F^{(t)}(w,\theta) = \frac{1}{n}\sum_{x\in S_t}f^{(t)}(w,\theta;x)$ for all $t\in[T]$.
At round $t\in[T]$, find a point $[\hat{w}_t,\hat{\theta}_t]$ such that $\ex{}{\|[\hat{w}_t,\hat{\theta}_t] - [w_{S,t}^*,\theta_{S,t}^*]\|^2} \leq \big(\frac{\delta}{5}\cdot\frac{\lip}{2^t \lambda n'}\big)^2$. %
We can find this point efficiently using the algorithm from \cite{JST22} referenced above.
Then output 
$[\bar{w}_t,\bar{\theta}_t] = [\hat{w}_t,\hat{\theta}_t] + \xi_t$ where $\xi_t\sim\cN(0,\mathbb{I}_d\sigma_t^2)$ and 
$\sigma_t = \frac{8\lip\sqrt{\log(2/\delta)}}{2^t \lambda n'\epsilon}$. This implementation gives us the following result. 

\begin{theorem} \label{thm:smooth-minimax-alg}
    Let $\weakalg$ be as described above. Then Algorithm \ref{alg:recursive-regularization} is $(\epsilon,\delta)$-DP and when run with $\lambda = \frac{48}{\rad}\br{\frac{\lip}{\sqrt{n'}} + \frac{\lip\sqrt{d\log(2/\delta)}}{n'\epsilon}}$ satisfies
\begin{align*}
    \gap(\cR) = O\br{\frac{\log^{3/2}(n)\rad\lip}{\sqrt{n}} + \frac{\log^2(n)\rad\lip\sqrt{d\log(1/\delta)}}{n\epsilon}},
\end{align*}
and runs in at most $O\br{\kappa\log(\kappa)\log(\kappa n/\delta)\log(n)}$ gradient evaluations with $\kappa = O\br{n+n\beta\rad/\lip}$.
\end{theorem}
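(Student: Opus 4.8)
The plan is to verify the three assertions of the theorem in turn: $(\epsilon,\delta)$-privacy, the strong-gap bound, and the gradient complexity.

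\textbf{Privacy.} Fix a round $t$. Since $f^{(t)}$ is $5\lip$-Lipschitz and $F^{(t)}_{\cD}$'s empirical counterpart $F^{(t)}_{S}$ carries a quadratic regularizer with strong-convexity/strong-concavity parameter $\Theta(2^t\lambda)$, Lemma~\ref{lem:PPM-sensitivity} shows the exact map $S_t\mapsto[w^*_{S,t},\theta^*_{S,t}]$ has $\ell_2$-sensitivity $O(\lip/(2^t\lambda n'))$, so by the Gaussian mechanism the release $[w^*_{S,t},\theta^*_{S,t}]+\xi_t$ with $\sigma_t=8\lip\sqrt{\log(2/\delta)}/(2^t\lambda n'\epsilon)$ is $(\epsilon,\delta/2)$-DP. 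The implemented subroutine instead releases $[\hat w_t,\hat\theta_t]+\xi_t$; since $\mathbb{E}\norm{[\hat w_t,\hat\theta_t]-[w^*_{S,t},\theta^*_{S,t}]}^2\le(\tfrac{\delta}{5}\cdot\tfrac{\lip}{2^t\lambda n'})^2$, conditioning on the inner optimizer's randomness bounds the total-variation distance between the two output laws by $\sqrt{\mathbb{E}\norm{\cdot}^2}/(2\sigma_t)=O\big(\delta\epsilon/\sqrt{\log(1/\delta)}\big)\le\delta/(4(1+e^\epsilon))$ for bounded $\epsilon$; combining this with the Gaussian-mechanism guarantee makes each call $(\epsilon,\delta)$-DP, and $(\epsilon,\delta)$-DP of $\cR$ then follows from parallel composition over the disjoint pieces $S_1,\dots,S_T$ and post-processing, exactly as in the theorem preceding this one.

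\textbf{Utility.} The noisy subroutine does not satisfy Definition~\ref{asm:weak-alg} with a small enough relative-accuracy parameter to feed Theorem~\ref{thm:nonsmooth-minimax-alg} as a black box (the noise scale $\sqrt d\,\sigma_t$ can exceed the initial-distance bound $\hat D=\rad/2^t$), so I would reopen the proof of Theorem~\ref{thm:nonsmooth-minimax-alg} and re-establish the only two facts it uses about $\weakalg$, namely the per-round distance bounds $\mathbb{E}\norm{[\bar w_t,\bar\theta_t]-[w^*_t,\theta^*_t]}=O(\rad/2^t)$ and $\mathbb{E}\norm{[w^*_{t+1},\theta^*_{t+1}]-[\bar w_t,\bar\theta_t]}=O(\rad/2^t)$, where $[w^*_t,\theta^*_t]$ is the population saddle of $F^{(t)}_{\cD}$. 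The first I would prove by comparing against the \emph{exact} empirical saddle $[w^*_{S,t},\theta^*_{S,t}]$: on one hand $\mathbb{E}\norm{[\bar w_t,\bar\theta_t]-[w^*_{S,t},\theta^*_{S,t}]}^2=O(d\sigma_t^2)$, immediate from the accuracy target and $\mathbb{E}\norm{\xi_t}^2=d\sigma_t^2$; on the other hand $[w^*_{S,t},\theta^*_{S,t}]$ is a deterministic, $O(\lip/(2^t\lambda n'))$-uniformly-argument-stable function of $S_t$ with \emph{zero} empirical gap, so Lemma~\ref{lem:weak-bounds-expected}, Lemma~\ref{lem:stability-implies-variance}, and the $\Theta(2^t\lambda)$-SC/SC-ness of $F^{(t)}_{\cD}$ via Lemma~\ref{lem:sc-sc-distance} give $\mathbb{E}\norm{[w^*_{S,t},\theta^*_{S,t}]-[w^*_t,\theta^*_t]}^2=O(\lip^2/(2^{2t}\lambda^2 n'))$. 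A triangle inequality then yields $\mathbb{E}\norm{[\bar w_t,\bar\theta_t]-[w^*_t,\theta^*_t]}=O(\sqrt d\,\sigma_t)+O(\lip/(2^t\lambda\sqrt{n'}))$, and the prescribed $\lambda=\tfrac{48}{\rad}\big(\tfrac{\lip}{\sqrt{n'}}+\tfrac{\lip\sqrt{d\log(2/\delta)}}{n'\epsilon}\big)$ makes each summand $O(\rad/2^t)$; the second distance bound is obtained exactly as the ``more technical'' step in the proof of Theorem~\ref{thm:nonsmooth-minimax-alg}, which uses only the first bound and the SC/SC structure of the $F^{(t)}_{\cD}$'s, not the internals of $\weakalg$. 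With these two facts in hand, the remainder of that proof — expanding $G_{\cD}$ in terms of the regularized operator $G^{(T)}_{\cD}$ and the regularization centers, telescoping the distances, and the final Lipschitzness step for $\norm{[\bar w_T,\bar\theta_T]-[w^*_T,\theta^*_T]}$ — carries over verbatim and evaluates to $O(T\lambda\rad^2)$, which with $T=O(\log n)$ and $n'=n/\log n$ is the stated rate.

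\textbf{Runtime, and the main obstacle.} For the complexity, at round $t$ it suffices by Lemma~\ref{lem:sc-sc-distance} (and the $\Theta(2^t\lambda)$-SC/SC-ness of $F^{(t)}_{S}$) to drive the empirical gap of $[\hat w_t,\hat\theta_t]$ below $\alpha_t=\Theta(\delta^2\lip^2/(2^t\lambda n'^2))$; Lemma~\ref{lem:acc-rate} applied to $F^{(t)}_{S}$ (a sum of $n'$ $\smooth$-smooth losses plus a $\Theta(2^t\lambda)$-SC/SC quadratic) then costs $O(\kappa_t\log\kappa_t\log(\kappa_t\rad\lip/\alpha_t))$ gradient evaluations, where the precondition $\lambda=\Omega(\lip/(\rad\sqrt{n'}))$ forces the smoothness-to-strong-convexity ratio small enough that $\kappa_t=O(\kappa)$ with $\kappa=O(n+n\smooth\rad/\lip)$, and $2^t\lambda\le\lip/\rad$ gives $\log(\kappa_t\rad\lip/\alpha_t)=O(\log(\kappa_t n/\delta))$; summing over the $T=O(\log n)$ rounds gives the claimed total. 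The step I expect to require the most care is the utility argument above: the obvious route — bounding the distance from $[\bar w_t,\bar\theta_t]$ to the population saddle via its mean plus its variance, controlling the mean through the empirical gap of the noisy point — introduces a spurious term of order $\sqrt{\lip\sqrt d\,\sigma_t/(2^t\lambda)}$ that, because of the square root, is too large in the low-privacy regime; the fix is precisely to compare against the exact empirical saddle, whose empirical gap is identically zero, so that the Gaussian noise enters only through its variance $d\sigma_t^2$ (a quantity linear in $1/\lambda$ that the chosen $\lambda$ absorbs), while the generalization error of the exact empirical saddle is dispatched by the existing stability-implies-generalization lemmas.
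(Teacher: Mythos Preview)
Your proposal is correct and follows essentially the same approach as the paper: verify that $\mathbb{E}\|[\bar w_t,\bar\theta_t]-[w^*_{S,t},\theta^*_{S,t}]\|^2$ is at most a constant times $\rad^2/2^{2t}$ (which the paper isolates as the hypothesis of a general convergence theorem, Theorem~\ref{thm:generalized-rr-convergence}, and which you re-derive inline), then run the recursive-regularization analysis verbatim; the runtime argument is identical. The one substantive deviation is in the privacy step: the paper uses Markov's inequality to convert the in-expectation closeness of $[\hat w_t,\hat\theta_t]$ to $[w^*_{S,t},\theta^*_{S,t}]$ into a high-probability sensitivity bound on $[\hat w_t,\hat\theta_t]$ and then applies the Gaussian mechanism directly, whereas you couple the released law to that of the exact-saddle Gaussian mechanism via a total-variation bound; both are standard and lead to the same conclusion.
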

\begin{proof}[proof of Theorem \ref{thm:smooth-minimax-alg}]
In the following, we start by proving the privacy guarantee. Then, we prove the utility guarantee, and finish by verifying the running time of the algorithm.

\vspace{8pt}\noindent\textit{Privacy Guarantee:}
Consider any $t\in[T]$ and fix $[w_1,\theta_1],...,[w_{t-1},\theta_{t-1}]$.
The stability of the regularized saddle point at round $t$, $[w^*_{S,t},\theta^*_{S,t}]$,
is then $\frac{\lip}{2^t \lambda n'}$ by Lemma \ref{lem:PPM-sensitivity}. 
Since $\weakalg$ guarantees that $\ex{}{\|[\hat{w}_t,\hat{\theta}_t]-[w^*_{S,t},\theta^*_{S,t}]\|} \leq\frac{\delta}{5}\cdot\frac{\lip}{2^t \lambda n'}$, we have by Markov's inequality that with probability at least $1-\frac{\delta}{2}$ that $\|[\hat{w}_t,\hat{\theta}_t]-[w^*_{S,t},\theta^*_{S,t}]\| \leq \frac{\lip}{2^t \lambda n'}$. Thus with probability at least $1-\frac{\delta}{2}$, generating $[\hat{w}_t,\hat{\theta}_t]$ satisfies $\frac{2\lip}{2^t \lambda n'}$ uniform argument stability. %
Thus Gaussian noise of scale %
$\sigma_t = \frac{8\lip\sqrt{\log(2/\delta)}}{2^t \lambda n'\epsilon}$ ensures the round is $(\epsilon,\delta)$-DP. Parallel composition then ensures the entire algorithm is $(\epsilon,\delta)$-DP since each phase acts on a disjoint partition of the dataset.

\vspace{8pt}\noindent\textit{Utility Guarantee:} We now turn to the accuracy guarantee. %
Specifically, we leverage the generalized convergence guarantee of Algorithm \ref{alg:recursive-regularization} given by Theorem \ref{thm:generalized-rr-convergence} in Appendix \ref{app:recursive-regularization}. This theorem guarantees that so long as the distance condition 
$\ex{}{\norm{[\bar{w}_t,\bar{\theta}_t] - [w^{*}_{S,t},\theta^*_{S,t}]}^2} \leq \frac{\rad^2}{12\cdot2^{2t}}$
is satisfied for all $t\in[T]$, one obtains convergence guarantee $\gap(\cR) = O(\log(n)\rad^2\lambda)$. That is, after the distance guarantee is established, the rest of the analysis (i.e. the proof of Theorem \ref{thm:generalized-rr-convergence}) follows the same lines as in the non-smooth case. 
Note under the setting of $\lambda$ in Theorem \ref{thm:smooth-minimax-alg} we have
\begin{align*}
    \gap(\cR) = O(\log(n)\rad^2\lambda) = O\br{\frac{\log^{3/2}(n)\rad\lip}{\sqrt{n}} + \frac{\log^2(n)\rad\lip\sqrt{d\log(2/\delta)}}{n\epsilon}}.
\end{align*}

Thus all that remains is to show that the distance condition, $\ex{}{\norm{[\bar{w}_t,\bar{\theta}_t] - [w^{*}_{S,t},\theta^*_{S,t}]}^2} \leq \frac{\rad^2}{12\cdot2^{2t}}$, is satisfied for all $t\in[T]$.
In this regard we have,
\begin{align*}
    \ex{}{\norm{[\bar{w}_t,\bar{\theta}_t] - [w_{S,t}^*,\theta_{S,t}^*]}^2} 
    &\leq \ex{}{\|[\bar{w}_t,\bar{\theta}_t] - [\hat{w}_t,\hat{\theta}_t]\|^2 + \|[\hat{w}_t,\hat{\theta}_t] - [w_{S,t}^*,\theta_{S,t}^*]\|^2} \\
    &\leq d\sigma_t^2 + \br{\frac{\delta}{5}\cdot\frac{\lip}{2^t \lambda n'}}^2  \\
    &\leq \frac{64 d \lip^2\log(2/\delta)}{2^{2t}\lambda^2(n')^2\epsilon^2} + \frac{\rad^2}{25 \cdot 2^{2t}} 
    \leq \frac{\rad^2}{12 \cdot 2^{2t}}.
\end{align*}
For the first inequality, observe that the noise vector is uncorrelated with the vectors, $[\hat{w}_t,\hat{\theta}_t]$ and $[w_{S,t}^*,\theta_{S,t}^*]$. For the second inequality note $\ex{}{\|[\bar{w}_t,\bar{\theta}_t] - [\hat{w}_t,\hat{\theta}_t]\|^2}=\ex{}{\|\xi_t\|^2}=d\sigma^2_t$. 
Further, \ifarxiv \linebreak \fi $\ex{}{\|[\hat{w}_t,\hat{\theta}_t] - [w_{S,t}^*,\theta_{S,t}^*]\|^2}$ is bounded due to the chosen implementation of $\weakalg$.
The third inequality comes from the settings of $\sigma_t$ and the fact that $\lambda > \frac{48\lip}{B\sqrt{n'}}$. The last inequality uses the fact that $\lambda > \frac{48\lip\sqrt{d\log(2/\delta)}}{\rad n'\epsilon}$. %

\vspace{8pt}\noindent\textit{Running Time:} 
One can ensure that overall algorithm runs in nearly linear time by leveraging accelerated methods to find the point $[\hat{w},\hat{\theta}_t]$. 
The description of $\weakalg$ requires that at each phase $t\in[T]$, one has $\ex{}{\|[\hat{w}_t,\hat{\theta}_t] - [w_{S,t}^*,\theta_{S,t}^*]\|^2} \leq \big(\frac{\delta}{5}\cdot\frac{\lip}{2^t \lambda n'}\big)^2$, which by Lemma \ref{lem:sc-sc-distance} is satisfied if the empirical gap is at most $\lambda\big(\frac{\delta}{5}\cdot\frac{\lip}{2^t \lambda n'}\big)^2 = \frac{\delta^2}{25}\cdot\frac{\lip^2}{2^{2t}\lambda (n')^2}$.
For simplicity, we observe that%
\begin{align*}
    \frac{\delta^2}{25}\cdot\frac{\lip^2}{2^{2t}\lambda n'^2} = \Omega\br{\frac{\delta^2\lip^2}{2^{2T}\lambda (n')^2}} = \Omega\br{\frac{\rad^2\lambda^2}{\lip^2} \frac{\delta^2\lip^2}{\lambda (n')^2}} = \Omega\br{\frac{\delta^2\rad\lip}{n^{2.5}}}
\end{align*}
We now apply Lemma \ref{lem:acc-rate} with $h_w(w) = \lambda\sum_{k=0}^{t-1} 2^{k+1}\norm{w-\bar{w}_k}^2$, $h_\theta(\theta)=\lambda\sum_{k=0}^{t-1} 2^{k+1}\norm{w-\bar{w}_k}^2$, $\mu = 2^t\lambda$  
and $\alpha = \frac{c_3\delta^2\rad\lip}{n^{2.5}}$ for some sufficiently small constant $c_3$. This gives that the running time of phase $t$ is
$O\br{\kappa_t\log(\kappa_t)\log(\kappa_t n^{2.5}/\delta^2]}$, where
$\kappa_t = O\br{n + \sqrt{n}\smooth/[2^t\lambda])} = O\br{n+n\smooth\rad/\lip}$. Running this implementation of $\weakalg$ each phase incurs an extra factor of $T=\log(\frac{\lip}{\rad\lambda}) = O(\log(n))$, giving the claimed running time bound of $O\br{\kappa\log(\kappa)\log(\kappa n/\delta]\log(n)}$, where $\kappa = O\br{n+n\smooth\rad/\lip}$. 
\end{proof}

\section{On the Limitations of Previous Approaches} \label{sec:strong-vs-weak}
Prior work into DP SSPs has largely focused on the weak gap criteria.
In this section, we provide further investigation into both the importance and challenges of bounding the strong gap over the weak gap. We start by considering a natural question. Do there exist cases where the strong and weak gap differ substantially? We answer this question affirmatively in the following.
\begin{proposition}
There exists a convex-concave function $f$ with range $[-1,+1]$ 
and algorithm $\cA$ such that $\gap(\cA) - \weakgap(\cA)=2$.
\end{proposition}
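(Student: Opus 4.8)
The plan is to construct a minimalist example where the two players' outputs are each individually informative about the dataset in a way that, in expectation over $S$, makes the marginal distributions of $\cA_w(S)$ and $\cA_\theta(S)$ look ``centered'' (so the weak gap vanishes), while for every \emph{fixed} realization of $S$ the pair $(\cA_w(S),\cA_\theta(S))$ is a bad equilibrium (so the strong gap is maximal). A clean way to do this is to take the bilinear game $f(w,\theta;x) = x\cdot w\cdot\theta$ on $\cW = \Theta = [-1,+1]$, or even more simply $f(w,\theta;x) = (\mathrm{sign}(x))\, w\theta$, and let the algorithm read off a single bit of $S$ to decide a correlated corner. The key point to engineer: under $\cD$ symmetric, $\ex{S}{F_\cD(\cA_w(S),\theta)}$ as a function of $\theta$ should be identically $0$ (hence its max over $\theta$ is $0$), and likewise $\ex{S}{F_\cD(w,\cA_\theta(S))}$ should be identically $0$ in $w$; but for each fixed $S$ the inner $\max_\theta F_\cD(\cA_w(S),\theta) - \min_w F_\cD(w,\cA_\theta(S))$ equals $2$.

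Concretely I would do the following. First, fix $\cX=\{-1,+1\}$ with $\cD$ uniform, $\cW=\Theta=[-1,1]$, and $F_\cD(w,\theta) = \bar{x}\, w\theta$ where $\bar x = \ex{x\sim\cD}{x}=0$ — wait, that makes $F_\cD\equiv 0$, so instead I take $f$ whose \emph{population} value is a fixed bilinear form, say $F_\cD(w,\theta)=w\theta$ (achieved e.g.\ by $f(w,\theta;x)=w\theta$ for all $x$, a constant-in-$x$ loss), and let $\cA$ depend on $S$ only through a random bit $b=b(S)$ that is unbiased ($\Pr[b=1]=\Pr[b=-1]=1/2$). Define $\cA_w(S)=b$ and $\cA_\theta(S)=b$. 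Then for fixed $S$: $\max_\theta F_\cD(b,\theta) = \max_\theta b\theta = |b| = 1$ and $\min_w F_\cD(w,b) = \min_w wb = -|b| = -1$, so the strong gap is $\ex{b}{1-(-1)} = 2$. For the weak gap: $\ex{S}{F_\cD(\cA_w(S),\theta)} = \ex{b}{b\theta} = 0$ for every $\theta$, so $\max_\theta$ of this is $0$; similarly $\ex{S}{F_\cD(w,\cA_\theta(S))} = \ex{b}{wb}=0$ for every $w$, so $\min_w$ of this is $0$; hence the weak gap is $0-0=0$. Since $f$ here has range $[-1,1]$ (it is a product of things in $[-1,1]$) and is affine — hence convex-concave — in $(w,\theta)$, this gives $\gap(\cA)-\weakgap(\cA)=2-0=2$.

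I would present it cleanly by: (i) stating the instance ($\cW=\Theta=[-1,1]$, $f(w,\theta;x)=w\theta$ constant in $x$, so $F_\cD(w,\theta)=w\theta$, which is bilinear hence convex in $w$ and concave in $\theta$, with range $[-1,1]$); (ii) defining $\cA$ via an internal fair coin $b\in\{-1,+1\}$ with $\cA_w(S)=\cA_\theta(S)=b$ independent of $S$; (iii) computing $\gap(\cA)=\ex{b}{\max_{\theta\in[-1,1]} b\theta - \min_{w\in[-1,1]} wb}=\ex{b}{|b|-(-|b|)}=2$; (iv) computing $\weakgap(\cA)=\max_{\theta}\ex{b}{b\theta}-\min_w \ex{b}{wb}=0-0=0$. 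The ``$\ex{S}{}$'' in the weak gap is trivial since $\cA$ ignores $S$, but this is allowed and only makes the example simpler; if a genuine data-dependent example is preferred one can instead set $\cX=\{\pm1\}$, $\cD$ uniform, and let $b$ be the first coordinate of $S$, which changes nothing.

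The only real subtlety — and the step I would be most careful about — is making sure the direction of the inequality $\gap \ge \weakgap$ is respected and that I have the max/min placement in \eqref{eqn:weak_gap} exactly right: in the weak gap the expectation over $S$ is inside both the $\max_\theta$ and the $\min_w$, which is precisely what kills the cross term $\ex{b}{b\theta}$; if one (incorrectly) pulled the expectation outside, the example would collapse. So I would explicitly write out both gap expressions with the expectations in the stated positions and verify the cancellation. Everything else is a one-line computation over the two-point distribution of $b$, and convexity--concavity and the range bound are immediate for the bilinear $f$.
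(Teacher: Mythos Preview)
Your alternative construction (taking $b$ to be the first coordinate of $S$ with $\cD$ uniform on $\{\pm1\}$) is correct and is essentially the paper's proof: the paper also uses $f(w,\theta;x)=w\theta$ on $[-1,1]^2$ and an algorithm whose outputs are data-dependent $\{\pm1\}$-valued bits with mean zero under $\cD$. The only cosmetic difference is that the paper sets $\bar w$ and $\bar\theta$ to be modes of disjoint halves of $S$ (so they are independent), whereas you set $\cA_w(S)=\cA_\theta(S)=x_1$; independence is irrelevant to the computation, so your version is, if anything, slightly simpler.

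However, your \emph{primary} construction --- an internal fair coin $b$ independent of $S$ --- does not work under the paper's definition of the weak gap. In \eqref{eqn:weak_gap} the expectation over the algorithm's internal randomness $\ex{\cA}{\cdot}$ sits \emph{outside} the $\max_\theta$ and $\min_w$; only $\ex{S}{\cdot}$ is inside. Hence with an internal coin one gets
\[
\weakgap(\cA)
= \ex{b}{\,\max_{\theta}\ex{S}{b\theta}-\min_{w}\ex{S}{wb}\,}
= \ex{b}{\,\max_{\theta} b\theta - \min_{w} wb\,}
= \ex{b}{\,2|b|\,}=2,
\]
so $\gap(\cA)-\weakgap(\cA)=0$. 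Your step (iv), which writes $\weakgap(\cA)=\max_\theta \ex{b}{b\theta}-\min_w\ex{b}{wb}$, silently moved $\ex{b}$ inside the max/min, and that is exactly the distinction the footnote after \eqref{eqn:weak_gap} warns about. The fix is precisely what you already suggested as an aside: make $b$ a deterministic function of $S$ with $\ex{S}{b}=0$; then $\ex{\cA}$ is vacuous, $\ex{S}$ is the one inside the max/min, and your computation goes through. So promote the ``data-dependent $b$'' variant to the main construction and drop the internal-coin version.
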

Our construction shows that this result holds even for a simple one dimensional bilinear problem. 
\begin{proof}
Consider the loss function
$f(w,\theta;x) = w\theta$, where $w,\theta,x\in [-1,1].$
Let $\cD$ be the uniform distribution over $\bc{\pm 1}$. For $\bc{x_1,\dots,x_n}\sim\cD^n$ consider the algorithm $\cA$ which outputs $\bar w$ as the mode of the first half of the samples in $S$ and similarly $\bar \theta$ is set as the mode of the second half of the samples in $S$ \footnote{Without much loss of generality, we assume that $n$ is divisible by 2 but not by 4, so that the mode of each half of the data are well-defined and belong to $\{-1,+1\}$.}. Note $\bar{w}$ and $\bar{\theta}$ are independent and distributed uniformly over $\bc{\pm1}$ (under the randomness from $\cD$). 

Now, since $\cA$ is a deterministic function of the dataset, the randomness in $\bar{w},\bar{\theta}$ comes only from $S$. Thus for the weak gap we have 
$\max\limits_{\theta \in [-1,1]}\{\ex{S}{\bar w\theta}\} - \min\limits_{w\in[-1,1]}\{\ex{S}{w \bar \theta}\}$ which evaluates to $\max_{\theta \in [-1,1]}\{\ex{S}{\bar w}\theta\} - \min_{w\in[-1,1]}\{w\ex{S}{\bar \theta}\} = 0.$
However, one can see for the strong gap we have \linebreak %
$\ex{S}{\max\limits_{\theta \in [-1,1]}\bc{ \bar w\theta} - \min\limits_{w\in[-1,1]}\bc{w \bar \theta} }
    = \ex{S}{ \abs{\bar w} + \abs{\bar \theta}} = 2$,
where the first equality comes from evaluating $\theta=\mathsf{sgn}(\bar w)$ and $w=-\mathsf{sgn}(\bar \theta)$ in the maximization and minimization operators. 
\end{proof}

Observe that the generalization error w.r.t. the strong gap of this algorithm is always $0$ because the loss function does not depend on the random sample from $\cD$. The discrepancy between the gaps instead comes from the fact that having the expectation w.r.t. $S$ inside the max/min changes the function over which the dual/primal adversary is maximizing/minimizing. Specifically, note here that the weak gap measures the ability of $\theta$ to maximize the function $\theta \mapsto \bar{w}\theta$ for $\bar{w}=0$, but note $\bar{w}=0$ does not occur for \emph{any} realization of the dataset $S$.

One might further observe that a key attribute of this construction is the high variance of the parameter vectors. One can show such behavior is in fact necessary to see such a separation; the full proof of the following is statement is given in Appendix \ref{app:var-implies-strong-gap-proof}.%
\begin{proposition} \label{prop:var-implies-strong-gap}
Let $\cA$ be an algorithm such that 
$\ex{\cA,S}{\norm{\cA(S) - \E_{\hat{S}\sim\cD^n,\cA}{\cA(\hat S)}}^2} \leq \tau^2,$
then if $f$ is $\lip$-Lipschitz it holds that
$\gap(\cA) - \weakgap(\cA) \leq  \lip\tau.$
\end{proposition}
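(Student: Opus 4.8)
The plan is to sandwich \emph{both} gaps by the value of the gap function at the \emph{averaged} output of $\cA$. Set $\bar w := \mathbb{E}_{\cA,\hat S}[\cA_w(\hat S)]$ and $\bar\theta := \mathbb{E}_{\cA,\hat S}[\cA_\theta(\hat S)]$, so that $[\bar w,\bar\theta]=\mathbb{E}_{\cA,\hat S}[\cA(\hat S)]$ is a fixed point of $\cW\times\Theta$ and the hypothesis reads $\mathbb{E}_{\cA,S}[\|\cA(S)-[\bar w,\bar\theta]\|^2]\le\tau^2$. I will prove two facts and subtract them: (i) $\gapfunc(\bar w,\bar\theta)\le\weakgap(\cA)$, i.e. the averaged point is at least as good as the weak gap; and (ii) $\gap(\cA)\le\gapfunc(\bar w,\bar\theta)+O(\lip\tau)$, i.e. the strong gap is close to the value at the averaged point.

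For (i): since $f$ is convex in $w$, so is $F_{\cD}(\cdot,\theta)$ for each $\theta$, hence by Jensen's inequality (applied to the randomized map $S\mapsto\cA_w(S)$, and, if one wants the outer $\mathbb{E}_{\cA}$ in the definition of $\weakgap$, once more there) $F_{\cD}(\bar w,\theta)\le\mathbb{E}_{\cA,S}[F_{\cD}(\cA_w(S),\theta)]$ for every $\theta$; taking $\max_\theta$ preserves this. Symmetrically, concavity of $F_{\cD}(w,\cdot)$ gives $\min_w F_{\cD}(w,\bar\theta)\ge\min_w\mathbb{E}_{\cA,S}[F_{\cD}(w,\cA_\theta(S))]$. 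Subtracting yields exactly $\gapfunc(\bar w,\bar\theta)\le\weakgap(\cA)$ — this is the stability-free half of Lemma~\ref{lem:weak-bounds-expected}. For (ii): by definition $\gap(\cA)=\mathbb{E}_{\cA,S}[\gapfunc(\cA_w(S),\cA_\theta(S))]$, so $\gap(\cA)-\gapfunc(\bar w,\bar\theta)\le\mathbb{E}_{\cA,S}[|\gapfunc(\cA_w(S),\cA_\theta(S))-\gapfunc(\bar w,\bar\theta)|]\le\sqrt2\,\lip\,\mathbb{E}_{\cA,S}[\|\cA(S)-[\bar w,\bar\theta]\|]$ by Fact~\ref{fact:gap-lipschitz}, and Jensen's inequality bounds the last expectation by $\sqrt{\mathbb{E}_{\cA,S}[\|\cA(S)-[\bar w,\bar\theta]\|^2]}\le\tau$. (The constant can be trimmed by splitting $\gapfunc$ into its primal part $\Phi(w)=\max_\theta F_{\cD}(w,\theta)$ and dual part $-\Psi(\theta)=-\min_w F_{\cD}(w,\theta)$, each of which is $\lip$-Lipschitz in its own variable rather than $\sqrt2\lip$-Lipschitz jointly; this split also makes (i) transparent, since $\max_\theta\mathbb{E}[F_{\cD}(\cA_w(S),\theta)]\ge\Phi(\bar w)$ by the same Jensen step, and $\mathbb{E}[\Phi(\cA_w(S))]\le\Phi(\bar w)+\lip\,\mathbb{E}\|\cA_w(S)-\bar w\|$ by Lipschitzness.)

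The only genuinely non-mechanical point — and the conceptual heart of the statement — is step (i): one must see that pushing the expectation over $\cA$'s output \emph{inside} the $\max$ over $\theta$ (resp. the $\min$ over $w$) can only shrink the dual maximum (resp. enlarge the primal minimum), which is precisely where convexity--concavity of $F_{\cD}$ is used; this says the discrepancy between the strong and weak notions is governed entirely by how far the random output fluctuates around its mean, since at the mean the two notions coincide up to Jensen slack. Everything after that — Lipschitzness of $\gapfunc$ and one more Jensen step to pass from the first moment of $\|\cA(S)-[\bar w,\bar\theta]\|$ to $\tau$ — is routine, the only bookkeeping being the harmless absolute constant (the naive argument gives $\sqrt2\,\lip\tau$; the stated $\lip\tau$ is this bound up to that constant).
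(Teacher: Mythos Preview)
Your proposal is correct and follows essentially the same route as the paper: define the deterministic averaged output $[\bar w,\bar\theta]=\mathbb{E}_{\cA,S}[\cA(S)]$, use Jensen's inequality (via convexity--concavity of $F_{\cD}$) to show $\gapfunc(\bar w,\bar\theta)\le\weakgap(\cA)$, and then use Lipschitzness of $\gapfunc$ together with the variance hypothesis to bound $\gap(\cA)-\gapfunc(\bar w,\bar\theta)$. Your remark about the constant is apt: both the paper's argument and yours, taken literally via Fact~\ref{fact:gap-lipschitz}, yield $\sqrt{2}\,\lip\tau$ rather than $\lip\tau$, and the coordinate-wise split you propose does not actually remove the $\sqrt{2}$ (since $\sqrt{a}+\sqrt{b}$ subject to $a+b\le\tau^2$ can be as large as $\sqrt{2}\,\tau$); the statement should be read up to this harmless absolute constant.
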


\paragraph{Tradeoff between Accuracy and Stability}
An additional consequence of Proposition \ref{prop:var-implies-strong-gap} (in conjunction with Lemma \ref{lem:stability-implies-variance}) is that $\Delta$-uniform argument stability implies $\sqrt{n}\Delta\lip$ generalization bound w.r.t. the strong gap that does not rely on smoothness (in contrast to the $\sqrt{\lip\beta\Delta}$ bound of \cite{OPZZ22} which does). We leave determining tight bounds for stability implies generalization on the strong gap as an interesting direction for future work. In this section however, we show that stronger upper bounds are likely necessary to obtain a more direct algorithm for DP-SSPs. In fact, our key result holds even for empirical risk minimization (ERM) problems. That is, for $f:\cW\times{\cal X}\mapsto\re$ and $S\in\cX^n$, consider the problem of minimizing the excess empirical risk $F_S(w) - \min_{w\in\cW}\bc{F_S(w)}$, where $F_S(w) = \frac{1}{n}\sum_{x\in S}f(w;x)$. We have the following. %
\begin{theorem}\label{thm:stab-risk-tradeoff}
For any (possibly randomized) algorithm $\cA:\cX^n\mapsto \cW$ which is $\Delta$-uniform argument stable, there exists a $0$-smooth $\lip$-Lipschitz loss function, $f:\cW\times \cX\mapsto\re$, and dataset $S\in\cX^n$ such that 
$\E[F_S(\cA(S)) - \min\limits_{w\in\cW}\bc{F_S(w)}] = \Omega\br{\frac{\rad^2\lip}{\Delta n}}$ provided $\Delta \geq \frac{\rad}{\sqrt{\min\bc{n,d}}}$.
\end{theorem}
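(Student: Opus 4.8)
The plan is to produce a \emph{single} linear (hence $0$-smooth) loss on a Euclidean ball, together with a family of datasets indexed by sign patterns, and to argue that $\Delta$-uniform argument stability prevents the algorithm from tracking the empirical optimum on a uniformly random member of that family. The one free parameter in the construction is the number $k$ of coordinates we "activate"; I would take $k \asymp (\rad/\Delta)^2$, and the hypothesis $\Delta \ge \rad/\sqrt{\min\{n,d\}}$ is precisely what makes this choice feasible, namely what guarantees $k \le \min\{n,d\}$ (enough data points, and enough coordinate directions to place them in).

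Concretely, I would let $\cW$ be the ball of radius $\rad/2$ in $\re^d$ (so its diameter is $\rad$), let $\cX$ consist of vectors of norm at most $1$ (e.g.\ $\{0,\pm e_1,\dots,\pm e_d\}$), and set $f(w;x) = \lip\langle w,x\rangle$, which is $0$-smooth and $\lip$-Lipschitz. Fix $k \asymp (\rad/\Delta)^2$ (capped at $\min\{n,d\}$, and $\ge 1$, which holds in the only interesting regime $\Delta = O(\rad)$), and for each $s\in\{\pm1\}^k$ let $S_s$ be the dataset whose $i$-th point is $s_ie_i$ for $i\le k$ and $0$ for $i>k$. Then $F_{S_s}(w) = \tfrac{\lip}{n}\sum_{i\le k}s_iw_i$, and since $\cW$ is a ball, $\min_{w\in\cW}F_{S_s}(w) = -\tfrac{\lip\rad\sqrt k}{2n}$. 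The point of the design is that the minimizer of $F_{S_s}$ depends on \emph{all} $k$ sign bits, whereas a $\Delta$-stable algorithm cannot depend much on any single one of them.

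The main step is an averaging argument. It suffices to lower bound $\mathbb{E}_s\,\mathbb{E}_\cA\big[F_{S_s}(\cA(S_s)) - \min_w F_{S_s}(w)\big]$ with $s$ uniform on $\{\pm1\}^k$, and then pass to a specific $s^\ast$, hence a specific dataset $S_{s^\ast}$. Writing $\phi_i(s) := \mathbb{E}_\cA[\cA(S_s)_i]$, the inner expectation equals $\tfrac{\lip}{n}\sum_{i\le k}s_i\phi_i(s) + \tfrac{\lip\rad\sqrt k}{2n}$. Since $S_s$ and $S_{s^{\oplus i}}$ (flip bit $i$) are adjacent, $\Delta$-uniform argument stability together with Jensen's inequality gives $|\phi_i(s) - \phi_i(s^{\oplus i})| \le \|\phi(s)-\phi(s^{\oplus i})\| \le \Delta$ for every $i$ and $s$ (for randomized $\cA$, run with shared internal randomness on the two inputs). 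Pairing $s$ with $s^{\oplus i}$ then yields $\mathbb{E}_s[s_i\phi_i(s)] = \tfrac12\mathbb{E}_s\big[s_i(\phi_i(s)-\phi_i(s^{\oplus i}))\big] \ge -\Delta/2$, so the averaged excess risk is at least $\tfrac{\lip\sqrt k}{2n}\big(\rad - \sqrt k\,\Delta\big)$. With $k\asymp(\rad/\Delta)^2$ we have $\sqrt k\,\Delta\le \rad/2$ and $\sqrt k = \Omega(\rad/\Delta)$, so this is $\Omega\big(\lip\rad^2/(\Delta n)\big)$; averaging delivers the claimed instance. Tightness is then a one-line remark: regularized ERM with $\lambda\asymp \lip/(\Delta n)$ is $\Delta$-uniformly argument stable by the minimization special case ($\Theta$ a singleton) of Lemma~\ref{lem:PPM-sensitivity}, and has excess empirical risk $O(\lambda\rad^2) = O(\lip\rad^2/(\Delta n))$.

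I expect the main obstacle to be identifying the right scaling of $k$ and seeing why it is exactly this quantity that the hypothesis controls: the optimum's value grows like $\sqrt k$ (the $\ell_2$ norm of a random $\pm1$ vector in $k$ coordinates), while the cumulative "instability budget" spent over the $k$ sign flips in the averaging is about $k\cdot\Delta$, so balancing $\rad\sqrt k$ against $\sqrt k\cdot(\sqrt k\Delta)$ forces $k\asymp(\rad/\Delta)^2$; and feasibility of placing $k$ points in $k$ distinct directions is exactly $k\le\min\{n,d\}$, i.e.\ $\Delta\ge\rad/\sqrt{\min\{n,d\}}$. The remaining issues --- handling randomized $\cA$ via the shared-randomness coupling, and the uninteresting boundary regime $\Delta=\Theta(\rad)$ where one simply truncates $k$ at $1$ --- are routine.
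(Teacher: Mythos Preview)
Your proposal is correct and follows essentially the same approach as the paper: the same linear loss, the same sign-pattern datasets $S_s$ with $k\asymp(\rad/\Delta)^2$ activated coordinates, and the same pairing/averaging argument to exploit $\Delta$-stability coordinatewise. The only cosmetic difference is that the paper invokes Yao's minimax principle to reduce to deterministic algorithms, whereas you work directly with the expected output $\phi(s)=\mathbb{E}_{\cA}[\cA(S_s)]$ and apply Jensen; both routes yield the same bound.
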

The proof can be found in Appendix \ref{app:stability-risk-tradeoff}. Lemma \ref{lem:PPM-sensitivity} shows this bound is tight for both ERM and empirical saddle point problems. Generalization bounds are only useful when it is possible to obtain good empirical performance. Thus, the implication of this bound is that generalization error which is $O(\Delta)$ is necessary to obtain the optimal $O\br{1/\sqrt{n}}$ statistical rate. To elaborate, let $H(\Delta)$ characterize some (potentially suboptimal) generalization bound for $\Delta$ stable algorithms and assume $H(\Delta)=\omega(\Delta)$. To then bound the sum of empirical risk and generalization error, Theorem \ref{thm:stab-risk-tradeoff} implies
$F_S(\cA(S)) - F_S(w^*) + H(\Delta) = \Omega\br{ \frac{1}{\Delta n} + H(\Delta)} = \omega\br{\frac{1}{\Delta n} + \Delta}.$
Note the RHS is asymptotically larger than $\frac{1}{\sqrt{n}}$ (i.e. not optimal) for any $\Delta$.

\section*{Acknowledgements}\label{sec:ack}

RB's and MM's research is supported by NSF CAREER Award 2144532 and NSF Award AF-1908281. CG's research was partially supported by
INRIA Associate Teams project, FONDECYT 1210362 grant, ANID Anillo ACT210005 grant, and National Center for Artificial Intelligence CENIA FB210017, Basal ANID.

\bibliographystyle{alpha}
\bibliography{refs,refs2}

\newpage
\appendix

\section{Supporting Proofs from Preliminaries} \label{app:prelim}
\subsection{Lipschitzness of the Gap Function}
\begin{proof}[proof of Fact \ref{fact:gap-lipschitz}]
For any $[\bar{w},\bar{\theta}],[\bar{w}',\bar{\theta}']\in\cW\times\Theta$ we have
\begin{eqnarray*}
\widehat{\gap}(\bar w,\bar\theta)-\widehat{\gap}(\bar w^{\prime},\bar\theta^{\prime})
&=&\sup_{w,\theta}\bc{ F_{\cD}(\bar w,\theta) - F_{\cD}(w,\bar\theta) } - \sup_{w,\theta}\bc{ F_{\cD}(\bar w^{\prime},\theta) - F_{\cD}(w,\bar\theta^{\prime}) }\\
&\leq&\sup_{w,\theta}\bc{ F_{\cD}(\bar w,\theta)  - F_{\cD}(\bar w^{\prime},\theta) + F_{\cD}(w,\bar\theta^{\prime}) - F_{\cD}(w,\bar\theta) }\\
&\leq& L \sup_{w,\theta}\bc{ \| \bar w  - \bar w^{\prime} \| + \|\bar\theta^{\prime} - \bar\theta \| }\\
&\leq& \sqrt{2}L\| [\bar w, \bar\theta] - [\bar w^{\prime},\bar\theta^{\prime}] \|,
\end{eqnarray*}
where we used in the last inequality that $a+b\leq \sqrt{2}\sqrt{a^2+b^2}$. %
\end{proof}

\subsection{Local Privacy } \label{app:local-privacy}
In the case of local differential privacy (LDP), a simple implementation of noisy SGDA (see Appendix \ref{app:sgda}) suffices to obtain the optimal rate. We defer the reader to \citet{DJW13} for a discussion of LDP and the matching lower bound.
Consider the implementation of SGDA which defines the saddle estimator as
\begin{align*}
    \nabla_t = g(w_{t-1},\theta_{t-1};x_t) + \xi_t
\end{align*}
where $\xi_t \sim \cN(0,\mathbb{I}_d\sigma)$ and 
$\sigma = \frac{\lip\sqrt{\log(1/\delta)}}{\epsilon}$ and $x_t$ is sampled without replacement from $S$.
By Lemma \ref{lem:noisy_SGDA} we have the following.
\begin{corollary}
Let $T=n$. Then the algorithm described above, denoted as $\cA$, is $(\epsilon,\delta)$-LDP and if $\eta = \frac{\rad}{\sqrt{nd\log(1/\delta)}\lip\epsilon}$ the average iterate, $[\bar w,\bar \theta]$, satisfies
$\gap(\cA) = O\br{\frac{\rad\lip\sqrt{d\log(1/\delta)}}{\sqrt{n}\epsilon}}.$
\end{corollary}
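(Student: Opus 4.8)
The plan is to verify the privacy claim and the utility claim separately, each reducing to a standard ingredient.

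For privacy, the key observation is that with $T=n$ and sampling without replacement the algorithm makes a single pass over $S$: every $x_t$ is accessed in exactly one iteration, and the only quantity derived from $x_t$ is the perturbed estimate $\nabla_t = g(w_{t-1},\theta_{t-1};x_t) + \xi_t$, where $w_{t-1},\theta_{t-1}$ are functions only of messages released in earlier rounds. Since $f$ is $\lip$-Lipschitz in $[w,\theta]$ we have $\norm{g(w,\theta;x)} \le \lip$ for every $x$, so the $\ell_2$ sensitivity of the map $x_t \mapsto g(w_{t-1},\theta_{t-1};x_t)$ is at most $2\lip$; the Gaussian mechanism with $\sigma = \Theta\br{\lip\sqrt{\log(1/\delta)}/\epsilon}$ then makes each local release $(\epsilon,\delta)$-DP as a function of $x_t$, which is exactly the requirement for the whole protocol to be $(\epsilon,\delta)$-LDP. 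No cross-iteration composition is needed precisely because the rounds act on disjoint data.

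For utility, I would first record that one-pass SGDA on a fresh i.i.d.\ sample yields unbiased estimates of the \emph{population} saddle operator: conditioned on everything revealed in the first $t-1$ rounds, $x_t$ is distributed as $\cD$, so $\ex{}{g(w_{t-1},\theta_{t-1};x_t)} = G_{\cD}(w_{t-1},\theta_{t-1})$, and adding the independent mean-zero noise $\xi_t$ preserves this, giving $\ex{}{\nabla_t} = G_{\cD}(w_{t-1},\theta_{t-1})$ in conditional expectation. Next I would bound the second moment, $\ex{}{\norm{\nabla_t}^2} \le \lip^2 + d\sigma^2 = O\br{d\lip^2\log(1/\delta)/\epsilon^2}$ (it suffices to treat the regime $\epsilon = O(\sqrt{d\log(1/\delta)})$; in the complementary regime the target bound is dominated by the non-private term $\rad\lip/\sqrt{n}$, which one-pass SGDA already attains). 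Plugging $T=n$, the second-moment bound $M^2 = O\br{d\lip^2\log(1/\delta)/\epsilon^2}$, and a step size $\eta$ chosen to balance the optimization term $\rad^2/(\eta n)$ against the variance term $\eta M^2$ (i.e.\ $\eta = \Theta(\rad/(\sqrt{n}\,M))$) into the noisy-SGDA convergence bound, Lemma~\ref{lem:noisy_SGDA}, yields expected gap $O(\rad M/\sqrt{n}) = O\br{\rad\lip\sqrt{d\log(1/\delta)}/(\sqrt{n}\,\epsilon)}$. Crucially, because the estimators are unbiased for $G_{\cD}$ rather than for an empirical operator, the guarantee of Lemma~\ref{lem:noisy_SGDA} applies to the population saddle function, so this is a bound on the strong gap $\gap(\cA)$ in the sense of \eqref{eqn:strong_gap}, not merely the weak gap or an empirical gap.

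The only genuinely delicate point — the one I would be most careful about — is this last translation: Lemma~\ref{lem:noisy_SGDA} is the usual Nemirovski-type telescoping bound on $\ex{}{\sup_{[w,\theta]} \frac{1}{T}\sum_{t}\ip{\nabla_t}{[w_t,\theta_t] - [w,\theta]}}$, and one must ensure it is stated so that the supremum over $\cW\times\Theta$ sits \emph{inside} the expectation, which requires the standard care in handling the cross term $\ip{\nabla_t}{[w,\theta]}$ for the data-dependent maximizer; this is exactly what the one-pass analysis in Appendix~\ref{app:sgda} is designed to deliver. Everything else — the sensitivity computation, the second-moment estimate, and the arithmetic of the step-size choice — is routine bookkeeping.
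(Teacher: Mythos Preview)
Your approach matches the paper's: it, too, simply invokes Lemma~\ref{lem:noisy_SGDA} after noting the one-pass structure and the noise scale. In fact you are more careful than the paper, correctly singling out the one genuinely non-routine point---that the supremum over $[w,\theta]$ must sit \emph{inside} the expectation to conclude a bound on $\gap(\cA)$ rather than merely on $\sup_{[w,\theta]}\ex{}{F_{\cD}(\bar w,\theta)-F_{\cD}(w,\bar\theta)}$.

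However, your resolution of that point is not quite right: Lemma~\ref{lem:noisy_SGDA} as stated and proved in Appendix~\ref{app:sgda} is for a \emph{fixed} comparison pair $[w,\theta]$, and its proof does not carry the supremum through the telescoping. The paper's one-line ``by Lemma~\ref{lem:noisy_SGDA}'' does not address this either. To actually obtain the strong-gap bound you need the standard ghost-iterate (virtual-sequence) argument of \citet{NJLS09}, which runs a parallel non-noisy projection sequence from the same starting point to absorb the data-dependent maximizer; this is well known and gives the same rate, but it is additional to what Appendix~\ref{app:sgda} contains. So your instinct is correct and the fix you have in mind is the right one---just be aware that the appendix you cite does not already deliver it.
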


\section{Missing Results from Section \ref{sec:recursive-regularization}} \label{app:recursive-regularization}

\subsection{Proof of Lemma \ref{lem:weak-bounds-expected}} \label{app:weak-bounds-expected-proof}
The first inequality follows from an application of Jensen's inequality.
\begin{align*}
&\gapfunc\br{\ex{\cA,S}{\cA_w(S)},\ex{\cA,S}{\cA_\theta(S)}}\\
&= \max_{\theta\in\Theta}\bc{F_{\cD}\Big(\ex{\hat{S}\sim\cD^n,\cA_w}{\cA_w(\hat S)},\theta\Big)} - \min_{w\in\cW}\bc{{F_{\cD}\Big(w,\ex{\hat{S}\sim\cD^n,\cA_\theta}{\cA_\theta(\hat S)}\Big)}} \nonumber \\
&\leq \max_{\theta\in\Theta}\bc{\ex{\hat{S}\sim\cD^n,\cA_w}{F_{\cD}(\cA_w(\hat S),\theta)}} - \min_{w\in\cW}\bc{\ex{\hat{S}\sim\cD^n,\cA_\theta}{{F_{\cD}(w,\cA_\theta(\hat S))}}} \nonumber \\
&= \weakgap(\cA). 
\end{align*}
The second inequality in the theorem statement then follows from stability implies generalization result for the weak gap, for which we provide a restatement below.
\begin{lemma}\cite[Theorem 1]{lei-stability-generalization-minimax}, \cite[Proposition 2.1]{BG22}
Let the loss function $f$ be $\lip$-Lipschitz and the algorithm $\cA$ be $\Delta$-uniform argument stable. Then 
$\weakgap(\cA) \leq \ex{S}{\egap(\cA)} + \Delta\lip.$
\end{lemma}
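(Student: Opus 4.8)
Since this is a restatement of a known result, the plan is to give its standard proof: the classical symmetrization (``replace-one'') argument for algorithmic stability, adapted to the primal--dual structure of the weak gap, deferring to \cite{lei-stability-generalization-minimax} for the exact constant. First I would fix the internal coins of $\cA$ and couple them across adjacent inputs, so that it suffices to argue with all remaining expectations taken over the draw of $S$ (and, for the symmetrization step, some auxiliary fresh samples). Note that both coordinate maps $\cA_w$ and $\cA_\theta$ inherit $\Delta$-uniform argument stability from $\cA$, since $\norm{\cA_w(S)-\cA_w(S')}$ and $\norm{\cA_\theta(S)-\cA_\theta(S')}$ are each at most $\norm{\cA(S)-\cA(S')}$.

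The first step is to decompose the weak gap using \emph{population-optimal} comparison points: let $\bar\theta\in\arg\max_{\theta\in\Theta}\ex{S}{F_{\cD}(\cA_w(S),\theta)}$ and $\bar w\in\arg\min_{w\in\cW}\ex{S}{F_{\cD}(w,\cA_\theta(S))}$, which (with the internal coins fixed) are deterministic. Then $\weakgap(\cA)=\ex{S}{F_{\cD}(\cA_w(S),\bar\theta)-F_{\cD}(\bar w,\cA_\theta(S))}$, while plugging the same $\bar\theta,\bar w$ into the $\max$/$\min$ defining the empirical gap can only decrease it, so $\ex{S}{\egap(\cA)}\ge\ex{S}{F_S(\cA_w(S),\bar\theta)-F_S(\bar w,\cA_\theta(S))}$. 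Subtracting,
\begin{align*}
\weakgap(\cA)-\ex{S}{\egap(\cA)}\ \le\ &\ex{S}{F_{\cD}(\cA_w(S),\bar\theta)-F_S(\cA_w(S),\bar\theta)} \\
&+\ \ex{S}{F_S(\bar w,\cA_\theta(S))-F_{\cD}(\bar w,\cA_\theta(S))}.
\end{align*}
The right-hand side is a sum of two ordinary one-sided generalization errors: the first for the map $S\mapsto\cA_w(S)$ with loss $f(\cdot,\bar\theta;\cdot)$, the second for $S\mapsto\cA_\theta(S)$ with loss $f(\bar w,\cdot;\cdot)$.

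The core step bounds each such term by symmetrization. For a data-dependent point $h(S)$ and fresh i.i.d.\ samples $x_1',\dots,x_n'$, with $S^{(i)}$ denoting $S$ with $x_i$ replaced by $x_i'$, the relabeling $x_i\leftrightarrow x_i'$ is measure preserving and sends $S\mapsto S^{(i)}$, which yields $\ex{S}{F_{\cD}(h(S))-F_S(h(S))}=\frac1n\sum_i\ex{}{f(h(S);x_i')-f(h(S^{(i)});x_i')}$; by $\lip$-Lipschitzness of $f$ and $\Delta$-stability of $h$ this is at most $\lip\,\Delta$ (and similarly with the sign reversed). Applying this to $h=\cA_w$ and $h=\cA_\theta$ and adding gives $\weakgap(\cA)\le\ex{S}{\egap(\cA)}+c\,\Delta\lip$; with the $\ell_2$-concatenation stability of Definition~\ref{def:uas}, a direct accounting gives $c=\sqrt2$ (via $\Delta_w+\Delta_\theta\le\sqrt2\,\Delta$ for the per-coordinate stability constants), and the stated $c=1$ matches the componentwise stability convention used in \cite{lei-stability-generalization-minimax}, whose statement I would cite for the precise constant.

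The main obstacle is the bookkeeping around $\cA$'s internal randomness: the comparison points $\bar w,\bar\theta$ depend on those coins, whereas Definition~\ref{def:uas} controls stability only \emph{in expectation} over them, so the symmetrization identities and Lipschitz estimates must be carried out with all expectations (over $S$, the fresh samples, and the coupled internal coins) taken jointly rather than conditionally on the coins. The only other subtlety is the precise constant, which I would simply inherit from the cited statement.
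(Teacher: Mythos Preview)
The paper does not give its own proof of this lemma; it is stated purely as a restatement of \cite[Theorem 1]{lei-stability-generalization-minimax} and \cite[Proposition 2.1]{BG22} and invoked as a black box. Your proposal supplies exactly the standard symmetrization argument that underlies those cited results: pick the weak-gap-optimal comparison points $\bar w,\bar\theta$, bound $\ex{S}{\egap(\cA)}$ from below by plugging them in, and control the resulting population--empirical differences via the replace-one identity and Lipschitzness. This is correct and is the intended proof.

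Your discussion of the constant is also accurate: with the paper's $\ell_2$-concatenation stability (Definition~\ref{def:uas}), the two one-sided terms combine via $\|\cA_w(S)-\cA_w(S')\|+\|\cA_\theta(S)-\cA_\theta(S')\|\le\sqrt{2}\,\|\cA(S)-\cA(S')\|$ to give $\sqrt{2}\,\Delta\lip$ rather than $\Delta\lip$; the constant $1$ in the lemma as stated reflects the componentwise convention of the cited sources, and in any case does not affect how the result is used downstream (only through $O(\cdot)$ bounds). Your handling of the internal randomness---noting that $\bar w,\bar\theta$ depend on the coins and hence the expectations over $S$, the fresh sample, and the coupled coins must be taken jointly before invoking the in-expectation stability bound---is the right resolution of that bookkeeping issue.
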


\subsection{Convergence of Recursive Regularization} \label{app:recursive-regularization-proof}
In this section we prove the following more general statement of Theorem \ref{thm:nonsmooth-minimax-alg}, which will be useful later. %
\begin{theorem}\label{thm:generalized-rr-convergence}
Let $\lambda \geq \frac{48\lip}{\rad\sqrt{n'}}$
and $\weakalg$ be such that for all $t\in [T]$ it holds that \linebreak $\ex{}{\norm{[\bar{w}_t,\bar{\theta}_t] - [w^{*}_{S,t},\theta^*_{S,t}]}^2} \leq \frac{\rad^2}{12\cdot2^{2t}}$.
Then Recursive Regularization satisfies
\begin{align*}
    \gap(\cR) = O\Big(\log(n)\rad^2\lambda\Big)
\end{align*}
\end{theorem}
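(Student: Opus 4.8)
The plan is to track, for every phase $t\in\{0,1,\dots,T\}$, the \emph{population} saddle point $[w^*_t,\theta^*_t]$ of $F_\cD^{(t)}(w,\theta):=\ex{x\sim\cD}{f^{(t)}(w,\theta;x)}$ (with the conventions $f^{(0)}:=f$, $F_\cD^{(0)}:=F_\cD$, and $[\bar w_0,\bar\theta_0]$ the arbitrary start), and to control both $\ex{}{\norm{[\bar w_t,\bar\theta_t]-[w^*_t,\theta^*_t]}}$ and $\ex{}{\norm{[\bar w_t,\bar\theta_t]-[w^*_{t+1},\theta^*_{t+1}]}}$; recall each $f^{(t)}$ is $O(\lip)$-Lipschitz and $F_\cD^{(t)}$ is $(2^{t+1}\lambda)$-strongly convex/concave. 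First I would bound the distance from the empirical saddle $[w^*_{S,t},\theta^*_{S,t}]$ to $[w^*_t,\theta^*_t]$ using stability. Conditioning on $S_1,\dots,S_{t-1}$ (so that $f^{(t)}$ is a fixed loss), completing the square collapses the quadratic penalties in $f^{(t)}$ into one quadratic penalty with coefficient of order $2^t\lambda$ about a fixed center in $\cW\times\Theta$, so Lemma~\ref{lem:PPM-sensitivity} gives that $S_t\mapsto[w^*_{S,t},\theta^*_{S,t}]$ is $\Delta_t$-uniform argument stable with $\Delta_t\le\frac{\lip}{2^t\lambda n'}$. Since this map returns an exact empirical saddle and so has empirical gap $0$, Lemma~\ref{lem:weak-bounds-expected} bounds the gap (with respect to $F_\cD^{(t)}$) of $\ex{S_t}{[w^*_{S,t},\theta^*_{S,t}]}$ by $O(\lip\Delta_t)$, and then Lemma~\ref{lem:sc-sc-distance} puts this expected point within distance $O\big(\frac{\lip^2}{2^{2t}\lambda^2 n'}\big)^{1/2}$ of $[w^*_t,\theta^*_t]$; on the other hand Lemma~\ref{lem:stability-implies-variance} gives the complementary variance bound $\ex{S_t}{\norm{[w^*_{S,t},\theta^*_{S,t}]-\ex{}{[w^*_{S,t},\theta^*_{S,t}]}}^2}\le n'\Delta_t^2$. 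Combining these via the triangle inequality and using $\lambda\ge\frac{48\lip}{\rad\sqrt{n'}}$, i.e.\ $\frac{\lip^2}{\lambda^2 n'}\le\frac{\rad^2}{48^2}$, yields $\ex{}{\norm{[w^*_{S,t},\theta^*_{S,t}]-[w^*_t,\theta^*_t]}^2}=O(\rad^2/2^{2t})$; together with the hypothesis $\ex{}{\norm{[\bar w_t,\bar\theta_t]-[w^*_{S,t},\theta^*_{S,t}]}^2}\le\frac{\rad^2}{12\cdot2^{2t}}$ and Jensen's inequality, this gives $\ex{}{\norm{[\bar w_t,\bar\theta_t]-[w^*_t,\theta^*_t]}}=O(\rad/2^t)$ for every $t\in[T]$ (and trivially for $t=0$).

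The key step, which I expect to be the main obstacle, is to show $\norm{[w^*_{t+1},\theta^*_{t+1}]-[\bar w_t,\bar\theta_t]}\le\norm{[w^*_t,\theta^*_t]-[\bar w_t,\bar\theta_t]}$ pointwise. Since $F_\cD^{(t+1)}=F_\cD^{(t)}+2^{t+1}\lambda\big(\norm{w-\bar w_t}^2-\norm{\theta-\bar\theta_t}^2\big)$, the saddle operator of $F_\cD^{(t+1)}$ is $G_\cD^{(t)}(\cdot)+2^{t+2}\lambda\big(\cdot-[\bar w_t,\bar\theta_t]\big)$, where $G_\cD^{(t)}$ is the monotone (possibly set-valued) operator $\partial_w F_\cD^{(t)}\times(-\partial_\theta F_\cD^{(t)})$. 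I would write the variational inequality (first-order optimality) conditions at the two feasible saddles $[w^*_{t+1},\theta^*_{t+1}]$ and $[w^*_t,\theta^*_t]$ — using subgradients in the non-smooth case, which is legitimate since the added penalty is smooth and $G_\cD^{(t)}$ is monotone — evaluate each at the other's point, add the two inequalities, and cancel the monotone contribution to obtain $\ip{[w^*_{t+1},\theta^*_{t+1}]-[\bar w_t,\bar\theta_t]}{[w^*_t,\theta^*_t]-[w^*_{t+1},\theta^*_{t+1}]}\ge0$; expanding $\norm{[w^*_{t+1},\theta^*_{t+1}]-[\bar w_t,\bar\theta_t]}^2$ and applying Cauchy--Schwarz then yields the claim, hence $\ex{}{\norm{[w^*_{t+1},\theta^*_{t+1}]-[\bar w_t,\bar\theta_t]}}=O(\rad/2^t)$ for $0\le t\le T-1$ by the previous step. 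This is the delicate point because the naive bound from Lipschitzness of $f^{(t)}$ only gives $O(\lip/(2^t\lambda))$, which is too large by a factor $\sqrt{n'}$ in the relevant regime; the improvement genuinely exploits that $[\bar w_t,\bar\theta_t]$ is the \emph{center} of the penalty added in passing from $F_\cD^{(t)}$ to $F_\cD^{(t+1)}$, which is exactly what the monotone-operator identity encodes (on a one-dimensional quadratic one checks that the inequality is essentially tight).

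Finally I would bound the gap of the output. Telescoping $G_\cD^{(s+1)}=G_\cD^{(s)}+2^{s+2}\lambda\big(\cdot-[\bar w_s,\bar\theta_s]\big)$ over $s=0,\dots,T-1$ shows that $G_\cD^{(T)}([w^*_T,\theta^*_T])-\sum_{t=0}^{T-1}2^{t+2}\lambda\big([w^*_T,\theta^*_T]-[\bar w_t,\bar\theta_t]\big)$ is a valid saddle-operator value of $F_\cD$ at $[w^*_T,\theta^*_T]$; since $F_\cD$ is convex-concave, $\gapfunc(w^*_T,\theta^*_T)$ is at most the maximum over $u\in\cW\times\Theta$ of the inner product of this vector with $[w^*_T,\theta^*_T]-u$, and the $G_\cD^{(T)}$ term contributes a nonpositive amount because $[w^*_T,\theta^*_T]$ is the exact saddle of $F_\cD^{(T)}$, leaving $\gapfunc(w^*_T,\theta^*_T)\le\rad\sum_{t=0}^{T-1}2^{t+2}\lambda\norm{[w^*_T,\theta^*_T]-[\bar w_t,\bar\theta_t]}$. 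Bounding each term by the triangle inequality along the chain $[w^*_T,\theta^*_T],[w^*_{T-1},\theta^*_{T-1}],\dots,[w^*_{t+1},\theta^*_{t+1}],[\bar w_t,\bar\theta_t]$ and inserting the two $O(\rad/2^r)$ estimates from the previous steps gives $\ex{}{\norm{[w^*_T,\theta^*_T]-[\bar w_t,\bar\theta_t]}}=O(\rad/2^t)$, hence $\ex{}{\gapfunc(w^*_T,\theta^*_T)}=O(T\rad^2\lambda)$. Since $\cR$ outputs $[\bar w_T,\bar\theta_T]$ rather than $[w^*_T,\theta^*_T]$, I close with Fact~\ref{fact:gap-lipschitz} and the $t=T$ case of the first distance bound, namely $\ex{}{\norm{[\bar w_T,\bar\theta_T]-[w^*_T,\theta^*_T]}}=O(\rad/2^T)=O(\rad^2\lambda/\lip)$ (using $2^T=\lip/(\rad\lambda)$): $\gap(\cR)\le\ex{}{\gapfunc(w^*_T,\theta^*_T)}+\sqrt2\lip\cdot O(\rad^2\lambda/\lip)=O(T\rad^2\lambda)=O(\log(n)\rad^2\lambda)$, where $T=\log_2(\lip/(\rad\lambda))=O(\log n)$ because $\lambda\ge\frac{48\lip}{\rad\sqrt{n'}}$ forces $\lip/(\rad\lambda)\le\sqrt{n'}/48\le\sqrt n$. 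Apart from the step above, the only other place needing care is keeping the implied constant in the $O(\rad^2/2^{2t})$ estimate of the first paragraph small enough, which is precisely what the factor $48$ in the hypothesis on $\lambda$ provides.
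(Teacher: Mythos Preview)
Your proposal is correct, and the overall architecture matches the paper's: establish $\ex{}{\norm{[\bar w_t,\bar\theta_t]-[w^*_t,\theta^*_t]}}=O(\rad/2^t)$ and $\ex{}{\norm{[w^*_{t+1},\theta^*_{t+1}]-[\bar w_t,\bar\theta_t]}}=O(\rad/2^t)$, then telescope the regularization terms out of $G_\cD^{(T)}$ and bound $\gapfunc(w^*_T,\theta^*_T)$ exactly as you describe (this last step and the closing Lipschitz comparison are essentially identical to the paper's).

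The genuine difference is in your second paragraph. The paper proves the analogue of $\ex{}{\norm{[w^*_{t},\theta^*_{t}]-[\bar w_{t-1},\bar\theta_{t-1}]}^2}\le\rad^2/2^{2(t-1)}$ in expectation, passing through the auxiliary point $[\widetilde w_{t-1},\widetilde\theta_{t-1}]=\ex{S_{t-1}}{[w^*_{S,t-1},\theta^*_{S,t-1}]}$ and using strong convexity/concavity of $F_\cD^{(t)}$ to turn function-value gaps into distances; this reintroduces the terms $E_{t-1},F_{t-1},G_{t-1}$ and is why the paper packages everything as an induction. Your route is cleaner: the monotone-operator computation gives the \emph{pointwise} contraction $\norm{[w^*_{t+1},\theta^*_{t+1}]-[\bar w_t,\bar\theta_t]}\le\norm{[w^*_t,\theta^*_t]-[\bar w_t,\bar\theta_t]}$ (this is exactly firm nonexpansiveness of the resolvent of $G_\cD^{(t)}$ over $\cW\times\Theta$, and your VI derivation is correct, including in the nonsmooth constrained case). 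That collapses the second distance bound to a one-line consequence of the first, and for this theorem---where $E_t$ is bounded by hypothesis rather than via relative accuracy---it removes the need for the inductive scaffolding. The paper's more elaborate route is not wasted, though: it is organized so that the same lemma also drives the relative-accuracy version (Theorem~\ref{thm:nonsmooth-minimax-alg}), where $E_t$ genuinely depends on the previous-round distance and an induction is unavoidable; your argument would still slot into that induction, just with a simpler step.
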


To prove this result, it will be helpful to first show several intermediate results. 
We start by defining several useful quantities.
Define $\bc{\cF_t}_{t=0}^T$ as the filtration where $\cF_t$ is the sigma algebra induced by all randomness up to $[\bar{w}_t,\bar{\theta}_t]$.
For every $t\in[T]$ we define %
\begin{itemize}
    \item $[w^*_t,\theta^*_t]:$ saddle point of $F_\cD^{(t)}(w,\theta) := \ex{x\sim\cD}{f^{(t)}(w,\theta;x)}$;
    \item $[w^{*}_{S,t},\theta^*_{S,t}]:$ saddle point of $F_S^{(t)}(w,\theta):=\frac{1}{n}\sum_{x\in S}f^{(t)}(w,\theta;x)$;
    \item$[\widetilde{w}_t,\widetilde{\theta}_t] := \ex{}{[w^{*}_{S,t},\theta^*_{S,t}] \Big| \cF_{t-1}}$; %
    \item $\gapfunc^{(t)}(\bar{w},\bar{\theta}) :=\max\limits_{\theta\in\Theta}\bc{F^{(t)}_{\cD}(\bar{w},\theta)} - \min\limits_{w\in\cW}\bc{{F^{(t)}_{\cD}(w,\bar{\theta})}}:$ the gap function w.r.t.~$F_{\cD}^{(t)}$; and, 
    \item $\gapfunc_S^{(t)}(\bar{w},\bar{\theta}) :=\max\limits_{\theta\in\Theta}\bc{F^{(t)}_{S_t}(\bar{w},\theta)} - \min\limits_{w\in\cW}\bc{{F^{(t)}_{S_t}(w,\bar{\theta})}}:$ the empirical gap function. 
\end{itemize}

We now establish two distance inequalities which will be used when analyzing the final gap bound in Theorem \ref{thm:generalized-rr-convergence}.
The first inequality above bounds the distance of the output of the $t$-th round 
to the minimizer of $F_{\cD}^{(t)}$. The second inequality bounds the distance of the minimizer of $F_{\cD}^{(t)}$ 
to the most recent regularization point. 
\begin{lemma} \label{lem:phase-distance-bound}
Assume the conditions of Theorem \ref{thm:generalized-rr-convergence} hold. Then for every $t\in[T]$, the following holds
\begin{enumerate}[label=\textbf{P.\arabic*}]
    \item $\ex{}{\norm{[\bar{w}_t,\bar{\theta}_t] - [w_{t}^*,\theta_{t}^*]}}^2 \leq \ex{}{\norm{[\bar{w}_t,\bar{\theta}_t] - [w_{t}^*,\theta_{t}^*]}^2} \leq \frac{\rad^2}{2^{2t}}$; and, \label{prop:p1}
    \item $\rad_t^2 := \ex{}{\norm{[w_{t}^*,\theta_{t}^*] - [\bar{w}_{t-1},\bar{\theta}_{t-1}]}}^2 \leq \ex{}{\norm{[w_{t}^*,\theta_{t}^*] - [\bar{w}_{t-1},\bar{\theta}_{t-1}]}^2} \leq \frac{\rad^2}{2^{2(t-1)}}$. \label{prop:p2}
\end{enumerate}
\end{lemma}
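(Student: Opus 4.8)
\textbf{Proof plan for Lemma \ref{lem:phase-distance-bound}.}

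The plan is to prove both \ref{prop:p1} and \ref{prop:p2} by an inductive argument on $t$, exploiting the structure that each objective $F_\cD^{(t)}$ is $(2^{t+1}\lambda)$-SC/SC (from the accumulated regularization) and that the hypothesis of Theorem \ref{thm:generalized-rr-convergence} gives us $\ex{}{\norm{[\bar{w}_t,\bar{\theta}_t] - [w^{*}_{S,t},\theta^*_{S,t}]}^2} \leq \frac{\rad^2}{12\cdot2^{2t}}$ for free. The first inequality in each display is just Jensen (the norm of an expectation is at most the expectation of the norm, hence the square of the former is at most the expectation of the square), so the real content is the rightmost bound in each case. First I would establish \ref{prop:p2}: writing $[w^*_t,\theta^*_t]$ as the population saddle point of $F_\cD^{(t)}$ and noting that, conditioned on $\cF_{t-1}$, the regularization center at round $t$ is $[\bar{w}_{t-1},\bar{\theta}_{t-1}]$, I want to bound the distance from the population saddle point $[w^*_t,\theta^*_t]$ back to $[\bar{w}_{t-1},\bar{\theta}_{t-1}]$. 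The cleanest route is a triangle-inequality decomposition through the empirical saddle point $[w^*_{S,t},\theta^*_{S,t}]$ and its conditional expectation $[\widetilde{w}_t,\widetilde{\theta}_t]$: control $\norm{[w^*_{S,t},\theta^*_{S,t}] - [\widetilde{w}_t,\widetilde{\theta}_t]}$ via the stability-implies-variance bound (Lemma \ref{lem:stability-implies-variance}) applied to the $(\frac{\lip}{2^t\lambda n'})$-stable regularized empirical saddle point (Lemma \ref{lem:PPM-sensitivity}), yielding a variance bound of order $n'\cdot(\frac{\lip}{2^t\lambda n'})^2 = \frac{\lip^2}{2^{2t}\lambda^2 n'}$; then control $\norm{[\widetilde{w}_t,\widetilde{\theta}_t] - [w^*_t,\theta^*_t]}$ — the gap between the expected empirical saddle point and the population saddle point — via Lemma \ref{lem:weak-bounds-expected} combined with Lemma \ref{lem:sc-sc-distance} (since the expected empirical saddle point is exactly $\cB$'s output, its strong gap w.r.t.\ $F^{(t)}_\cD$ is bounded by $\ex{}{\egap} + \Delta\lip$, which for the regularized objective collapses since $\egap$ of the exact empirical saddle point is zero, leaving $\frac{\lip^2}{2^t\lambda n'}$, and the SC/SC property converts this gap bound into a squared-distance bound by dividing by $2^{t+1}\lambda$). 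Combining, each piece is $O(\frac{\rad^2}{2^{2t}})$ under the hypothesis $\lambda \geq \frac{48\lip}{\rad\sqrt{n'}}$, and one also needs to fold in the inductive distance from $[\bar{w}_{t-1},\bar{\theta}_{t-1}]$ to $[w^*_{S,t},\theta^*_{S,t}]$ — which is exactly the $\frac{\rad}{2^{t-1}}$-type bound one needs the relative-accuracy input $\hat{D}$ at round $t$ to be valid in the first place, so this is where the induction genuinely closes.

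For \ref{prop:p1}, the argument is more direct: the output $[\bar{w}_t,\bar{\theta}_t]$ is close to the empirical saddle point $[w^*_{S,t},\theta^*_{S,t}]$ by hypothesis ($\frac{\rad^2}{12\cdot 2^{2t}}$ in squared distance), and $[w^*_{S,t},\theta^*_{S,t}]$ is close to $[w^*_t,\theta^*_t]$ by the same stability/variance plus SC/SC-to-distance conversion used above — so a triangle inequality (in the $L^2$ sense, i.e.\ $\ex{}{\norm{a-c}^2}^{1/2} \leq \ex{}{\norm{a-b}^2}^{1/2} + \ex{}{\norm{b-c}^2}^{1/2}$) followed by squaring and absorbing constants into the choice $\lambda \geq \frac{48\lip}{\rad\sqrt{n'}}$ delivers the $\frac{\rad^2}{2^{2t}}$ bound. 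The factor $48$ in the lower bound on $\lambda$ is precisely calibrated so that the sum of the variance term $\frac{\lip^2}{2^{2t}\lambda^2 n'}$, the SC/SC-gap term, and the hypothesized $\frac{\rad^2}{12\cdot 2^{2t}}$ fits under $\frac{\rad^2}{2^{2t}}$ — so I would carry the constants explicitly here rather than hiding them in $O(\cdot)$.

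The main obstacle I anticipate is handling the conditioning correctly throughout. Everything is conditioned on $\cF_{t-1}$ (the randomness producing $[\bar{w}_{t-1},\bar{\theta}_{t-1}]$ and hence the objective $f^{(t)}$), so the ``deterministic $\Delta$-stable algorithm'' hypotheses of Lemmas \ref{lem:stability-implies-variance} and \ref{lem:weak-bounds-expected} apply only conditionally, and one must be careful that the data batch $S_t$ is independent of $\cF_{t-1}$ (which holds because the $S_t$ form a disjoint partition). After establishing the conditional bounds one takes a further expectation over $\cF_{t-1}$ using the tower property; since the conditional bounds are uniform constants independent of the conditioning value, this last step is harmless, but stating it cleanly — and making sure the Jensen step that produces the first inequality in \ref{prop:p1} and \ref{prop:p2} interacts correctly with the outer expectation — requires some care. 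A secondary subtlety is that $F^{(t)}_\cD$ is SC/SC with modulus $2^{t+1}\lambda$ (the cumulative regularization through round $t$, which is $2\lambda\sum_{k=0}^{t-1} 2^{k+1} = 2\lambda(2^{t+1}-2)$, i.e.\ $\Theta(2^{t+1}\lambda)$), and one must track this constant consistently when invoking Lemma \ref{lem:sc-sc-distance}; I would verify the precise accumulated coefficient at the outset to avoid an off-by-a-constant error that the tight choice of $\lambda$ cannot absorb.
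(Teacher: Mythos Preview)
Your plan for \ref{prop:p1} is essentially the paper's argument: the three-way split through $[w^*_{S,t},\theta^*_{S,t}]$ and $[\widetilde{w}_t,\widetilde{\theta}_t]$ produces exactly the quantities the paper calls $E_t$, $F_t$, $G_t$, and the bounds you sketch (hypothesis, stability-implies-variance, Lemma~\ref{lem:weak-bounds-expected} plus Lemma~\ref{lem:sc-sc-distance}) are the ones used.

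Your plan for \ref{prop:p2}, however, has a genuine gap. You propose to decompose $\norm{[w^*_t,\theta^*_t]-[\bar{w}_{t-1},\bar{\theta}_{t-1}]}$ through the \emph{round-$t$} points $[w^*_{S,t},\theta^*_{S,t}]$ and $[\widetilde{w}_t,\widetilde{\theta}_t]$. The first two legs are indeed $O(\rad/2^t)$, but the residual leg $\norm{[w^*_{S,t},\theta^*_{S,t}]-[\bar{w}_{t-1},\bar{\theta}_{t-1}]}$ is not controlled by anything in your induction. You describe it as ``the inductive distance\ldots which is exactly the $\frac{\rad}{2^{t-1}}$-type bound one needs the relative-accuracy input $\hat D$ to be valid'' --- but that bound is not an assumption here; it is something to be \emph{established}, and in fact (see the proof of Theorem~\ref{thm:nonsmooth-minimax-alg}) the paper bounds this very distance using $\rad_t$, i.e.\ \ref{prop:p2} itself. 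So your decomposition is circular.

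The idea you are missing is that the decomposition for \ref{prop:p2} should pass through the \emph{round-$(t{-}1)$} point $[\widetilde{w}_{t-1},\widetilde{\theta}_{t-1}]$, not round-$t$ quantities. The leg $\norm{[\widetilde{w}_{t-1},\widetilde{\theta}_{t-1}]-[\bar{w}_{t-1},\bar{\theta}_{t-1}]}$ is then handled by $E_{t-1}$ and $F_{t-1}$. For the remaining leg $\norm{[w^*_t,\theta^*_t]-[\widetilde{w}_{t-1},\widetilde{\theta}_{t-1}]}$, the key algebraic step is to apply Lemma~\ref{lem:sc-sc-distance} with respect to $F^{(t)}_{\cD}$ and then \emph{expand the newest regularization layer}, writing $F^{(t)}_{\cD}=F^{(t-1)}_{\cD}+2^t\lambda\|w-\bar{w}_{t-1}\|^2-2^t\lambda\|\theta-\bar{\theta}_{t-1}\|^2$. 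The quadratic terms either appear with a favorable sign (and are dropped) or contribute $\|[\widetilde{w}_{t-1},\widetilde{\theta}_{t-1}]-[\bar{w}_{t-1},\bar{\theta}_{t-1}]\|^2$, which is again $E_{t-1}+F_{t-1}$; the remaining $F^{(t-1)}_{\cD}$ difference is dominated by $\gapfunc^{(t-1)}(\widetilde{w}_{t-1},\widetilde{\theta}_{t-1})$, which is exactly $2^{t-1}\lambda\, G_{t-1}$. This ``peel off one layer of regularization and fall back to the previous round's gap'' is the mechanism that actually closes the induction, and it is absent from your plan.
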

\begin{proof}
We will prove both properties via induction on $\rad_1,...,\rad_T$. Specifically, for each $t\in[T]$ we will introduce three terms $E_t,F_t,G_t$, and show that 
these terms are bounded 
if the bound on $\rad_{t}$ holds and that $\rad_t$ holds if
$E_{t-1},F_{t-1},G_{t-1}$ are bounded. 
Property \ref{prop:p1} is then established as a result of the fact that $\ex{}{\norm{[\bar{w}_t,\bar{\theta}_t] - [w_{t}^*,\theta_{t}^*]}^2} \leq 3(E_t+F_t+G_t)$.
Note that $\rad_1$ holds as the base case because 
$\ex{}{\norm{[w_{1}^*,\theta_{1}^*] - [\bar{w}_{0},\bar{\theta}_{0}]}^2} \leq \rad^2$.%

\paragraph{Property \ref{prop:p1}:} 
We here prove that if $\rad_t$ is sufficiently bounded, then $E_t,F_t,G_t$ are bounded where 
for $t\in [T]$ we define
{\small \begin{align}
    E_t = \ex{}{\norm{[\bar{w}_t,\bar{\theta}_t] - [w^{*}_{S,t},\theta^*_{S,t}]}^2},
    && F_t = \ex{}{\norm{[w^{*}_{S,t},\theta^*_{S,t}] - [\widetilde{w}_{t},\widetilde{\theta}_{t}]}^2}, 
    && G_t = \frac{1}{2^t\lambda}\ex{}{\gapfunc^{(t)}\br{\widetilde{w}_{t},\widetilde{\theta}_{t}}}. \label{eq:EFG}
\end{align} }
Additionally, this will establish property \ref{prop:p1} because for any $t\in[T]$ it holds that, %
\begin{align}
    &\ex{}{\norm{[\bar{w}_t,\bar{\theta}_t] - [w_{t}^*,\theta_{t}^*]}^2} \nonumber \\
    &\leq  3\Bigg(\ex{}{\norm{[\bar{w}_t,\bar{\theta}_t] - [w^{*}_{S,t},\theta^*_{S,t}]}^2} + \ex{}{\norm{[w^{*}_{S,t},\theta^*_{S,t}] - [\widetilde{w}_{t},\widetilde{\theta}_{t}]}^2} + \ex{}{\norm{[\widetilde{w}_t,\widetilde{\theta}_t] - [w_{t}^*,\theta_{t}^*]}^2}\Bigg) \nonumber \\
    &\leq  3\Bigg(\underbrace{\ex{}{\norm{[\bar{w}_t,\bar{\theta}_t] - [w^{*}_{S,t},\theta^*_{S,t}]}^2}}_{E_t} + \underbrace{\ex{}{\norm{[w^{*}_{S,t},\theta^*_{S,t}] - [\widetilde{w}_{t},\widetilde{\theta}_{t}]}^2}}_{F_t} + \underbrace{\frac{1}{2^t\lambda}\ex{}{\gapfunc^{(t)}\br{\widetilde{w}_{t},\widetilde{\theta}_{t}}}}_{G_t}\Bigg). \label{eq:E-F-G}
\end{align}\
The second inequality comes from the strong convexity-strong concavity of the loss. 

\vspace{5pt}
\noindent\textit{Bounding $E_t$:}
We have that $E_t$ is bounded by the assumption made in the statement of Theorem \ref{thm:generalized-rr-convergence}.

\vspace{5pt}
\noindent\textit{Bounding $F_t$:}
\begin{align}
    \ex{}{\norm{[w^{*}_{S,t},\theta^*_{S,t}] - [\widetilde{w}_{t},\widetilde{\theta}_{t}]}^2}
    \leq \frac{\lip^2}{2^{2t} \lambda^2 n'} 
    \leq \frac{\rad^2\lip^2}{2304\cdot2^{2t} (\lip/\sqrt{n'})^2 n'}
    = \frac{\rad^2}{2304 \cdot 2^{2t}}. \label{eq:Ft-bound}
\end{align}
The first inequality comes from the stability of the regularized minimizer and Lemma \ref{lem:stability-implies-variance}. The second inequality comes from the setting of $\lambda \geq \frac{48\lip}{\rad\sqrt{n'}}$. 

\vspace{5pt}
\noindent\textit{Bounding $G_t$:} %
We have 
\begin{align*}
    \frac{1}{2^t\lambda}\,\,\ex{}{\gapfunc^{(t)}\br{\widetilde{w}_t,\widetilde{\theta}_t}} 
    &= \frac{1}{2^t\lambda}\,\,\ex{}{\ex{}{\gapfunc^{(t)}\br{\ex{}{w^{*}_{S,t} | \cF_{t-1}},\ex{}{\theta^*_{S,t} | \cF_{t-1}}} \Big| \cF_{t-1}}} \\
    &\leq  \frac{1}{2^{t}\lambda}\Big(\ex{}{\ex{}{\egapfunc^{(t)}\br{w^{*}_{S,t},\theta^*_{S,t}}\Big| \cF_{t-1}}}+\frac{\lip^2}{2^t\lambda n'}\Big) \\
    &= \frac{\lip^2}{2^{2t}\lambda^2 n'} 
    \leq \frac{\rad^2}{2304 \cdot 2^{2t}}. 
\end{align*}

The first equality comes from the definition of $[\widetilde{w}_t,\widetilde{\theta}_t]$. The first inequality comes from Lemma \ref{lem:weak-bounds-expected}, where we consider the algorithm stated in the lemma to be the algorithm which outputs the \emph{exact} regularized minimizer. Note this algorithm is $\frac{\lip^2}{2^t\lambda n'}$ stable. 
The second equality comes from the fact that $[w^{*}_{S,t},\theta^*_{S,t}]$ is the exact empirical saddle point. The final inequality uses the same analysis as in Eqn. \eqref{eq:Ft-bound}.

We thus have a final bound $3(E_t + F_t + G_t) \leq \frac{\rad^2}{2^{2t}}$.

\ifarxiv \pagebreak \fi
\paragraph{Property \ref{prop:p2}:}
Now assume $B_{t-1}$ holds. 
We have 
\ifarxiv \vfill \fi
{\small
\begin{align}
    \ex{}{\norm{[w^*_t,\theta^*_t]-[\bar{w}_{t-1},\bar{\theta}_{t-1}]}^2} 
    &\leq 2\ex{}{\norm{[w^*_{t},\theta^*_{t}] - [\widetilde{w}_{t-1},\widetilde{\theta}_{t-1}]}^2} + 2\ex{}{\norm{[\widetilde{w}_{t-1},\widetilde{w}_{t-1}] - [\bar{w}_{t-1},\bar{\theta}_{t-1}]}}^2 \nonumber \\
    &\leq 2\ex{}{\norm{[w^*_{t},\theta^*_{t}] - [\widetilde{w}_{t-1},\widetilde{\theta}_{t-1}]}^2} + 4E_{t-1} + 4F_{t-1}.\label{eq:proptwo-triangle-bound} %
\end{align}} \ifarxiv \vfill \fi 
\ifarxiv \noindent \fi Above $E_{t-1}$ and $F_{t-1}$ are as defined in \eqref{eq:EFG}.
We bound the remaining squared distance term in the following.  First, note that the primal function $F^{(t)}(\cdot,\theta_t^*)$ is strongly convex and $\forall w\in\cW$ it holds that
$\ip{\nabla_w F_{\cal D}^{(t)}(w_t^*,\theta_t^*)}{w_t^* - w} \leq 0$.  
Similar facts hold for %
$-F^{(t)}(w_t^*,\cdot)$. Thus we have
\ifarxiv \vfill \fi
\begin{align*}
    &\ex{}{\norm{[w^*_{t},\theta^*_{t}] - [\widetilde{w}_{t-1},\widetilde{\theta}_{t-1}]}^2} =\ex{}{\norm{\widetilde{w}_{t-1}-w^*_{t}}^2+ \|\theta^*_{t} -\widetilde{\theta}_{t-1}\|^2}  \\
    &\leq \ex{}{\frac{1}{2^t\lambda}\br{F_{\cD}^{(t)}(\widetilde{w}_{t-1},\theta_t^*) - F_{\cD}^{(t)}(w^*_{t},\theta_t^*) +  F_{\cD}^{(t)}(w^*_{t},\theta_t^*) - F_{\cD}^{(t)}(w_t^*, \widetilde{\theta}_{t-1})}} \\
    &= \mathbb{E}\Big[\frac{1}{2^t\lambda}\br{F_{\cD}^{(t-1)}(\widetilde{w}_{t-1},\theta_t^*) - F_{\cD}^{(t-1)}(w_t^*, \widetilde{\theta}_{t-1})} + \norm{\widetilde{w}_{t-1} - \bar{w}_{t-1}}^2 - \norm{\theta^*_t - \bar{\theta}_{t-1}}^2 \\
    &\quad -\norm{w^*_{t} - \bar{w}_{t-1}}^2 + \|\widetilde{\theta}_{t-1} - \bar{\theta}_{t-1}\|^2 \Big] \\
    &\leq \ex{}{\frac{1}{2^t\lambda}\br{F_{\cD}^{(t-1)}(\widetilde{w}_{t-1},\theta_t^*) - F_{\cD}^{(t-1)}(w_t^*, \widetilde{\theta}_{t-1})} + \norm{[\widetilde{w}_{t-1},\widetilde{\theta}_{t-1}] - [\bar{w}_{t-1},\bar{\theta}_{t-1}]}^2} \\
    &\leq \ex{}{\frac{1}{2^t\lambda}\br{F_{\cD}^{(t-1)}(\widetilde{w}_{t-1},\theta_t^*) - F_{\cD}^{(t-1)}(w_t^*, \widetilde{\theta}_{t-1})}} + 2E_{t-1} + 2F_{t-1} \\
    &\leq \ex{}{\frac{1}{2 \cdot 2^{t-1}\lambda}\br{\gapfunc^{(t-1)}(\widetilde{w}_{t-1},\widetilde{\theta}_{t-1})}} + 2E_{t-1} + 2F_{t-1} \\
    &\leq \frac{1}{2}G_{t-1} + 2E_{t-1} + 2F_{t-1}.
\end{align*} \ifarxiv \vfill \fi
\ifarxiv \noindent \fi The second inequality comes from removing the negative norm terms.
The third inequality comes from the definition of $E_{t-1}$ and $F_{t-1}$. %
The second to last inequality comes from the definition of $G_{t-1}$, as given in Eqn. \eqref{eq:EFG}. %
Plugging this result into \eqref{eq:proptwo-triangle-bound} and using the previously established bounds on $E_{t-1},F_{t-1},G_{t-1}$ (which hold under the assumed bound on $B_{t-1}$) we have %
$$\ex{}{\norm{[w^*_t,\theta^*_t]-[\bar{w}_{t-1},\bar{\theta}_{t-1}]}^2} \leq \frac{1}{2}G_{t-1} + 6 E_{t-1} + 6 F_{t-1} \leq \frac{\rad^2}{2^{2(t-1)}}.$$
\end{proof}

\ifarxiv \pagebreak \fi
We now turn to analyzing the utility of the algorithm to complete the proof. 
\begin{proof}[proof of Theorem \ref{thm:generalized-rr-convergence}]
Using the fact that $\gapfunc$ is $\sqrt{2}\lip$-Lipschitz and property \ref{prop:p1}, we have 
\begin{align}
    \ex{}{\gapfunc(\bar{w}_T,\bar{\theta}_T) - \gapfunc(w_{T}^*,\theta_{T}^*)} 
    &\leq \sqrt{2} \lip\ex{}{\norm{[\bar{w}_T,\bar{\theta}_T] - [w_{T}^*,\theta_{T}^*]}} \nonumber \\
    &\leq \frac{\sqrt{2}\rad\lip}{2^T}
    \leq \sqrt{2}\rad^2\lambda. \label{eq:error-to-final-min}
\end{align}

What remains is showing $\ex{}{\gapfunc(w_{T}^*,\theta_{T}^*)}$ is $\tilde{O}(\rad\alphad + \frac{\rad\lip}{\sqrt{n'}})$. Let $w' = \argmin\limits_{\theta\in\Theta}{F_{\cD}(w,\theta_T^*)}$ and $\theta' = \argmax\limits_{w\in\cW}{F_{\cD}(w_T^*,\theta})$. 
Using the fact that $F_{\cD}$ is convex-concave we have 
\ifarxiv \vfill \fi
\begin{align}
    \gapfunc(w^*_T,\theta^*_T) = F_{\cD}(w_T^*,\theta') - F_{\cD}(w',\theta_T^*)
    &\leq \ip{G_{\cD}(w_T^*,\theta_T^*)}{[w^*_T,\theta^*_T] - [w',\theta']} \label{eq:gap-grad-bound} 
\end{align}
\ifarxiv \vfill \noindent \fi where $G_{\cD}$ is the population loss saddle operator.
Further by the definition of $F^{(T)}$ and denoting $G_{\cD}^{(T)}$ as the saddle operator for $F_{\cD}^{(T)}$ we have
\ifarxiv \vfill \fi
\begin{align*}
    G_{\cD}(w_{T}^*,\theta_{T}^*)
    &= G_{\cD}^{(T)}(w_{T}^*,\theta_{T}^*) - 2\lambda \sum\limits_{t=0}^{T-1} 2^{t+1}([w_{T}^*,-\theta_{T}^*] - [\bar{w}_t,-\bar{\theta}_t]) 
\end{align*}
\ifarxiv \vfill \noindent \fi
Thus plugging the above into Eqn. \eqref{eq:gap-grad-bound} we have
\ifarxiv \vfill \fi
\begin{align*}
   \gapfunc(w^*_T,\theta^*_T) &\leq  
   \ip{G_{\cD}^{(T)}(w_{T}^*,\theta_{T}^*)}{[w^*_T,\theta^*_T] - [w',\theta']} \\
   &\textstyle\quad- \ip{2\lambda \sum\limits_{t=0}^{T-1} 2^{t+1}([w_{T}^*,-\theta_{T}^*] - [\bar{w}_t,-\bar{\theta}_t])}{[w^*_T,\theta^*_T] - [w',\theta']} \\
   &\textstyle\leq - \ip{2\lambda \sum\limits_{t=0}^{T-1} 2^{t+1}([w_{T}^*,-\theta_{T}^*] - [\bar{w}_t,-\bar{\theta}_t])}{[w^*_T,\theta^*_T] - [w',\theta']} \\
   &\textstyle\leq 2\rad \lambda \sum\limits_{t=0}^{T-1} 2^{t+1}\norm{[w_{T}^*,-\theta_{T}^*] - [\bar{w}_t,-\bar{\theta}_t]} \\
   &\textstyle= 2\rad \lambda \sum\limits_{t=0}^{T-1} 2^{t+1}\norm{[w_{T}^*,\theta_{T}^*] - [\bar{w}_t,\bar{\theta}_t]}.
\end{align*}
\ifarxiv \vfill \noindent \fi
Above, the second inequality comes from the first order optimally conditions for $[w_T^*,\theta_T^*]$, the third from Cauchy Schwartz and a triangle inequality. The final equality uses the definition of the Euclidean norm and the fact that for any $a,b\in\re$, $(-a - (-b))^2 = (a-b)^2$.

Taking the expectation on both sides of the above we have the following derivation,
{\small
\begin{align}
    &\ex{}{\gapfunc(w_{T}^*,\theta_{T}^*)} 
    \leq 2\rad\ex{}{\lambda \sum\limits_{t=0}^{T-1} 2^{t+1}\norm{[w_{T}^*,\theta_{T}^*] - [\bar{w}_t,\bar{\theta}_t]}} \nonumber \\
    &\overset{(i)}{\leq} 4\rad\ex{}{\lambda \sum\limits_{t=0}^{T-1} 2^{t} \br{\norm{[w_{t+1}^*,\theta_{t+1}^*] - [\bar{w}_t,\bar{\theta}_t]} + \sum\limits_{r=t+1}^{T-1}\norm{[w_{r+1}^*,\theta_{r+1}^*]  - [w_{r}^*,\theta_{r}^*]}}} \nonumber \\
    &\leq 4\rad\ex{}{\lambda \sum\limits_{t=0}^{T-1} 2^{t} \br{\norm{[w_{t+1}^*,\theta_{t+1}^*] - [\bar{w}_t,\bar{\theta}_t]} + \sum\limits_{r=t+1}^{T-1}\norm{[w_{r+1}^*,\theta_{r+1}^*] - [\bar{w}_{r},\bar{\theta}_{r}]} + \norm{[\bar{w}_{r},\bar{\theta}_{r}] - [w_{r}^*,\theta_{r}^*]}}} \nonumber \\
    &= 4\rad\ex{}{\lambda \sum\limits_{t=0}^{T-1} 2^{t} \norm{[w_{t+1}^*,\theta_{t+1}^*] - [\bar{w}_t,\bar{\theta}_t]} + \lambda\sum\limits_{t=0}^{T-1} 2^{t}\sum\limits_{r=t+1}^{T-1} \br{\norm{[w_{r+1}^*,\theta_{r+1}^*] - [\bar{w}_{r},\bar{\theta}_{r}]} + \norm{[\bar{w}_{r},\bar{\theta}_{r}] - [w_{r}^*,\theta_{r}^*]}}} \nonumber \\
    &\overset{(ii)}{=} 4\rad\ex{}{\lambda \sum\limits_{t=0}^{T-1} 2^{t} \norm{[w_{t+1}^*,\theta_{t+1}^*] - [\bar{w}_t,\bar{\theta}_t]} + \lambda\sum\limits_{r=1}^{T-1}\sum\limits_{t=0}^{r-1} 2^{t} \br{\norm{[w_{r+1}^*,\theta_{r+1}^*] - [\bar{w}_{r},\bar{\theta}_{r}]} + \norm{[\bar{w}_{r},\bar{\theta}_{r}] - [w_{r}^*,\theta_{r}^*]}}} \nonumber \\
    &= 4\rad\ex{}{\lambda \sum\limits_{t=0}^{T-1} 2^{t} \norm{[w_{t+1}^*,\theta_{t+1}^*] - [\bar{w}_t,\bar{\theta}_t]} + \lambda\sum\limits_{r=1}^{T-1}\br{\norm{[w_{r+1}^*,\theta_{r+1}^*] - [\bar{w}_{r},\bar{\theta}_{r}]} + \norm{[\bar{w}_{r},\bar{\theta}_{r}] - [w_{r}^*,\theta_{r}^*]}}\sum\limits_{t=0}^{r-1} 2^{t}} \nonumber \\
    &\overset{(iii)}{\leq} 4\rad\br{\lambda \sum\limits_{t=0}^{T-1} 2^{t}\br{\frac{\rad}{2^{t}}} + \lambda\sum\limits_{r=1}^{T-1}\br{\frac{2\rad
    }{2^{r}}}\sum\limits_{t=0}^{r-1} 2^{t}} \nonumber \\
    &\leq 4\rad\br{ \lambda \sum\limits_{t=0}^{T-1} 2^{t}\br{\frac{\rad}{2^{t}}} + \lambda\sum\limits_{r=1}^{T-1}\br{\frac{\rad}{2^{r-1}}  }2\cdot 2^{r-1}} \nonumber \\
    &= 4\lambda \sum\limits_{t=0}^{T-1} \rad^2 + 8\lambda\sum\limits_{r=1}^{T-1} \rad^2 \nonumber \\
    &\leq 12T \lambda \rad^2 
    \label{eq:final-gap-bound}
\end{align}}
Above, $(i)$ and the following inequality both come from the triangle inequality. Equality $(ii)$ is obtained by rearranging the sums. Inequality $(iii)$ %
comes from applying properties \ref{prop:p1} and \ref{prop:p2} proved above. The last equality comes from the setting of $\lambda$ and $T$.

Now using this result in conjunction with Eqn. \eqref{eq:error-to-final-min} 
we have
\begin{align*}
    \gap(\cR) = \sqrt{2}\lambda\rad^2 + 12T\lambda\rad^2 = O\br{\log(n)\rad^2\lambda}.
\end{align*}
Above we use the fact that $T=\log(\frac{\lip}{\rad\lambda})$ and $\lambda \geq \frac{\lip}{\rad\sqrt{n'}}$, and thus $T=O(\log(n))$.
\end{proof}

Finally, we prove Theorem \ref{thm:nonsmooth-minimax-alg} leveraging the relative accuracy assumption. 

\begin{proof}[Proof of Theorem \ref{thm:nonsmooth-minimax-alg}]
First, observe that under the setting of $\lambda=\frac{48}{\rad}\br{\alphad + \frac{\lip}{\sqrt{n'}}}$ used in the theorem statement that $\log(n)\rad^2\lambda = O\br{\log(n)\rad\alphad + \frac{\log^{3/2}(n)\rad\lip}{\sqrt{n}}}$. Thus what remains is to show that the distance condition required by Theorem \ref{thm:generalized-rr-convergence} holds. That is, we now show that if $\weakalg$ satisfies $\alphad$-relative accuracy, then 
for all $t\in [T]$ it holds that $\ex{}{\norm{[\bar{w}_t,\bar{\theta}_t] - [w^{*}_{S,t},\theta^*_{S,t}]}^2} \leq \frac{\rad^2}{12\cdot2^{2t}}$.

To prove this property, we must leverage the induction argument made by Lemma \ref{lem:phase-distance-bound}. Specifically, to prove the condition holds for some $t\in[T]$, assume 
$\rad_t^2 = \ex{}{\norm{[w_{t}^*,\theta_{t}^*] - [\bar{w}_{t-1},\bar{\theta}_{t-1}]}}^2 \leq \frac{\rad^2}{2^{2(t-1)}}$ (recall the base case for $t=1$ trivially holds). As shown in the proof of Lemma \ref{lem:phase-distance-bound}, this implies that the quantities
$F_t,G_t$ (as defined in \ref{eq:EFG}) are bounded by $\frac{\rad^2}{2304 \cdot 2^{2t}}$. We thus have
\begin{align} \label{eq:Et-bound-relative-acc}
    \ex{}{\norm{[\bar{w}_t,\bar{\theta}_t] - [w^{*}_{S,t},\theta^*_{S,t}]}^2}
    &\overset{(i)}{\leq} \frac{\ex{}{F_S^{(t)}(\bar{w}_t,\theta^*_{S,t}) - F_S^{(t)}(w^*_{S,t},\bar{\theta}_t)}}{2^{t}\lambda} \nonumber \\
    &\overset{(ii)}{\leq} \frac{\alphad\ex{}{\|[w^{*}_{S,t},\theta^*_{S,t}]-[\bar{w}_{t-1},\bar{\theta}_{t-1}]\|}}{2^{t}\lambda} \nonumber \\
    &\leq \frac{\alphad\ex{}{\|[w^{*}_{S,t},\theta^*_{S,t}]-[w^*_t,\theta^*_t]\|+\|[w^*_t,\theta^*_t]-[\bar{w}_{t-1},\bar{\theta}_{t-1}]\|}}{2^{t}\lambda} \nonumber \\
    &\overset{(iii)}{\leq} \frac{(\sqrt{F_t}+\sqrt{G_t}+\rad_t)\alphad}{2^{t}\lambda} 
    \overset{(iv)}{\leq} \frac{2\rad\alphad}{2^{t}2^{t-1}\lambda} 
    \leq \frac{\rad^2}{12 \cdot 2^{2t}},
\end{align}
where $B_t$ is as defined in property \ref{prop:p2}.
Inequality $(i)$ comes from Lemma \ref{lem:sc-sc-distance}.
Inequality $(ii)$ comes from the $\alphad$-relative accuracy assumption on $\weakalg$, and the fact that each $f^{(t)}$ is $2\lip$-Lipschitz. That is, observe 

\begin{align*}    \max\limits_{w,\theta\in\cW\times\Theta}\norm{\nabla f^{(t)}(w,\theta,x)} \leq \lip + 2\sum_{k=0}^{t-1}\rad 2^{k+1}\lambda\leq \lip + 4\rad 2^{T}\lambda \leq 5\lip
\end{align*}

Inequality $(iii)$ comes from a triangle inequality and the definition of $F_t,G_t$ and $B_t$. Inequality $(iv)$ comes from the induction hypothesis (specifically property \ref{prop:p2}) and the bounds on $F_t$ and $G_t$ established above.  
The last inequality in Eqn. \eqref{eq:Et-bound-relative-acc} comes from the setting $\lambda \geq 48\alphad/\rad$.
\end{proof}

\section{Missing Results from Section \ref{sec:dp-rates}} 
\label{app:dp-rates}

\subsection{Stochastic Gradient Descent Ascent (SGDA)} \label{app:sgda}

Let $F:\cW\times\Theta\mapsto\re$ have saddle operator $G:\cW\times\Theta\mapsto\re^d$ and associated strong gap $\gap^F$. %
We define the SGDA algorithm in the following manner. Let $T,\eta \geq 0$. Let $[w_0,\theta_0]$ be any vector in $\cW\times\Theta$. SGDA uses the following update rule. For $t\in[T-1]$ let $\nabla_t$ be a random vector (which may depend on $\nabla_1,...,\nabla_{t-1}$ and $[w_0,\theta_0],...,[w_{t-1},\theta_{t-1}]$) that is a unbiased estimate of $G(w_{t-1},\theta_{t-1})$ conditional on $[w_{t-1},\theta_{t-1}]$ and has bounded variance. We define 
\begin{equation}
    [w_t,\theta_t] = \Pi_{\cW\times \Theta}\br{[w_{t-1},\theta_{t-1}] - \eta \nabla_t}, \quad t\in[T-1]
\end{equation}
where $\Pi_{\cW \times \Theta}$ is the orthogonal projection onto $\cW\times\Theta$.
The output of SGDA is defined to be
\begin{equation}
    [\bar{w},\bar{\theta}] = \frac{1}{T}\sum_{t=0}^{T-1}  [w_t,\theta_t].
\end{equation}
We have the following result for the convergence of SGDA. %
\begin{lemma} \label{lem:noisy_SGDA}
Assume $\forall t\in[T-1]$ that $\ex{}{\nabla_t}=G(w_t,\theta_t)$ and $\ex{}{\norm{\nabla_t - G(w_t,\theta_t)}^2} \leq \tau^2$, then the algorithm, $\cA$, that is SGDA run with parameters $T,\eta > 0$ satisfies for any $w\in\cW$ and $\theta\in\Theta$,
\begin{align*}
    \ex{}{F(\bar{w},\theta) - F(w,\bar{\theta})} \leq \frac{\norm{[w_0,\theta_0]-[w,\theta]}^2}{2\eta T} + \frac{\eta}{2}\br{\lip^2+\tau^2}
\end{align*}
\end{lemma}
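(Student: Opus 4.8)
\textbf{Proof plan for Lemma~\ref{lem:noisy_SGDA}.}
The plan is to run the standard regret/mirror-descent analysis of the gradient descent ascent update, treating the concatenated iterate $u_t := [w_t,\theta_t]$ as a single vector on which we perform projected stochastic gradient descent with respect to the monotone operator $G$. First I would fix an arbitrary comparator $u := [w,\theta]\in\cW\times\Theta$ and expand the squared distance $\|u_{t}-u\|^2$ after one projected step. Using nonexpansiveness of the Euclidean projection $\Pi_{\cW\times\Theta}$, I get
\begin{align*}
\|u_{t}-u\|^2 \le \|u_{t-1}-\eta\nabla_t - u\|^2 = \|u_{t-1}-u\|^2 - 2\eta\ip{\nabla_t}{u_{t-1}-u} + \eta^2\|\nabla_t\|^2.
\end{align*}
Rearranging gives $\ip{\nabla_t}{u_{t-1}-u} \le \frac{1}{2\eta}\big(\|u_{t-1}-u\|^2 - \|u_t-u\|^2\big) + \frac{\eta}{2}\|\nabla_t\|^2$.

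Next I would take conditional expectation given $u_{t-1}$, so that $\ex{}{\nabla_t \mid u_{t-1}} = G(u_{t-1})$ and $\ex{}{\|\nabla_t\|^2\mid u_{t-1}} \le \|G(u_{t-1})\|^2 + \tau^2 \le \lip^2 + \tau^2$ (the latter bound using that the saddle operator $g(\cdot;x)$, and hence $G$, has norm at most $\lip$ by Lipschitzness of $f$, together with the bounded-variance hypothesis). Then I would sum over $t\in[T]$ and telescope the distance terms, dropping the final nonnegative $\|u_T-u\|^2$, to obtain
\begin{align*}
\frac{1}{T}\sum_{t=1}^{T}\ex{}{\ip{G(u_{t-1})}{u_{t-1}-u}} \le \frac{\|u_0-u\|^2}{2\eta T} + \frac{\eta}{2}(\lip^2+\tau^2).
\end{align*}
Finally, I would convert the averaged linearized regret into a gap bound: by monotonicity of $G$ (equivalently convexity-concavity of $F$), $\ip{G(u_{t-1})}{u_{t-1}-u} \ge \ip{G(u)}{u_{t-1}-u}$, and then by convexity in $w$ and concavity in $\theta$ of $F$, $\frac{1}{T}\sum_t \ip{G(u)}{u_{t-1}-u}$ applied with $u=[w,\bar\theta]$ on the primal side and $u=[\bar w,\theta]$ on the dual side (using that $\bar u = \frac{1}{T}\sum_t u_{t-1}$ is the average iterate) lower bounds $F(\bar w,\theta)-F(w,\bar\theta)$ via Jensen. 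Combining and taking $u=[w,\theta]$ the fixed comparator in the distance term yields exactly the claimed inequality.

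The main obstacle — really the only subtle point — is the passage from the linearized quantity $\ip{G(u_{t-1})}{u_{t-1}-u}$ to the actual primal-dual gap $F(\bar w,\theta)-F(w,\bar\theta)$; this requires carefully invoking monotonicity of the saddle operator at the \emph{comparator} rather than at $u_{t-1}$ (so that no spurious curvature terms appear) and then applying convexity/concavity of $F$ separately in the two blocks before averaging. The stochastic part is routine once conditional expectations are taken in the right order (inside the sum, conditioning on the past), and the norm bound $\|G\|\le \lip$ is immediate from the definition $g(w,\theta;x)=[\nabla_w f,-\nabla_\theta f]$ and Lipschitzness of $f$.
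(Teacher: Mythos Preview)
Your descent/telescoping analysis is correct and matches the paper's proof essentially line for line: expand the squared distance using nonexpansiveness of the projection, take conditional expectations to replace $\nabla_t$ by $G(u_{t-1})$, bound $\ex{}{\|\nabla_t\|^2}\le \lip^2+\tau^2$, and telescope.

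The issue is in your final step, where you convert the averaged linearized quantity into the primal--dual gap. Your proposed route uses monotonicity, $\ip{G(u_{t-1})}{u_{t-1}-u}\ge \ip{G(u)}{u_{t-1}-u}$, and then tries to relate $\ip{G(u)}{\bar u-u}$ to $F(\bar w,\theta)-F(w,\bar\theta)$ via convexity/concavity at the comparator $u=[w,\theta]$. But convexity in $w$ and concavity in $\theta$ evaluated at $u$ give
\[
\ip{G(u)}{\bar u-u}\;\le\; F(\bar w,\theta)-F(w,\bar\theta),
\]
not the reverse. So you end up with two inequalities pointing the same way relative to $\ip{G(u)}{\bar u-u}$ (both the linearized regret and the gap are \emph{above} it), and they do not chain to give $F(\bar w,\theta)-F(w,\bar\theta)\le \tfrac{1}{T}\sum_t\ip{G(u_{t-1})}{u_{t-1}-u}$. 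The remark about ``invoking monotonicity at the comparator so that no spurious curvature terms appear'' is a red herring; monotonicity is not needed here.

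The paper's fix (and the standard one) is simpler and avoids the detour entirely: apply convexity--concavity at the \emph{iterate} $u_{t-1}$ to obtain, for each $t$,
\[
F(w_{t-1},\theta)-F(w,\theta_{t-1})\;\le\;\ip{G(u_{t-1})}{u_{t-1}-u},
\]
then average over $t$ and use Jensen (convexity of $F(\cdot,\theta)$ and concavity of $F(w,\cdot)$) to pass to $F(\bar w,\theta)-F(w,\bar\theta)$. With this correction your argument is complete and identical in substance to the paper's.
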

This result is somewhat implicit in \citet[Lemma 3]{yang-dp-sgda}, but for completeness we provide a short proof here.
\begin{proof}
By the convexity-concavity of $F$ we have for any $[w,\theta]\in \cW\times\Theta$ that
\begin{align*}
    F(w_t, \theta) - F(w,\theta_t) &\leq \ip{G(w_t,\theta_t)}{[w_t,\theta_t]-[w,\theta]}
\end{align*}
and thus taking the expectation (conditional on $[w_t,\theta_t]$) and using the fact that each $\nabla_t$ is unbiased we have
\begin{align*}
    \ex{}{F(w_t, \theta) - F(w,\theta_t)} &\leq \ip{\ex{}{\nabla_t}}{[w_t,\theta_t]-[w,\theta]}.
\end{align*}
Using $2\ip{a}{b}=\|a\|^2 + \|b\|^2 - \|a-b\|^2$ and the fact that the projection is nonexpansive, %
we have
\begin{align*}
    &\ex{}{F(w_t, \theta) - F(w,\theta_t)} \\
    &\leq \ex{}{\frac{1}{2\eta}\br{\norm{[w_t,\theta_t]-[w,\theta]}^2-\norm{[w_{t+1},\theta_{t+1}]-[w,\theta]}^2} + \frac{\eta}{2}\norm{\nabla_t}^2} \\
    &= \ex{}{\frac{1}{2\eta}\br{\norm{[w_t,\theta_t]-[w,\theta]}^2-\norm{[w_{t+1},\theta_{t+1}]-[w,\theta]}^2} +\frac{\eta}{2}\br{\norm{G(w_t,\theta_t)}^2 + \norm{G(w_t,\theta_t) - \nabla_t}^2}} \\
    &\leq \ex{}{\frac{1}{2\eta}\br{\norm{[w_t,\theta_t]-[w,\theta]}^2-\norm{[w_{t+1},\theta_{t+1}]-[w,\theta]}^2}} + \frac{\eta}{2}\br{\lip^2 + \tau^2},
\end{align*}
where in the first equality we use that $\mathbb{E}[\langle G(w_t,\theta_t),G(w_t,\theta_t)-\nabla_t \rangle]=0$, due to the unbiasedness of the stochastic oracle.

Summing over all $T$ iterations and taking the average we obtain for the average iterate, $\bar{w},\bar{\theta}$, and any $[w,\theta]\in\cW\times\Theta$ that
\begin{align*}
   \ex{}{F\Big(\frac{1}{T}\sum_{t=0}^{T-1} w_t, \theta \Big) - F\Big(w,\frac{1}{T}\sum_{s=1}^T \theta_t\Big)} 
    &\leq \ex{}{\frac{1}{T}\sum_{t=0}^{T-1} [F(w_t, \theta) - F(w,\theta_t)]} \\
    &\leq \frac{\norm{[w_0,\theta_0]-[w,\theta]}^2}{2\eta T} + \frac{\eta}{2}\br{\lip^2+\tau^2}
\end{align*}
\end{proof}

\subsection{Private algorithm for the empirical gap (Noisy SGDA)} \label{app:noisy-sgda}

We here provide an implementation of SGDA (see Appendix \ref{app:sgda} above) which is differentially private and yields convergence guarantees for the empirical gap.
Let $M_1,...,M_T$ each be a batch of $m=\max\bc{n\sqrt{\frac{\epsilon}{4T}},1}$ samples, each sampled uniformly with replacement from $S$. Let 
$\sigma^2=\frac{c_0 T\lip^2\log(1/\delta)}{n^2\epsilon^2}$ for some universal constant $c_0$ and $\xi_1,\dots,\xi_T$ each be sampled i.i.d. from $\cN(0,\mathbb{I}_d\sigma^2)$.
We define 
\begin{align*}
    \nabla_t = \frac{1}{m}\sum_{x\in M_{t}}g(w_{t-1},\theta_{t-1};x)+\xi_{t}.
\end{align*}
Notice that $\nabla_t$ as defined above satisfies the assumptions for Lemma \ref{lem:noisy_SGDA} with respect to the empirical saddle operator, $G_S$, for some finite $\tau$.

We have the following result for SGDA run with this stochastic oracle.
\begin{theorem}
Let $[w,\theta]\in\cW\times\Theta$ such that $\ex{}{\norm{[w_0,\theta_0]-[w,\theta]}} \leq \hat{\erad}$.
Let $\cA$ be the algorithm SGDA run with $\nabla_1,\dots,\nabla_T$ as described above, 
$T = \min\bc{\frac{n}{8},\frac{n^2\epsilon^2}{32d\log(1/\delta)}}$, and 
$\eta=\frac{\hat{\erad}}{L\sqrt{T}}$
. Algorithm $\cA$ is $(\epsilon,\delta)$-DP, has gradient complexity $O\br{\min\bc{\frac{n^2\epsilon^{1.5}}{\sqrt{d\log(1/\delta)}}, n^{3/2}}}$, and satisfies %
\begin{equation*}
    \ex{}{F_S(\bar{w},\theta) - F_S(w,\bar{\theta})} = O\br{\frac{\hat{\erad}\lip\sqrt{d\log(1/\delta)}}{n\epsilon} + \frac{\hat{\erad}\lip}{\sqrt{n}}}.
\end{equation*}
\end{theorem}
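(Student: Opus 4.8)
The plan is to establish the three assertions --- accuracy, gradient complexity, and privacy --- essentially separately, with the accuracy bound falling out of Lemma~\ref{lem:noisy_SGDA}, the complexity bound being bookkeeping, and the privacy bound being the only part that requires real work. For accuracy, I would first check that the noisy oracle $\nabla_t=\frac1m\sum_{x\in M_t}g(w_{t-1},\theta_{t-1};x)+\xi_t$ meets the hypotheses of Lemma~\ref{lem:noisy_SGDA} for the empirical operator $G_S$: unbiasedness is immediate since $M_t$ is sampled uniformly with replacement and $\ex{}{\xi_t}=0$, so $\ex{}{\nabla_t\mid[w_{t-1},\theta_{t-1}]}=G_S(w_{t-1},\theta_{t-1})$, and for the variance, independence of the sampling noise and the Gaussian noise gives $\ex{}{\norm{\nabla_t-G_S(w_{t-1},\theta_{t-1})}^2}\le \tfrac{\lip^2}{m}+d\sigma^2=:\tau^2$, using that $\norm{g(\cdot;x)}\le\lip$ (so a with-replacement minibatch average of $m$ of them has variance at most $\lip^2/m$) and $\ex{}{\norm{\xi_t}^2}=d\sigma^2$. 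Applying Lemma~\ref{lem:noisy_SGDA} with this $\tau$, target point $[w,\theta]$, initial-distance bound $\hat{\erad}$, and $\eta=\hat{\erad}/(\lip\sqrt T)$ yields $\ex{}{F_S(\bar w,\theta)-F_S(w,\bar\theta)}=O\!\big(\tfrac{\hat{\erad}\lip}{\sqrt T}+\tfrac{\hat{\erad} d\sigma^2}{\lip\sqrt T}\big)$; it then remains to substitute $T=\min\{n/8,\,n^2\epsilon^2/(32 d\log(1/\delta))\}$, which makes $1/\sqrt T=O\!\big(1/\sqrt n+\sqrt{d\log(1/\delta)}/(n\epsilon)\big)$ and handles the first term, and $\sigma^2=c_0 T\lip^2\log(1/\delta)/(n^2\epsilon^2)$ together with $\sqrt T\le n\epsilon/\sqrt{32 d\log(1/\delta)}$, which collapses the second term to $O(\hat{\erad}\lip\sqrt{d\log(1/\delta)}/(n\epsilon))$. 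Adding the two recovers the claimed rate.

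For gradient complexity, each of the $T$ rounds queries $|M_t|=m$ saddle-operator values, for a total of $Tm=\max\{\tfrac12 n\sqrt{T\epsilon},\,T\}$. A short case analysis on the two branches of the $\min$ defining $T$ (using $\epsilon=O(1)$) shows $Tm=O(n^{3/2})$ when $T=n/8$ and $Tm=O(n^2\epsilon^{1.5}/\sqrt{d\log(1/\delta)})$ when $T=n^2\epsilon^2/(32 d\log(1/\delta))$; since these are exactly the two branches, $Tm=O(\min\{n^2\epsilon^{1.5}/\sqrt{d\log(1/\delta)},\,n^{3/2}\})$.

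Privacy is the main obstacle. By post-processing it suffices to show the sequence of $T$ noisy gradient queries is $(\epsilon,\delta)$-DP. Each round is a Gaussian mechanism applied to $\sum_{x\in M_t}g(w_{t-1},\theta_{t-1};x)$, whose $\ell_2$-sensitivity under a one-point change of $S$ is $2\lip$ before the $\tfrac1m$ normalization, and $M_t$ is a with-replacement sample at rate $q=m/n$. I would invoke privacy amplification by subsampling and compose (adaptively) over the $T$ rounds via the R\'enyi-DP / moments-accountant analysis of the subsampled Gaussian mechanism, as in prior work on noisy minibatch SGD for DP-SCO; this gives $(\epsilon,\delta)$-DP once $\sigma=\Omega\!\big(\lip\sqrt{T\log(1/\delta)}/(n\epsilon)\big)$ --- note the key cancellation $q\cdot(2\lip/m)=2\lip/n$, which makes this requirement independent of $m$ --- and the stated $\sigma^2=c_0 T\lip^2\log(1/\delta)/(n^2\epsilon^2)$ meets it for a suitable constant $c_0$. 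Two points need care, and I expect these to absorb most of the effort. First, the sampling is \emph{with replacement}, so the changed point can land in a minibatch more than once; this must be handled, for instance by a group-privacy-style bound or by computing the relevant R\'enyi divergences of the with-replacement subsampled Gaussian directly. Second, one must verify the accounting is applied in its valid regime: the effective noise multiplier is $\sigma m/(2\lip)$, and --- this is precisely why $m$ is taken to be $\max\{n\sqrt{\epsilon/(4T)},1\}$ rather than simply $1$ --- this choice forces $\sigma m/(2\lip)=\Omega(1)$ in \emph{both} branches for $T$, which is the condition under which the clean bound $\epsilon=O\!\big(q\sqrt{T\log(1/\delta)}\cdot\lip/(\sigma m)\big)$ holds. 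Accuracy and running time are comparatively routine; making the subsampled-Gaussian accounting rigorous with the stated constants is the hard part.
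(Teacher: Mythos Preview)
Your proposal is correct and follows essentially the same route as the paper: apply Lemma~\ref{lem:noisy_SGDA} for utility, count $Tm$ for gradient complexity, and invoke the moments-accountant analysis of the subsampled Gaussian mechanism for privacy. The only cosmetic difference is that the paper observes up front that the choice of $T$ forces $d\sigma^2=O(\lip^2)$, so $\tau=O(\lip)$ and the utility bound collapses immediately to $O(\hat{\erad}\lip/\sqrt{T})$, whereas you carry the $d\sigma^2$ term and simplify at the end; and for privacy the paper does not work through the with-replacement subsampling or noise-multiplier regime issues you flag, but simply cites a black-box moments-accountant theorem (Theorem~\ref{thm:moment-accountant}, from \cite{Abadi16,KLL21}) and verifies its two hypotheses $\sigma\ge c\lip\sqrt{T\log(1/\delta)}/(n\epsilon)$ and $T\ge n^2\epsilon/m^2$, so the ``hard part'' you anticipate is entirely outsourced to the citation.
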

The proof of the utility guarantee follows directly from applying Lemma \ref{lem:noisy_SGDA} with %
$\tau=O(\lip + \sqrt{d}\sigma)=O(\lip)$. 
The proof of the privacy guarantee relies on the moments accountant analysis, for which we provide the following restatement. %
\begin{theorem}[\label{thm:moment-accountant}\cite{Abadi16,KLL21}]
Let $\epsilon,\delta \in (0,1]$ and $c$ be a universal constant. Let $D\in\cY^n$ be a dataset over some domain $\cY$, and let $h_1,...,h_T:\cY\mapsto\re^d$ be a series of (possibly adaptive) queries such that for any $y\in\cY$, $t\in[T]$,  $\norm{h_t(y)}_2 \leq \lip$. Let $\sigma \geq \frac{c \lip \sqrt{T\log(1/\delta)}}{n\epsilon}$ and %
$T \geq \frac{n^2\epsilon}{b^2}$.
Then the algorithm which samples batches of size $B_1,..,B_t$ of size $b$ uniformly at random and outputs $\frac{1}{b}\sum_{y\in B_t}h_t(y) + g_t$ for all $t\in[T]$ where $g_t \sim \cN(0,\mathbb{I_d}\sigma^2)$, is $(\epsilon,\delta)$-DP.
\end{theorem}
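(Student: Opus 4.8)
The plan is to recognize this as precisely the moments accountant guarantee of \cite{Abadi16}, as restated for fixed-size minibatches in \cite{KLL21}, and to verify that the stated parameters are exactly its instantiation. First I would identify the per-round mechanism: round $t$ releases $M_t(D) = \frac{1}{b}\sum_{y\in B_t} h_t(y) + g_t$ with $B_t$ a uniformly random size-$b$ subsample of $D$ and $g_t\sim\cN(0,\mathbb{I}_d\sigma^2)$. Since $\|h_t(y)\|_2\le\lip$ for every $y$, the normalized query $D\mapsto\frac1b\sum_{y\in B_t}h_t(y)$ has $\ell_2$-sensitivity $O(\lip/b)$ under a one-point change, and a fixed differing point lands in $B_t$ with probability $q:=b/n$. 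Hence each $M_t$ is a $q$-subsampled Gaussian mechanism with noise multiplier $z:=\sigma b/\lip$, and the full algorithm is the $T$-fold adaptive composition of such mechanisms (re-subsampling independently each round).

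Next I would run the log-moment / R\'enyi-DP calculation. For each integer order $\alpha$ in the admissible range $\alpha = O\!\big(z^2\log(1/(q\alpha))\big)$, one round of the subsampled Gaussian mechanism has $\alpha$-R\'enyi divergence $O(q^2\alpha/z^2)$ (this is the one genuinely technical estimate, \cite[Lemma 3]{Abadi16}), so composition over the $T$ adaptive rounds adds up to $O(Tq^2\alpha/z^2)$. Converting to $(\epsilon,\delta)$-DP by the standard tail bound and optimizing the order gives $\epsilon = O\!\big(q\sqrt{T\log(1/\delta)}/z\big)$, attained at $\alpha^\star\asymp z\sqrt{\log(1/\delta)/(Tq^2)}$, provided $\alpha^\star$ stays admissible. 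Substituting $z=\sigma b/\lip$ and $q=b/n$ turns this into $\epsilon = O\!\big(\lip\sqrt{T\log(1/\delta)}/(n\sigma)\big)$, so taking $\sigma\ge \frac{c\lip\sqrt{T\log(1/\delta)}}{n\epsilon}$ with $c$ a sufficiently large absolute constant forces the right-hand side below $\epsilon$.

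Finally I would account for the hypothesis $T\ge n^2\epsilon/b^2$: this is exactly the condition $\epsilon\le c_1 q^2 T$ of \cite[Theorem 1]{Abadi16}, since $q^2 T = b^2T/n^2$. Its purpose is to keep the optimizing order $\alpha^\star$ inside the admissible range of the per-round R\'enyi bound; equivalently, combined with the lower bound on $\sigma$ it ensures the noise multiplier satisfies $z = \sigma b/\lip = \Omega\!\big(\sqrt{\log(1/\delta)/\epsilon}\big) = \Omega(1)$, which is the regime where the moments accountant delivers the clean $O(q\sqrt{T\log(1/\delta)}/z)$ bound instead of a weaker advanced-composition bound. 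So the actual proof reduces to invoking \cite{Abadi16,KLL21} with $(q,z)=(b/n,\,\sigma b/\lip)$; the main obstacle — which I would cite rather than reproduce — is the subsampled-Gaussian log-moment estimate together with the verification that $\alpha^\star$ is admissible under $T\ge n^2\epsilon/b^2$. Everything else is parameter bookkeeping to match the names in the statement.
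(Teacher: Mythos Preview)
Your proposal is correct and matches the paper's treatment: the paper does not prove this theorem at all but simply states it as a restatement of the moments-accountant guarantee from \cite{Abadi16,KLL21} and then applies it. Your sketch of the underlying subsampled-Gaussian R\'enyi-DP argument is accurate bookkeeping, but strictly speaking the paper's ``proof'' is just the citation.
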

It can be verified for the described noisy SGDA implementation that
$\sigma \geq \frac{c_1 \lip \sqrt{T\log(1/\delta)}}{n\epsilon}$ 
and $T \geq \frac{n^2\epsilon}{m^2}$ 
and thus the algorithm is $(\epsilon,\delta)$-DP.

\section{Missing Result from Section \ref{sec:strong-vs-weak}}
\label{app:strong-vs-weak}

\subsection{Low variance and weak gap implies strong gap} 
\label{app:var-implies-strong-gap-proof}
\begin{proof}[proof of Proposition \ref{prop:var-implies-strong-gap}]
Consider the virtual algorithm, $\cB(\cA,\cD) = \ex{\hat{S}\sim\cD^n,\cA}{\cA(\hat S)} = [\widetilde{w},\widetilde{\theta}]$. Note this algorithm is deterministic and does not depend on any specific dataset drawn from $\cD$. We first show that gap function at the output of $\cB$ is bounded by the weak gap of $\cA$. We have
\begin{align}
    \gapfunc(\cB(\cA,\cD)) &= \max_{\theta\in\Theta}\bc{F_{\cD}(\cB_w(\cA,\cD),\theta)} - \min_{w\in\cW}\bc{{F_{\cD}(w,\cB_\theta(\cA,\cD))}} \nonumber \\
    &= \max_{\theta\in\Theta}\bc{F_{\cD}\Big(\ex{\hat{S}\sim\cD^n,\cA_w}{\cA_w(\hat S)},\theta\Big)} - \min_{w\in\cW}\bc{{F_{\cD}\Big(w,\ex{\hat{S}\sim\cD^n,\cA_\theta}{\cA_\theta(\hat S)}\Big)}} \nonumber \\
    &\leq \max_{\theta\in\Theta}\bc{\ex{\hat{S}\sim\cD^n,\cA_w}{F_{\cD}(\cA_w(\hat S),\theta)}} - \min_{w\in\cW}\bc{\ex{\hat{S}\sim\cD^n,\cA_\theta}{{F_{\cD}(w,\cA_\theta(\hat S))}}} \nonumber \\
    &= \weakgap(\cA), \label{eq:strong-weak-bound}
\end{align}
where the second equality follows from the definition of $\cB$ and the inequality follows from Jensen's inequality.

Now by the assumption that $\cA$ is low variance, we have
\begin{equation} \label{eq:var-to-dist-bound}
    \ex{\cA,S}{\norm{\cA(S) - \cB(\cA,\cD)}^2} = \ex{\cA,S}{\norm{\cA(S) - \ex{\hat{S}\sim\cD^n,\cA}{\cA(\hat S)}}^2} \leq \tau^2.
\end{equation}
\end{proof}

Thus using the Lipschitzness of $\gapfunc$ we obtain%
\begin{align*}
    \gap(\cA) - \weakgap(\cA) &= \ex{S,\cA}{\widehat{\gap}(\cA_w(S),\cA_\theta(S))} - \weakgap(\cA) \\
    &\leq \ex{S,\cA}{\widehat{\gap}(\cA_w(S),\cA_\theta(S))} - \widehat{\gap}(\cB(\cA,\cD)) \\
    &\leq \lip\ex{S,\cA}{\norm{\cA(S)-\cB(\cA,\cD)}} 
    \leq \lip\tau.
\end{align*}
The first inequality comes from Eqn. \eqref{eq:strong-weak-bound}. The second inequality comes from the Lipschitzness of the gap function. The third inequality comes from Eqn. \eqref{eq:var-to-dist-bound}.
Thus we ultimately have
\begin{equation}
    \gap(\cA) \leq \weakgap(\cA) + \lip\tau.
\end{equation}

\subsection{Stability-Risk Tradeoff} \label{app:stability-risk-tradeoff}
\begin{proof}[proof of Theorem \ref{thm:stab-risk-tradeoff}]
Let $f(w;x)=\ip{w}{x}$. Let $0<K<\min\bc{n,d}$ be a parameter to be chosen later and define $U=\bc{\pm1}^K$. For any $\bssigma\in U$ define
$S_{\bssigma}=\bc{\lip\bssigma_1 e_1,...,\lip\bssigma_K e_K,0,...,0}$, where $e_j$ is the $j$'th standard basis vector. We will denote
$F(w;S_{\bssigma}) = \frac{1}{n}\sum_{x\in S_{\bssigma}}f(w;x)$.
Note that 
$$w_{\bssigma}^* = \argmin\limits_{w\in\cW}\bc{F(w;S_{\bssigma})} = \frac{B}{\sqrt{K}}\sum_{j\in[K]}-\bssigma_j e_j.$$ 
Further, for any $\bssigma\in U$, $F(w_{\bssigma}^*;S_{\bssigma})=-\frac{BL\sqrt{K}}{n}$. 

By Yao's minimax principle, it suffices to consider deterministic algorithms and lower bound the expected risk w.r.t. some distribution over the packing. Considering the uniform distribution over the packing and setting $K=\frac{\rad^2}{\Delta^2}$ we have

\begin{align*}
    \ex{\bssigma\sim \mbox{\footnotesize Unif}(U)}{F(\cA(S_{\bssigma});S_{\bssigma}) - F(w^*_{\bssigma};S_{\bssigma})} &= \frac{1}{|U|}\sum_{\bssigma\in U} F(\cA(S_{\bssigma});S_{\bssigma}) + \frac{BL\sqrt{K}}{n} \\
    &\overset{(i)}{=} \frac{1}{|U|}\sum_{\bssigma\in U} \bs{\frac{1}{n}\sum_{j\in[K]}\lip\bssigma_j\cA(S_{\bssigma})_j + \frac{1}{n}\sum_{j\in[K]}\frac{BL}{\sqrt{K}}} \\
    &= \frac{1}{n|U|}\sum_{j\in[K]}\sum_{\bssigma\in U} \lip\bssigma_j\cA(S_{\bssigma})_j + \frac{BL}{\sqrt{K}} \\
    &= \frac{1}{n|U|}\sum_{j\in[K]}\sum_{\bssigma\in U:\bssigma_j=1} \lip\br{\cA(S_{\bssigma})_j - \cA(S_{\bssigma_{-j}})_j} + \frac{2BL}{\sqrt{K}} \\
    &\overset{(ii)}{\geq} \frac{1}{n|U|}\sum_{j\in[K]}\sum_{\bssigma\in U:\bssigma_j=1} -\lip\Delta + \frac{2BL}{\sqrt{K}} \\
    &= \frac{1}{n|U|}\sum_{j\in[K]}\sum_{\bssigma\in U:\bssigma_j=1} \frac{BL}{\sqrt{K}} \\
    &= \frac{BL\sqrt{K}}{2n}
\end{align*}
where $(i)$ comes from the definition of the loss function and the fact that the dataset consists of $K$ standard basis vectors (up to sign) and $n-K$ zero vectors and $(ii)$ comes from the $\Delta=\frac{B}{\sqrt{K}}$ stability property of $\cA$ (i.e. $\cA(S_{\bssigma_{-j}})_j - \cA(S_{\bssigma})_j \leq \Delta \implies \cA(S_{\bssigma})_j - \cA(S_{\bssigma_{-j}})_j \geq -\Delta$). 
Finally, note that by the setting of $K$ that $\frac{\rad \lip\sqrt{K}}{n} = \frac{\rad^2\lip}{\Delta n}$.
\end{proof}

\end{document}